\title[Graphical posterior predictive classifier]
{Graphical posterior predictive classifier:  Bayesian model averaging with particle Gibbs}
\author{Tatjana Pavlenko and Felix Leopoldo Rios}
\date{}
\newtheorem{theorem}{Theorem}
\newtheorem{proposition}{Proposition}
\newaliascnt{definition}{theorem}
\newtheorem{definition}[definition]{Definition}
\newaliascnt{lemma}{theorem}
\crefname{lemma}{lemma}{lemmas}
\Crefname{Lemma}{Lemma}{Lemmas}
\newaliascnt{remark}{theorem}
\newcommand\ci{\perp\!\!\!\perp}
\newcommand{\gen}[1]{J_{#1}}
\newcommandx{\bk}[2][1=]{
\ifthenelse{\equal{#1}{}}
{\kernel{R}_{#2}}
{\kernel{R}_{#2} \langle #1 \rangle}
}
\newcommand{\class}{c}
\newcommand{\cliqueletter}{Q}
\newcommand{\clset}[1]{
\ifthenelse{\equal{#1}{}}
{\mathcal{\cliqueletter}}
{\mathcal{\cliqueletter}(#1)}
}
\newcommandx{\comb}[2][1=]{
\ifthenelse{\equal{#1}{}}
{z_{#2}}
{z_{#1 \mid #2}}
}
\newcommand{\combkernel}[1]{\kernel{\Psi}_{#1}}
\newcommand{\combkernelpath}[1]{\bar{\kernel{\Psi}}_{#1}}
\newcommand{\combmeas}[1]{\psi_{#1}}
\newcommand{\combpart}[2]{z_{#1}^{#2}}
\newcommand{\combsp}[1]{\spc{Z}_{#1}}
\newcommandx{\ct}[1][1=]{\ifthenelse{\equal{#1}{}}{\mathsf{I}}{\mathsf{I}_{#1}}}
\newcommand{\disc}{\mathsf{Pr}}
\newcommandx{\df}[1][1=]{\ifthenelse{\equal{#1}{}}{\delta}{\delta_{#1}}}
\newcommand{\epart}[2]{\xi_{#1}^{#2}}
\newcommand{\eqdef}{\vcentcolon=}
\newcommand{\giwsymb}{\mathcal{NIW}}
\newcommandx{\graph}[1][1=]{\ifthenelse{\equal{#1}{}}{G}{G_{#1}}}
\newcommandx{\graphedgeset}[1][1=]{\ifthenelse{\equal{#1}{}}{E}{E_{#1}}}
\newcommandx{\graphfd}[1][1=]{\ifthenelse{\equal{#1}{}}{\mathcal{G}}{\mathcal{G}_{#1}}}
\newcommandx{\graphnodeset}[1][1=]{\ifthenelse{\equal{#1}{}}{V}{V_{#1}}}
\newcommandx{\graphsp}[1][1=]{\ifthenelse{\equal{#1}{}}{\mathcal{G}}{\mathcal{G}_{#1}}}
\newcommandx{\graphtarg}[1][1=]{\ifthenelse{\equal{#1}{}}{\graphtargletter^\star}{\graphtargletter^\star \langle #1 \rangle}}
\newcommand{\graphtargletter}{\eta}
\newcommand{\hyperparamletter}{\vartheta}
\newcommandx{\hyperparam}[2][1=]{\ifthenelse{\equal{#1}{}}{\hyperparamletter_{#2}}{\hyperparamletter'_{#2}(#1)}}
\newcommandx{\I}[1][1=]{\ifthenelse{\equal{#1}{}}{I}{I^{(#1)}}}
\newcommand{\ind}[2]{I_{#1}^{#2}}
\newcommand{\intvect}[2]{ \{#1,\dots, #2\} }
\newcommandx{\jtnode}[1][1=]{\ifthenelse{\equal{#1}{}}{\cliqueletter}{\cliqueletter_{#1}}}
\newcommand{\jtnodeset}{\mathcal Q}
\newcommand{\kernel}[1]{\mathbf{#1}}
\newcommand{\law}{\mathcal L}
\newcommandx{\meanvec}[1][1=]{\ifthenelse{\equal{#1}{}}{{\mathbf m}}{{\mathbf m}_{#1}}}
\newcommand{\N}{N}
\newcommand{\Nmcmc}{M}
\newcommandx{\nckernel}[2][1=]{
\ifthenelse{\equal{#1}{}}
{\boldsymbol{\Gamma}_{#2}}
{\boldsymbol{\Gamma}_{#2} \langle #1 \rangle}
}
\newcommand{\nclasses}{k}
\newcommand{\normsymb}{\mathcal N}
\newcommand{\normdenssymb}[3]{{\mathcal N}}
\newcommandx{\niwnloc}[1][1=]{\ifthenelse{\equal{#1}{}}{\alpha}{\alpha_{#1}}}
\newcommandx{\niwnscalefactor}[1][1=]{\ifthenelse{\equal{#1}{}}{\beta}{\beta[#1]}}
\newcommandx{\niwiwloc}[1][1=]{\ifthenelse{\equal{#1}{}}{\Phi}{\Phi[#1]}}
\newcommandx{\niwiwdf}[1][1=]{\ifthenelse{\equal{#1}{}}{\delta}{\delta[#1]}}
\newcommandx{\niwnlocpred}[1][1=]{\ifthenelse{\equal{#1}{}}{\alpha}{\alpha_{#1}}}
\newcommandx{\niwnscalefactorpred}[1][1=]{\ifthenelse{\equal{#1}{}}{\beta}{\beta[#1]}}
\newcommandx{\niwiwlocpred}[1][1=]{\ifthenelse{\equal{#1}{}}{\Phi}{\Phi[#1]}}
\newcommandx{\niwiwdfpred}[1][1=]{\ifthenelse{\equal{#1}{}}{\delta}{\delta[#1]}}
\newcommand{\niwnmu}{{\boldsymbol \mu}}
\newcommand{\niwnnu}{\nu}
\newcommand{\niwiwtau}{{\boldsymbol \Phi}}
\newcommand{\niwiwalpha}{\vartheta}
\newcommand{\niwnnust}{\nu + n}
\newcommand{\niwnmust}{ \frac{\niwnnu\niwnmu + n \testdatameanest }{ \niwnnustsymb } }
\newcommand{\niwiwtaust}{\niwiwtau + \sumsq + \frac{\niwnnu n}{\niwnnustsymb}( \niwnmu - \testdatameanest)( \niwnmu - \testdatameanest)'}
\newcommand{\niwiwalphast}{\niwiwalpha +n }
\newcommand{\niwnnustsymb}{\niwnnu^\ast}
\newcommandx{\niwnmustsymb}[1][1=]{\ifthenelse{\equal{#1}{}}{\niwnmu^\ast}{\niwnmu_{#1}^\ast}}
\newcommandx{\niwiwtaustsymb}[1][1=]{\ifthenelse{\equal{#1}{}}{\niwiwtau^\ast}{\niwiwtau_{#1}^\ast}}
\newcommand{\niwiwalphastsymb}{\niwiwalpha^\ast}
\newcommand{\niwconstsymb}{\kappa}
\newcommand{\ntrees}[1]{\mu(#1)}
\newcommand{\1}{\mathbbm{1}}
\newcommand{\p}{p}
\newcommandx{\param}[1][1=]{\ifthenelse{\equal{#1}{}}{\theta}{\theta_{#1}}}
\newcommandx{\paramd}[1][1=]{\ifthenelse{\equal{#1}{}}{\theta}{\theta(#1)}}
\newcommand{\paramsp}{\Theta}
\newcommandx{\partarg}[2][1=]{
    \ifthenelse{\equal{#1}{}}
    {\eta^{\N}_{#2}}
    {\eta_{#2}^{\ast, \N}}
}
\newcommandx{\partinit}[1][1=]{
       \ifthenelse{\equal{#1}{}}
       {\kappa}
       {\kappa^\ast \langle #1 \rangle}
}
\newcommandx{\perm}[2][2=]{
\ifthenelse{\equal{#2}{}}
{s_{#1}}
{s_{#1}(#2)}
}
\newcommand{\PG}[1]{\kernel{P}_{#1}^\N}
\newcommandx{\precmatrestr}[1][1=]{\ifthenelse{\equal{#1}{}}{\mathcal P}{\mathcal P_{#1}}}
\newcommandx{\covmat}[1][1=]{\ifthenelse{\equal{#1}{}}{\mathbf \Sigma}{{\mathbf{\Sigma}_{#1}}}}
\newcommandx{\covmatest}[1][1=]{\ifthenelse{\equal{#1}{}}{\mathbf W}{\mathbf W^{(#1)}}}
\newcommandx{\prop}[2][1=]{
\ifthenelse{\equal{#1}{}}
{\kernel{\propletter}_{#2}}
{\kernel{\propletter}_{#2}^\ast \langle #1 \rangle}
}
\newcommand{\propletter}{K}
\newcommandx{\proppath}[2][1=]{
\ifthenelse{\equal{#1}{}}
{\bar{\kernel{\propletter}}_{#2}}
{\bar{\kernel{\propletter}}_{#2} \langle #1 \rangle}
}
\newcommand{\randind}[1]{\iota}
\newcommand{\retrosupp}[1]{\mathsf{S}_{#1}}
\newcommand{\rmd}{d}
\newcommandx{\SMCdist}[2][1=]{
\ifthenelse{\equal{#1}{}}
{\varrho_{#2}^\N}
{\varrho_{#2}^\N \langle #1 \rangle}
}
\newcommandx{\scalmat}[1][1=]{\ifthenelse{\equal{#1}{}}{v}{v_{#1}}}
\newcommandx{\sep}[1][1=]{\ifthenelse{\equal{#1}{}}{S}{S_{#1}}}
\newcommandx{\sepset}[1][1=]{\ifthenelse{\equal{#1}{}}{\mathcal{S}}{\mathcal{S}(#1)}}
\newcommand{\spc}[1]{\mathsf{#1}}
\newcommand{\sumsq}{\mathbf S}
\newcommandx{\targ}[3][1=, 3=]{
   \ifthenelse{\equal{#3}{}}
   {
       \ifthenelse{\equal{#1}{}}
       {\eta_{#2}}
       {\eta^\ast \langle #2 \rangle}
   }
   {
       \ifthenelse{\equal{#1}{}}
       {\eta_{#2} \langle #3 \rangle}
       {\eta_{#2}^\ast \langle #3 \rangle}
   }
}
\newcommandx{\targMCMC}[3][1=]{
\ifthenelse{\equal{#1}{}}
{\eta_{#2}^{\N, #3}}
{\eta^* \langle #2 \rangle^{\N, #3}}
}
\newcommand{\testdatasize}{n}
\newcommandx{\testdataindset}[1][1=]{\ifthenelse{\equal{#1}{}}{N}{n_{#1}}}
\newcommandx{\testdataindsubset}[1][1=]{\ifthenelse{\equal{#1}{}}{s}{s_{#1}}}
\newcommandx{\testdataset}[1][1=]{
\ifthenelse{\equal{#1}{}}
{\mathbf x}
{\mathbf x^{(#1)}}
}
\newcommandx{\testdatapnt}[1][1=]{\ifthenelse{
\equal{#1}{}}
{{\bf x}}
{{\bf x}_{#1}}
}
\newcommandx{\testdatapntrv}[1][1=]{\ifthenelse{\equal{#1}{}}{X}{X_{#1}}}
\newcommandx{\testdatapntsubsetrv}[1][1=]{\ifthenelse{\equal{#1}{}}{\mathbf X}{X_{#1}}}
\newcommandx{\testdatameanest}[1][1=]{\ifthenelse{\equal{#1}{}}{\mathbf {\bar x}}{\mathbf {\bar x}_{#1}}}
\newcommand{\tpreddf}[1]{\niwnnustsymb + 1 - #1}
\newcommand{\tpredprec}[1]{\frac{\niwnnustsymb(\niwnnustsymb+1-#1)}{\niwnnustsymb+1}{\niwiwtaustsymb}^{-1}}
\newcommand{\tpreddfsymb}{\delta^\ast}
\newcommand{\tpredprecsymb}{{\boldsymbol \Upsilon^\ast}}
\newcommand{\tdensconst}[3]{\frac{\Gamma[(#3 + #2)/2]|#1|^{1/2}}{\Gamma(#3/2)(#3\pi)^{#2/2}}}
\newcommand{\tdens}[5]{\tdensconst{#1}{#2}{#3}  \bigg[1 + (#4 - #5)'#1(#4 - #5) \bigg]^{-(#3 + #2)/2} }
\newcommand{\tdenssymb}{{\mathcal T}}
\newcommandx{\trdataindsubset}[1][1=]{\ifthenelse{\equal{#1}{}}{t}{t_{#1}}}
\newcommandx{\trdataset}[1][1=]{\ifthenelse{\equal{#1}{}}{\mathbf z}{\mathbf z^{(#1)}}}
\newcommandx{\trdatapnt}[1][1=]{\ifthenelse{\equal{#1}{}}{\mathbf z}{\mathbf z_{#1}}}
\newcommandx{\trdatameanest}[1][1=]{\ifthenelse{\equal{#1}{}}{\mathbf {\bar \trdatapnt}}{\mathbf {\bar \trdatapnt}_{#1}}}
\newcommandx{\tree}[1][1=]{\ifthenelse{\equal{#1}{}}{T}{T_{#1}}}
\newcommand{\treepart}[2]{T_{#1}^{#2}}
\newcommandx{\treerv}[1][1=]{\ifthenelse{\equal{#1}{}}{\mathcal T}{\mathcal T_{#1}}}
\newcommandx{\tr}[2][2=]{
\ifthenelse{\equal{#2}{}}
{\tilde{x}_{#1}}
{\tilde{x}_{#1}^{#2}}
}
\newcommandx{\trex}[2][2=]{
\ifthenelse{\equal{#2}{}}
{
x_{#1}
}
{x_{#1}^{#2}}
}
\newcommandx{\foo}[2][1=, 2=]{#1,#2}
\newcommandx{\trfd}[1][1=]{\ifthenelse{\equal{#1}{}}{\mathcal{T}}{\mathcal{T}_{#1}}}
\newcommand{\trgr}{g}
\newcommandx{\trsp}[1][1=]{\ifthenelse{\equal{#1}{}}{\mathcal{T}}{\mathcal{T}_{#1}}}
\newcommandx{\trtarg}[1][1=]{\ifthenelse{\equal{#1}{}}{\eta^\ast}{\eta^\ast \langle #1 \rangle}}
\newcommandx{\uk}[2][1=]{
\ifthenelse{\equal{#1}{}}
{\kernel{Q}_{#2}}
{\kernel{Q}_{#2} \langle #1 \rangle}
}
\newcommandx{\ungraphtarg}[1][1=]{\ifthenelse{\equal{#1}{}}{\gamma^\star}{\gamma^\star \langle #1 \rangle}}
\newcommandx{\untarg}[3][1=, 3=]{
   \ifthenelse{\equal{#3}{}}
   {
       \ifthenelse{\equal{#1}{}}
       {\gamma_{#2}}
       {\gamma^\ast \langle #2 \rangle}
   }
   {
       \ifthenelse{\equal{#1}{}}
       {\gamma_{#2} \langle #3 \rangle}
       {\gamma_{#2}^\ast \langle #3 \rangle}
   }
}
\newcommandx{\untrtarg}[1][1=]{\ifthenelse{\equal{#1}{}}{\gamma^\ast}{\gamma^\ast \langle #1 \rangle}}
\newcommand{\untargpath}[1]{\bar{\gamma}_{#1}}
\newcommandx{\unpartarg}[2][1=]{
    \ifthenelse{\equal{#1}{}}
    {\gamma_{#2}^\N}
    {\gamma_{#2}^\N  \langle #1 \rangle}
}
\newcommand{\wgt}[2]{\omega_{#1}^{#2}}
\newcommandx{\wgtfunc}[2][1=]{
\ifthenelse{\equal{#1}{}}
{w_{#2}}
{w_{#2} \langle #1 \rangle}
}
\newcommand{\iwishsymb}{\mathcal IW}
\newcommandx{\x}[2][2=]{
\ifthenelse{\equal{#2}{}}
{w_{#1}}
{w_{#1}^{#2}}
}
\newcommandx{\X}[2][2=]{
\ifthenelse{\equal{#2}{}}
{W_{#1}}
{W_{#1}^{#2}}
}
\newcommandx{\xt}[2][2=]{
\ifthenelse{\equal{#2}{}}
{\tilde{w}_{#1}}
{\tilde{w}_{#1}^{#2}}
}
\newcommandx{\Xt}[2][2=]{
\ifthenelse{\equal{#2}{}}
{\tilde{W}_{#1}}
{\tilde{W}_{#1}^{#2}}
}
\newcommand{\xsp}[1]{\mathcal{W}_{#1}}
\newcommandx{\y}[1][1=]{\ifthenelse{\equal{#1}{}}{y}{y_{#1}}}
\newcommandx{\Y}[1][1=]{\ifthenelse{\equal{#1}{}}{Y}{Y_{#1}}}
\newcommandx{\ysp}[1][1=]{\ifthenelse{\equal{#1}{}}{\mathsf{Y}}{\mathsf{Y}_{#1}}}
\newcommandx{\yfd}[1][1=]{\ifthenelse{\equal{#1}{}}{\mathcal{Y}}{\mathcal{Y}_{#1}}}
\newcommandx{\z}[2][2=]{
\ifthenelse{\equal{#2}{}}
{z_{#1}}
{z_{#1}^{#2}}
}
\newcommandx{\Z}[2][2=]{
\ifthenelse{\equal{#2}{}}
{\tree_{#1}}
{\tree_{#1}^{#2}}
}
\newcommand{\zpart}[2]{T_{#1}^{#2}}
\newcommand{\indstate}[4]{#1\ci #2 \,|\, #3}
\theoremstyle{definition}
\begin{document}
\maketitle
\begin{abstract}
In this study, we present a multi-class graphical Bayesian predictive classifier that incorporates the uncertainty in the model selection into the standard Bayesian formalism.
For each class,  the dependence structure underlying  the observed features is represented by a set of decomposable Gaussian graphical models.
Emphasis is then placed on the \emph{Bayesian model averaging}
which takes full account of the class-specific model uncertainty by averaging over the posterior graph model probabilities.
An explicit evaluation of the model probabilities is well known to be infeasible.
To address this issue, we consider the particle Gibbs strategy of \cite{nontempspmcmc} for posterior sampling from decomposable graphical models which utilizes the Christmas tree algorithm of \cite{cta} as proposal kernel.
We also derive a strong hyper Markov law which we call the \emph{hyper normal Wishart law} that allow to perform the resultant Bayesian calculations locally.
The proposed predictive graphical classifier reveals superior performance compared to the ordinary Bayesian predictive rule that does not account for the model uncertainty, as well as to a number of out-of-the-box classifiers.

\end{abstract}

\section{Introduction} 
\label{sec:introduction}


A Bayesian supervised predictive classifier is presented for a multi-class classification problem where class distributions are represented  by graphical models.
The goal is to decide a class-membership of a new observation, and assess the uncertainty related to the decision rule conditional on all relevant information available.

Suppose that a set $\boldsymbol{n} = \{ 1,\dots,\testdatasize \}$ of data items is given where each item belongs
to one of $\nclasses$ source classes denoted by $\Pi_{1},\dots,\Pi_{\nclasses}$.  Working in a supervised setting, we further assume that all the eligible classes are a priori specified.   We introduce a discrete random variable,  $\mathcal{C}$,  with $\mathcal{C}=\class$ denoting membership of $\Pi_{\class}$, $\class=1,\dots, \nclasses$ and assume the class labels to be known for all the samples in $\boldsymbol{n}$.

Our Bayesian approach calls for a prior on $\mathcal{C}$, and we will assume that this prior, represented by the probability mass function $p_{\mathcal{C}}(\class)$,
$\sum_{\class \in C} p_{\mathcal{C}}(\class) =1$, is also known.
With $p_{\mathcal{C}}(\class)$ unknown, by treating $\mathcal C$ as a multinomial random variable, we can use the family of Dirichlet priors to obtain the posterior probabilities over an ensemble of classes.

Associated with each item is a vector $\testdatapnt=(x_1,\dots,x_p)'$, a collection of $p$ continuous feature variables.
Each such vector assigned to a particular class $\Pi_{\class}$ is assumed to be generated from class-specific distributions with the density
$f(\testdataset \mid \param_\class)$, $\param_\class \in  \Theta_\class$, where $\Theta_\class$ is the associated parameter space.
Suppose further that we have observed the values of $\mathcal{C}$ and all the $\testdatapnt$'s for a random sample of $n$ items, yielding the \emph{training data}
$$\{(\mathcal{C}^{i},\testdatapnt^{i})\}_{i=1}^{\testdatasize}, \quad n=n_{1}+\dots +n_{\nclasses}. $$
Let $\boldsymbol{\mathcal{C}}^{n}$ denote the vector $(\mathcal{C}^{1},\dots,\mathcal{C}^{\testdatasize})$, and $\testdataset^{(\testdatasize)}$ be the sample vectors for items in the training data.  For any subset $\boldsymbol a\subset \boldsymbol n$, we use
$\testdatapnt^{(a)} \subset \testdataset^{(\testdatasize)}$ to denote the subsets of data for the corresponding samples.  In what follows, we generally use the upper indices to denote sample observations and save lower indices for structural properties of the observed  $\testdatapnt$).


Suppose further that we are presented with a  "future" [or  "fresh" or new "test"] observation, an item given by a vector
$\testdatapnt^{\testdatasize+1}$ of observed features. Given the above set up, we are interested in making  inference about the value of $\mathcal{C}^{\testdatasize+1}$ which is the main target of predictive inference, (i.e. decide class-membership) of the new observation on the basis of all the available data $(\boldsymbol{\mathcal{C}}^{n}, \testdataset^{(n)}, \testdatapnt^{\testdatasize+1})$.
This problem of classification can be formulated as a process of determining posterior probabilities for $\Pi_{1},\dots,\Pi_{\nclasses}$,
i.e. posterior distribution of $\mathcal{C}^{\testdatasize+1}$ for all the classes.
\begin{align}
    p_{\mathcal{C}^{\testdatasize+1}}  (\class \mid \testdatapnt^{\testdatasize+1}, \testdataset[\testdatasize_{c}], \boldsymbol{\mathcal{C}}^{\testdatasize}) =
\frac{f(\testdatapnt^{\testdatasize+1} \mid \class, \testdataset[\testdatasize_{\class}],  \boldsymbol{\mathcal{C}}^{\testdatasize}) \,
p_{\mathcal{C}^{\testdatasize+1}}(\class \mid \boldsymbol{\mathcal{C}}^{\testdatasize})}
{\sum \limits_{\class' \in  \{1,\dots,k\}}  f(\testdatapnt^{\testdatasize+1} \mid \class',\testdataset[\testdatasize_{\class'}],  \boldsymbol{\mathcal{C}}^{\testdatasize}) \,
p_{\mathcal{C}^{\testdatasize+1}}(\class' \mid \boldsymbol{\mathcal{C}}^{\testdatasize})  },
\label{eq:postdist}
\end{align}
where
\begin{align}
    f(\testdatapnt^{\testdatasize+1} \mid \testdataset[\testdatasize_{\class}], \class, \boldsymbol{\mathcal{C}}^{\testdatasize}) =  \int_{\paramsp_{\class}}
f(\testdatapnt^{\testdatasize+1} \mid \param_\class)f(\param_\class \mid \testdataset[\testdatasize_{\class}],
\boldsymbol{\mathcal{C}}^{\testdatasize})d\param_\class
\end{align}
is the predictive probability distribution of $\testdatapnt^{{\testdatasize+1}}$ with the class $\Pi_{\class}$, and
\begin{align}
p_{\mathcal{C}^{\testdatasize+1}}(\class\mid\boldsymbol{\mathcal{C}}^{\testdatasize}) = \int p(\class \mid \phi_\class)f(\phi_{\class} \mid \boldsymbol{\mathcal{C}}^{\testdatasize}) d \phi_{\class},
\end{align}
 is the prior probability of the class $\class$ parameterized by $\phi_\class$.

Further, the optimal classification rule corresponding to zero-one loss function (assigning zero cost to any
correct decision, and unit cost to any wrong decision) can be obtained by specifying the mode of the posterior distribution in
\eqref{eq:postdist}. This is referred to as the \emph{maximum a posteriori (MAP) criterion} that can be further simplified to
\begin{align}
  \class^\ast =  \arg \max_{\class \in \{1,\dots,k\}} p(\class \mid \testdatapnt^{\testdatasize+1}, \testdataset[\testdatasize], \boldsymbol{\mathcal{C}}^{\testdatasize})
    =
    \arg \max_{\class \in  \{1,\dots,k\}}f(\testdatapnt^{\testdatasize+1}  \mid \class, \testdataset[\testdatasize], \boldsymbol{\mathcal{C}}^{\testdatasize})p(\class \mid \boldsymbol{\mathcal{C}}^{\testdatasize}),
\end{align}
yielding the Bayesian predictive classifier which assigns $\testdatapnt^{\testdatasize+1}$ to $\Pi_{\class^\ast}$. The classification rule that assigns a class membership of a new observation $\testdatapnt^{\testdatasize+1}$ by using the MAP estimator of $p_{\mathcal{C}^{\testdatasize+1}} (\class \mid \testdatapnt^{\testdatasize+1}, \testdataset[\testdatasize], \boldsymbol{\mathcal{C}}^{\testdatasize})$,  is known to be optimal in a sense that  it minimizes the averaged risk of misclassification, see e.g. \cite{ripley2007pattern}.

Foundations of the general predictive Bayesian inference, with the focus on the predictive classification are considered in pioneering works by  Geisser, see
\cite{geisser1964posterior,geisser1966predictive,geisserpredictive}.
 Several such predictive classifiers have been later emerged in the literature and their performance properties have been studied.  
 Examples include e.g.  \cite{dawid1992conjugate} where the supervised Bayesian classification has been studied using natural conjugate priors for the  Gaussian distribution parameters and with infinitely many feature variables. Class specific predictive distributions are derived in such growing dimensions asymptotic framework along with the conditions that allows almost sure identification of the class
membership of a sample from an unknown origin.

Further extensions of the posterior predictive strategy within classification framework are considered in the recent studies by \cite{Corander2013}, and \cite{nyman2016marginal} where the key component is the incorporation of the structural properties of the class-conditional  distributions into the Bayesian predictive inference.
These structural properties describe the qualitative manner in which information flows among the feature variables and are well-represented by a \emph{graphical model}, $G=(V,E)$ where the nodes $V$, representing random variables
in the model, are connected by undirected edges $E \subseteq V\times V$, encoding the conditional independence properties of the multivariate distribution \cite{lauritzen1996}.

In \cite{Corander2013}, the class-specific distributions are represented by a family of Gaussian graphical models (GGMs) with the block-diagonal structure, which is then conveniently merged with the conjugate Bayesian analysis. Due to the factorization of the posterior predictive densities over the graph structure, such type of assumption allows for local, within-blocks Bayesian updates, which in turn delivers an efficient solution to the high-dimensional classification problems, see details in \cite{Corander2013}.
\cite{nyman2016marginal}, consider classification of categorical data where the class of stratified decomposable graphical models is used for encoding feature dependencies.
This approach is shown to allow for a more detailed representation of the dependence structure, thus enhancing the classification process.


The above mentioned classifies, while usually called predictive, are in fact derived under a single {\it known} graph $\graph$ defining the class-conditional distribution and hence do not generally obey the principles of the predictive Bayesian inference in the sense of \cite{geisserpredictive}.
 Conditioning on $\graph$, while essentially simplifying the posterior analysis, ignores the uncertainty the model themselves possess on the probabilistic classification.
 We argue therefore that the standard predictive classification formalism stated in equations (1)-(3) is flawed and suggest a generic framework for the Bayesian treatment of model uncertainty.
 Specifically, for each $\Pi_\class$, we consider the underlying class-specific graph, $\graph[\class]$ as \emph{unknown}, and characterize it through its posterior distribution which is in turn incorporated into the building of the predictive classifier.


 Bayesian inference concerning the graph $\graph$ is usually referred to a \emph{structure learning} and requires specification of a flexible but tractable family $\graphsp$ of possible graphs, capable of representing a variety of the conditional independence structure.
 Although other types of graphical models exist, in the interests of tractability and scalability, we for each $\Pi_{\class}$, restrict the family $\graphsp$ to the set of \emph{undirected decomposable Gaussian graphical models}, and allow the graph structure freely vary across different classes within $\graphsp$.
 The family $\graphsp$ exhibits the special property that for each $\graph \in\graphsp$, a Markov distribution over $\graph$  can be derived recursively, by using Markov combinations of smaller components,  see details in Section 2.
Prior distributions over the graphical structure itself, which are termed for \emph{graph laws} are discussed in detail in \cite{StructuralMarkov}.

 The first key component of our suggested approach is the \emph{Bayesian model averaging}, where the posterior predictive distribution of $\testdatapnt^{\testdatasize+1}$ in (2) under each of the candidate models in $\graphsp$ is weighted by the posterior model probabilities.
Theoretically, \cite{10.2307/2291017} show that Bayesian averaging over \emph{all} the models in this fashion provides better predictive accuracy  than using any single model; see Dawid (1986) and \cite{kass1995bayes} for a review of general Bayesian model averaging approach and a more recent review on the evaluation of Bayesian approaches for model uncertainty by \cite{clyde2004model}.
But even though the graph decomposability assumption severely reduces the model space, the size of the family of decomposable is still immense, rendering the explicit Bayesian averaging over all potential structures in $\graphsp$ infeasible.
To tackle this issue, we exploit the \emph{particle Markov chain Monte Carlo} (PMCMC) sampling strategy to obtain an approximation of the graph posterior distribution.
This brings us to the second key component of our approach, namely \emph{particle Gibbs with systematic refreshments} (PG) sampling  scheme which is designed for efficient posterior sampling from decomposable graphical models; see details in \cite{nontempspmcmc} and a companion paper \cite{cta}.
Combined, the two components constitute our new, inherently predictive classification procedure, called for the \emph{graphical posterior predictive} classifier.
For a general review of particle MCMC methods, see e.g. \cite{andrieu2010particle} and \cite{chopin:singh:2015}.

In a series of papers,  \cite{corander2013inductive,corander2013have} and \cite{Cui2016} have addressed another important issue of the predictive inference in both supervised and semi-supervised settings, namely using \emph{marginal} and \emph{simultaneous} predictive classifiers.
Simultaneous classifiers require modeling of the \emph{joint} posterior predictive distribution of the unknown class variables for the test sample and are therefore computationally much more demanding than the standard, marginal ones which treat each fresh observation separately and independently on the other observations in the test data.

Performance properties of simultaneous and marginal supervised classifiers are extensively studied in \cite{Cui2016},    indicating that both approaches demonstrate asymptotically equal performance accuracy when the amount of training data goes to infinity \cite[Theorem 1, Theorem 2]{Cui2016}.
In the light of results, we in the current study focus on the marginal predictive strategy, noting however that our proposed graphical predictive classifier is fully suitable for the simultaneous Bayesian inference.

The structure  of the remaining part of the paper is as follows. 
In Section \ref{sub:graphical_models},  we review the properties of the hyper Markov laws over decomposable GGMs (\cite{dawid1993}), introduce the hyper normal inverse Wishart conjugate family of prior distributions and show that it is strong hyper Markov.
Using these results, we then in Section \ref{sec:graphical_bayesian_predictive_classifier} derive our graphical predictive classifier that is based on the predictive distribution (predictive score function) for $\testdatapnt^{\testdatasize+1}$ and incorporates Bayesian model averaging over $\graphsp$.
In  Section \ref{sec:bayesian_graph_structure_learning_with_the_cta_particle_gibbs_sampler},  we derive the MCMC graph structure learning algorithm  which is needed for approximating the graph posterior at the model averaging step.
This algorithm,  recently suggested in  \cite{nontempspmcmc}  is based on the particle Gibbs graph structure learning that exploits the Christmas tree kernel presented in the companion paper \cite{cta}, and allows for an efficient posterior sampling form the decomposable GGMs.
Next in Section \ref{sec:simulation_study}, we provide the description and results of our numerical study.
We conclude in Section \ref{sec:summary} with some computational considerations and future research directions.
Technical derivations are found in Appendix \ref{appendix:a}.


\section{Review of graph theory and Markov properties} 
\label{sub:graphical_models}

For consistency and clarity of the presentation, 
we restate some definitions and results regarding graph theory and Markov properties which will be used later in this text.
For further details the reader is referred to \cite{lauritzen1996} and \cite{dawid1993}.

Let $\graph=(\graphnodeset,\graphedgeset)$ be an undirected graph with node set $\graphnodeset$ and edge set $\graphedgeset\subseteq\graphnodeset\times \graphnodeset$.
Two nodes $a,b\in \graphnodeset$ are said to be neighbors in $\graph$ if $(a,b)\in \graphedgeset$.
A subset $\jtnode \subseteq \graphnodeset$ is complete if every pair of nodes $(a,b)\in \graphedgeset$ are neighbors in the subgraph $\graph[\jtnode]$, induced by the nodes in $\jtnode$.
If $\jtnode$ is maximal, in the sense that it is not contained in any other complete set of nodes it is called a clique.
\begin{definition}
A \emph{decomposition} of $\graph$ is a pair $(A, B)$ such that $A\cup B = \graphnodeset$, $A \cap B$ is complete and separates $A$ from $B$, that is, every path between nodes in $A$ and $B$ must intersect $A\cap B$.
\end{definition}
Henceforth in this paper  we will assume that $\graph$ is \emph{decomposable} defined next.


\begin{definition} \label{def:decomposable:graph}
    A graph $\graph$ is \emph{decomposable} if it is complete or if there
    exists a decomposition $(A, B)$ of $\graph$ such that
    $\graph[A]$ and $\graph[B]$ are decomposable.
\end{definition}

Decomposable graphs are sometimes alternatively termed \emph{chordal}
or \emph{triangulated}, as Definition~\ref{def:decomposable:graph} is
equivalent to the requirement that every cycle of length $4$ or more is
chorded. 
Decomposable graphs are also characterized by the fact that their cliques can be arranged in a so called \emph{junction tree}.

\begin{theorem}
A graph $\graph$ is decomposable if and only if its cliques can be arranged in a \emph{junction tree}, meaning that for any pair of cliques $\jtnode[i],\jtnode[j]$ it holds that
$\jtnode[i]\cap \jtnode[j] \subseteq \jtnode,$
for every clique $\jtnode$ on the unique path between $\jtnode[i]$ and $\jtnode[j]$.
\end{theorem}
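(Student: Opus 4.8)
The plan is to prove both implications simultaneously by induction on the number of cliques of $\graph$ (equivalently, on $\cm{\graphnodeset}$). The base case is the complete graph, which has a single clique: the one-vertex tree is vacuously a junction tree, and conversely a graph whose only clique is its entire vertex set is complete, hence decomposable by definition.

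\textbf{Junction tree $\Rightarrow$ decomposable.} Suppose the cliques of $\graph$ are arranged in a junction tree $T$ with at least two nodes. I would choose a leaf $\jtnode$ of $T$ with its unique neighbour $\jtnode'$, set $\sep = \jtnode \cap \jtnode'$, and let $R = \jtnode \setminus \sep$. Putting $A = \jtnode$ and $B = \graphnodeset \setminus R$, the pair $(A,B)$ is a decomposition: $A \cup B = \graphnodeset$; $A \cap B = \sep$ is complete since $\sep \subseteq \jtnode'$; and $\sep$ separates $R = A \setminus B$ from $B \setminus A$, because the junction tree property forces every $v \in R$ to lie in $\jtnode$ and in no other clique (if $v \in \jtnode''$ with $\jtnode'' \neq \jtnode$, then $v \in \jtnode \cap \jtnode'' \subseteq \jtnode'$, so $v \in \sep$, a contradiction), whence every neighbour of such a $v$ lies in $\jtnode = \sep \cup R$. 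Now $\graph[A]$ is complete, and the cliques of $\graph[B]$ are exactly the cliques of $\graph$ other than $\jtnode$ — removing the private vertices $R$ neither creates nor destroys maximal complete sets on the $B$ side — arranged in the junction tree obtained from $T$ by deleting the leaf $\jtnode$. By the induction hypothesis $\graph[B]$ is decomposable, hence so is $\graph$.

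\textbf{Decomposable $\Rightarrow$ junction tree.} Suppose $\graph$ is decomposable but not complete, so there is a decomposition $(A,B)$ with $\sep = A \cap B$ complete and separating. Then $\graph[A]$ and $\graph[B]$ are decomposable with strictly fewer cliques, so the induction hypothesis supplies junction trees $T_A$ and $T_B$. Since $\sep$ is complete it lies in some clique $\jtnode[A]$ of $\graph[A]$ and in some clique $\jtnode[B]$ of $\graph[B]$; I would build $T$ by joining $T_A$ and $T_B$ with a single edge $\{\jtnode[A], \jtnode[B]\}$. Two things must then be checked. First, the nodes of $T$ are the cliques of $\graph$: any clique of $\graph$, being complete, cannot meet both $A \setminus B$ and $B \setminus A$ (those vertices are non-adjacent across the separator), so it is contained in $A$ or in $B$, and maximality in $\graph$ forces maximality in the corresponding induced subgraph. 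Second, $T$ has the junction tree property: for two cliques on the same side this is the induction hypothesis, while for $\jtnode[i] \subseteq A$ and $\jtnode[j] \subseteq B$ one has $\jtnode[i] \cap \jtnode[j] \subseteq A \cap B = \sep \subseteq \jtnode[A] \cap \jtnode[B]$, so $\jtnode[i] \cap \jtnode[j] \subseteq \jtnode[i] \cap \jtnode[A]$ and $\jtnode[i] \cap \jtnode[j] \subseteq \jtnode[j] \cap \jtnode[B]$, and the junction tree property of $T_A$ and of $T_B$ then carries this set along the two halves of the unique $\jtnode[i]$--$\jtnode[j]$ path, which runs through the new edge.

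\textbf{The main obstacle.} The delicate point is the first of these checks. A clique of $\graph[A]$ need not be a clique of $\graph$, and a short argument shows this happens precisely when the clique equals $\sep$ and $\sep$ is strictly contained in a clique of $\graph[B]$ (and symmetrically with $A$ and $B$ swapped). When this occurs one has $\jtnode[A] = \sep \subseteq \jtnode[B]$, and I would repair $T$ by contracting the edge $\{\jtnode[A], \jtnode[B]\}$, that is, deleting the spurious node $\jtnode[A]$ and reattaching its remaining neighbours directly to $\jtnode[B]$; since every subset of $\jtnode[A]$ is a subset of $\jtnode[B]$, this preserves the junction tree property. A twin corner case is $\jtnode[A] = \jtnode[B] = \sep$, where $\sep$ is genuinely a clique of $\graph$ but appears as two nodes of the glued tree, which must then be identified into one. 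Setting up this case analysis carefully — and, in particular, not tacitly assuming that the clique sets of $\graph[A]$ and $\graph[B]$ are disjoint — is where the real work of the proof lies; the remaining steps are routine set manipulations.
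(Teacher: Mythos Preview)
The paper does not prove this theorem. It appears in Section~2 as part of the review of graph-theoretic background and is stated without proof; the reader is referred to Lauritzen (1996) for details. There is therefore no in-paper argument to compare your proposal against.

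Your outline is the classical textbook argument and is essentially correct, but there is one genuine slip. You set up the induction on the number of cliques, adding parenthetically ``equivalently, on $\cm{\graphnodeset}$''. These are not equivalent, and in the decomposable $\Rightarrow$ junction-tree direction the clique count need not strictly decrease when passing from $\graph$ to $\graph[A]$. Take $\graph$ to be the triangle on $\{1,2,3\}$ with a pendant edge $\{1,4\}$, and decompose with $A=\{1,2,4\}$, $B=\{1,2,3\}$, $\sep=\{1,2\}$. Then $\graph$ has two cliques, but so does $\graph[A]$ (namely $\{1,2\}$ and $\{1,4\}$): the separator $\sep$ has become a new maximal complete set on the $A$ side --- precisely the ``spurious clique'' phenomenon you flag as the main obstacle --- and it exactly compensates for the clique lost to the $B$ side. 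So the sentence ``$\graph[A]$ and $\graph[B]$ are decomposable with strictly fewer cliques'' is not justified as written. The fix is simply to run the induction on $\cm{\graphnodeset}$, which does strictly decrease on both sides of any proper decomposition; with that change, and with the contraction/identification repair you already describe for the spurious-clique case, everything else in your sketch goes through.
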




The  intersections of two neighboring cliques in a junction tree is called a \emph{separator}.
We denote the set of separators by $\sepset$ and the set of cliques by $\jtnodeset$.
Since the graph, $\graph$ underlying a junction tree is unique, $\jtnodeset$ and $\sepset$ can also be regarded as components of $\graph$ itself.
The space of all decomposable graphs with a given number of nodes, which should be understood from the context, is denoted by $\graphsp$. 
A junction tree for a specific graph $\graph$ is not unique in general, following the notation in \cite{doi:10.1198/jcgs.2009.07129} we denote the number of equivalent junction trees by $\ntrees{\graph}$.
On the other hand, each junction tree $\tree$ has exactly one underlying decomposable graph which we denote by $\trgr(\tree)$.
Let $\testdatapntrv = (\testdatapntrv[i])_{i\in \graphnodeset}$ be a random variable and let $f(\testdatapnt[A])$ denote a generic marginal density function of $\testdatapntrv[A]=(\testdatapntrv[i])_{i\in A}$, where $A \subseteq \graphnodeset$.
For $A,B,C\subseteq \graphnodeset$, we say that $\testdatapntrv[A]$ and $\testdatapntrv[B]$ are conditional independent given $\testdatapntrv[C]$ if the conditional 
density $f(\testdatapnt[A]|\testdatapnt[B], \testdatapnt[C])$ can be written as a function of $\testdatapnt[A]$  and $\testdatapnt[C]$ alone.
This relation is usually denoted by $\indstate{\testdatapntrv[A]}{\testdatapntrv[B]}{\testdatapntrv[C]}{}$.
Important properties of the conditional independence studied in \cite{10.2307/2984718} are usually described as the \emph{semi-graphoid axioms} stated next.
Let $X,Y$ and $Z$ be disjoint random variables, then
\begin{itemize}
    \item[(S1)] If $\indstate{X}{Y}{Z}{}$ then $\indstate{Y}{X}{Z}{}$ (symmetry)
    \item[(S2)] If $\indstate{X}{Y}{Z}{}$ and $U$ is a function of $X$, then $\indstate{Y}{U}{Z}{}$ (decomposition)
    \item[(S3)] If $\indstate{X}{Y}{Z}{}$ and $U$ is a function of $X$, then $\indstate{X}{Y}{(Z, U)}{}$ (weak union)
    \item[(S4)] If $\indstate{X}{Y}{Z}{}$ and $\indstate{X}{W}{(Y,Z)}{}$ then $\indstate{X}{(W,Y)}{Z}{}$ (contraction).
\end{itemize}
It is standard to relate one semi-graphoid to another in terms of the Markov properties, in this case the former would be induced by the separation statements in an undirected graph and the latter induced by the conditional independence statements in a probability distribution, see e.g. \cite{graphoids}.
\begin{definition}
A distribution for $\testdatapntrv$ is said to be \emph{Markov with respect to $\graph$} if for any decomposition $(A,B)$ it holds that
\begin{align}
    \indstate{\testdatapntrv[A]}{\testdatapntrv[B]}{\testdatapntrv[A\cap B]}{}.
    \label{eq:markprop}
\end{align}
\end{definition}
For distributions which are Markov with respect to a graph $\graph$, the density follows the so-called \emph{clique-separator-factorization} (CSF) property 
\begin{align}
    f(\testdatapnt) = \frac{\prod_{\jtnode \in \jtnodeset}f(\testdatapnt[\jtnode])}{\prod_{\sep \in \sepset}f(\testdatapnt[\sep])}.
    \label{eq:markovfact}
\end{align}

In the multivariate Gaussian distribution, the underlying graph is determined by the precision matrix.
We let the normal distributions with mean $\meanvec\in \mathbb{R}^{|\graphnodeset|}$ and positive definite covariance $\covmat \in \mathbb{R}^{|\graphnodeset| \times |\graphnodeset|}$ be denoted by $\normsymb(\meanvec, \covmat)$.
For any graph $\graph$, we let $\precmatrestr[\graph]$ denote the space of all symmetric positive definite matrices $\covmat \in \mathbb{R}^{|\graphnodeset| \times |\graphnodeset|}$ such that $(\covmat^{-1})_{ij}=0$ whenever $(i,j)$ is not en edge in $\graph$.
Further, for any fixed $\meanvec$, the set of Gaussian distributions with $\covmat^{-1} \in \precmatrestr[\graph]$  corresponds those which are Markov with respect to $\graph$.
These models are typically called Gaussian graphical models (GGMs) or covariance selection models, see \cite{1972,10.2307/2241271}.

\cite{dawid1993} introduced an extension of the concept of Markov properties to distributions (referred to as \emph{laws}) defined over distributions, in parametric models determined by some random parameter $\param$.
For $A,B\subseteq \graphnodeset$, the notation $\param_A$ refers to the marginal distribution of $\testdatapntrv[A]$ and $\param_{A|B}$ determines the distribution of $(\testdatapntrv[A]|\testdatapntrv[B]=\testdatapnt[B])$. 
\begin{definition}
A law $\law(\param)$ is said to be \emph{(weak) hyper Markov} with respect to $\graph$ if for $A,B\subseteq \graphnodeset$
\begin{align*}
\indstate{\param_A}{\param_B}{\param_{A\cap B}}{}.
\end{align*}
It is said to be \emph{strong hyper Markov}  if
\begin{align*}
    \param_A \ci \param_{B|A}.
\end{align*}

\end{definition}
The strong hyper Markov property is of particular interest in Bayesian inference since it enables for posterior calculations to be performed locally using data collected separately for each clique, see \cite[Corollary 5.5]{dawid1993}.

The following property is crucial in the construction of hyper Markov laws.
For given two laws $\mathcal L_A$ and $\mathcal L_B$, defined for $\param_A$ and $\param_B$ respectively  where $A,B\subseteq \graphnodeset$, it is said that  $\mathcal L_A$ and $\mathcal L_B$ are \emph{hyper consistent} if they both have the same marginal law on $\param_{A\cap B}$.
Given a set of hyper consistent laws $\{\law_{\jtnode}:\jtnode \in \jtnodeset\}$, the unique hyper Markov law is constructed by hyper combinations of the clique-specific laws.


\section{Graphical predictive classification} 
\label{sec:graphical_bayesian_predictive_classifier}
In order to derive a predictive distribution for $\testdatapnt[\testdatasize+1]$, we first restrict the attention to the single class case, implying that $\testdatasize_\class=\testdatasize$ and focus attention on one clique $\jtnode \in \jtnodeset$.
In such setting, the data are assumed to be sampled from $\normsymb(\meanvec_{\jtnode}, \covmat_{\jtnode})$ and can be summarized by the joint density
\begin{align}
    f(\testdataset[{\testdatasize}]_{\jtnode} | \meanvec[\jtnode], \covmat_{\jtnode}) =  (
    2\pi)^{-\testdatasize q /2}|\covmat_{\jtnode}|^{-\testdatasize/2} \exp \bigg \{-\frac{1}{2} \sum_{i=1}^\testdatasize(\testdatapnt_{\jtnode}^i -\testdatameanest_{\jtnode})'\covmat_{\jtnode}^{-1}(\testdatapnt_{\jtnode}^i -\testdatameanest_{\jtnode}) \bigg \},
\end{align}
where $\testdatameanest_{\jtnode} = \frac{1}{\testdatasize}\sum_{i=1}^\testdatasize \testdatapnt_{\jtnode}^{i}$ and $|\jtnode|=q$.
To simplify the notation, we temporarily drop the subscript $\jtnode$.
It is well known from Bayesian theory that the conjugate prior for $(\meanvec,\covmat)$ is \emph{normal inverse Wishart}
which we denote by $\giwsymb(\niwnmu, \niwnnu ,\niwiwtau, \niwiwalpha)$.
In this distribution, the marginal of $\covmat$ is \emph{inverse Wishart}, $\iwishsymb(\niwiwtau, \niwiwalpha)$ with $\niwiwalpha > q-1$ degrees of freedom and positive definite location matrix $\niwiwtau$. 
The conditional distribution of $\meanvec$ given  $\covmat$ is  $\normsymb(\niwnmu, \frac{1}{\niwnnu}\covmat)$.
The joint density is given by
\begin{align}
    \giwsymb(\meanvec, \covmat | \niwnmu, \niwnnu ,\niwiwtau, \niwiwalpha)=
    \frac{1}{\niwconstsymb (\niwiwalpha, \niwiwtau, \niwnnu)}
    | \covmat |^{-1/2}
    \exp\{-\frac{\niwnnu}{2} (\meanvec -\niwnmu)'\covmat^{-1}(\meanvec -\niwnmu)\big\} \\
    \times
    | \covmat |^{-(\niwiwalpha +q +1)/2}
    \exp\{-\frac{1}{2} tr(\niwiwtau\covmat^{-1})\} \nonumber,
    \label{eq:niw}
\end{align}
where
\begin{align*}
\niwconstsymb(\niwiwalpha, \niwiwtau, \niwnnu) = \frac{2^{\niwiwalpha q/2}\Gamma_{q}(\niwiwalpha/2)}{|\niwiwtau|^{\niwiwalpha/2}}  \frac{(2\pi)^{q/2}}{\niwnnu^{\frac{1}{2}}}.
\end{align*}
$\Gamma_q$ is the multivariate gamma function defined $\text{ for } a>(q-1)/2$ as
\begin{align*}
    \Gamma_q(a) = \pi^{q(q-1)/4}\prod_{i=0}^{q-1}\Gamma(a-\frac{i}{2}).
\end{align*}

The posterior of $(\meanvec,\covmat)$ obtained after updating this prior with $\testdataset[\testdatasize]$
has the parameters
\begin{align*}
\niwnnustsymb=\niwnnust,\, \niwnmustsymb=\niwnmust,\, \niwiwalphastsymb=\niwiwalphast \text{ and } \niwiwtaustsymb=\niwiwtaust,
\end{align*}
where $\sumsq=\sum_{i=1}^\testdatasize(\testdatapnt^i - \testdatameanest)(\testdatapnt^i - \testdatameanest)'$.
The posterior predictive distribution of $\testdatapnt^{\testdatasize+1}$, where the parameters are integrated out according to (2) is now written as
\begin{align}
    \int_{\precmatrestr[{\graph[\jtnode]}]} \int_{\mathbb R^{q}}\normsymb(\testdatapnt^{\testdatasize + 1}|\meanvec, \covmat)\giwsymb(\meanvec, \covmat | \niwnmustsymb, \niwnnustsymb ,\niwiwtaustsymb, \niwiwalphastsymb)d\meanvec d\covmat.
\end{align}
This is the non-central multivariate \emph{t}-distribution  with density
\begin{align*}
    \tdenssymb(\testdatapnt^{\testdatasize+1}|\tpreddfsymb, \niwnmustsymb, \tpredprecsymb)
    & = \tdens{\tpredprecsymb}{q}{\tpreddfsymb}{\niwnmustsymb}{\testdatapnt^{\testdatasize+1}},
\end{align*}
where $\tpreddfsymb=\tpreddf{q}$ and $\tpredprecsymb=\tpredprec{q}$, see e.g. \cite[p.~441]{BayesianTheory}.

\subsection{Hyper Markov laws over decomposable graphs} 
\label{sub:hyper_markov_laws_over_decomposable_graphs}
In this we discuss how to obtain a predictive distribution of full dimension, $\p$, that is not restricted to one specific clique. 
For this purpose, we first specify a joint prior law for $(\meanvec, \covmat)$, and show that it is strong hyper Markov with respect to $\graph$.
Analogously to the above construction of the normal inverse Wishart law, we let the marginal law of $\covmat$ be the so called \emph{hyper inverse Wishart law}, with $\niwiwalpha>\p-1$  degrees of freedom and precision matrix $\niwiwtau$ satisfying
\begin{align}
     \sum_{\jtnode \in \jtnodeset} [\niwiwtau_{\jtnode}]^0 - \sum_{\sep \in \sepset} [\niwiwtau_{\sep}]^0 =\niwiwtau,
     \label{eq:matdec}
\end{align}
where the notation $[A]^0$ is obtained from the matrix $A$ by padding it with zeros to get the correct dimensions.
This law was derived in \cite{dawid1993} as the law obtained from hyper combinations of $\iwishsymb(\niwiwtau_{\jtnode}, \niwiwalpha)$ laws defined individually for each clique. 
Further, we specify the distribution of $\meanvec$ conditional on $\covmat$ as $\normsymb(\niwnmu, \frac{1}{\niwnnu}\covmat)$, where $\niwnmu\in \mathbb R^\p,\niwnnu>0$.
We call the resulting law for the \emph{hyper normal inverse Wishart law} and denote it by $\giwsymb_{\graph}(\niwnmu,\niwnnu, \niwiwtau, \niwiwalpha)$.
The density can be verified (see \ref{sec:gnw}) to follow the CSF property as
\begin{align}
    \giwsymb_{\graph}(\meanvec,\covmat \mid \niwnmu,\niwnnu, \niwiwtau, \niwiwalpha) =
    \frac{\prod_{\jtnode \in \jtnodeset} \giwsymb(\meanvec_{\jtnode},\covmat_{\jtnode} \mid \niwnmu_{\jtnode},\niwnnu, \niwiwtau_{\jtnode}, \niwiwalpha) }
          {\prod_{\sep \in \sepset} \giwsymb(\meanvec_{\sep},\covmat_{\sep} \mid \niwnmu_{\sep},\niwnnu, \niwiwtau_{\sep}, \niwiwalpha)}.
          \label{eq:gnw}
\end{align}
In order to show that the strong hyper Markov we restate Proposition 5.1 from \cite{dawid1993}.
\begin{proposition}\cite[Proposition 5.1]{dawid1993}
If the prior law $\mathcal L(\param)$ is hyper Markov over $\graph$ then the joint distribution of $(\testdatapntrv, \param)$ satisfies, for any decomposition $(A,B)$ of $\graph$,
\begin{align}
\indstate{(\testdatapntrv[A], \param[A])}{(\testdatapntrv[B], \param[B])}{(\testdatapntrv[A\cap B], \param[A\cap B])}{}.
    \label{eq:weakjoint}
\end{align}
If $\mathcal L(\param)$ is strong hyper Markov, it also satisfies
\begin{align}
    \indstate{(\testdatapntrv[A], \param[A])}{(\testdatapntrv[B], \param[B|A])}{\testdatapntrv[A\cap B]}{}.
\end{align}
\label{prop:5.1}
\end{proposition}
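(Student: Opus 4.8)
Following \cite{dawid1993}, the plan is to obtain both conditional independences by factorising the joint density of $(\testdatapntrv,\param)$ and reading the conclusions off the standard factorisation criterion (a joint density of the form $a(u,w)\,b(v,w)$ forces $\indstate{U}{V}{W}{}$), supplemented by the semi-graphoid axioms (S1)--(S4). Fix a decomposition $(A,B)$ and write $\sep = A\cap B$ for its separator. Two ingredients enter. Since each sampling density $f(\cdot\mid\param)$ in the model is Markov with respect to $\graph$, the CSF property \eqref{eq:markovfact} specialises to $f(\testdatapnt\mid\param)=f(\testdatapnt[A]\mid\param[A])\,f(\testdatapnt[B]\mid\param[B])/f(\testdatapnt[\sep]\mid\param[\sep])$ and, equivalently, to $f(\testdatapnt\mid\param)=f(\testdatapnt[A]\mid\param[A])\,f(\testdatapnt[B\setminus A]\mid\testdatapnt[\sep],\param[B|A])$, where $\param[B|A]$ parametrises the conditional law of $\testdatapntrv[B]$ given $\testdatapntrv[A]$ (which, by the Markov property, is a function of $\testdatapntrv[\sep]$ alone). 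On the prior side, the weak hyper Markov property is exactly $\indstate{\param[A]}{\param[B]}{\param[\sep]}{}$, i.e.\ $\law(\param)=\law(\param[\sep])\,\law(\param[A]\mid\param[\sep])\,\law(\param[B]\mid\param[\sep])$, whereas the strong hyper Markov property gives the sharper $\param[A]\ci\param[B|A]$, i.e.\ $\law(\param)=\law(\param[A])\,\law(\param[B|A])$ with the two pieces variation independent.

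For the first assertion I would multiply the CSF factorisation by the weak hyper Markov factorisation, obtaining
\begin{align*}
    f(\testdatapnt,\param)=\bigl[f(\testdatapnt[A]\mid\param[A])\,\law(\param[A]\mid\param[\sep])\bigr]\cdot\bigl[f(\testdatapnt[B]\mid\param[B])\,\law(\param[B]\mid\param[\sep])\bigr]\cdot\frac{\law(\param[\sep])}{f(\testdatapnt[\sep]\mid\param[\sep])}.
\end{align*}
Here the first bracket is a function of $(\testdatapnt[A],\param[A])$ together with the conditioning variables $(\testdatapnt[\sep],\param[\sep])$ (note $\testdatapnt[\sep]\subseteq\testdatapnt[A]$ and that $\param[\sep]$ is a deterministic function of $\param[A]$), the second bracket is the mirror statement for $B$, and the last factor depends on $(\testdatapnt[\sep],\param[\sep])$ only. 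Absorbing that factor into either bracket puts the density in the form $a(u,w)\,b(v,w)$ with $w=(\testdatapntrv[\sep],\param[\sep])$, and the factorisation criterion yields $\indstate{(\testdatapntrv[A],\param[A])}{(\testdatapntrv[B],\param[B])}{(\testdatapntrv[\sep],\param[\sep])}{}$, which is \eqref{eq:weakjoint}.

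For the second assertion I would instead multiply the conditional form of the CSF factorisation by the strong hyper Markov factorisation,
\begin{align*}
    f(\testdatapnt,\param)=\bigl[f(\testdatapnt[A]\mid\param[A])\,\law(\param[A])\bigr]\cdot\bigl[f(\testdatapnt[B\setminus A]\mid\testdatapnt[\sep],\param[B|A])\,\law(\param[B|A])\bigr].
\end{align*}
After conditioning on $\testdatapntrv[\sep]=\testdatapnt[\sep]$, the first bracket is a function of $(\testdatapnt[A\setminus B],\param[A])$ and the second of $(\testdatapnt[B\setminus A],\param[B|A])$, so the factorisation criterion gives $\indstate{(\testdatapntrv[A\setminus B],\param[A])}{(\testdatapntrv[B\setminus A],\param[B|A])}{\testdatapntrv[\sep]}{}$; adjoining the conditioned-upon coordinates $\testdatapntrv[\sep]$ to each block then upgrades this, through the semi-graphoid axioms, to $\indstate{(\testdatapntrv[A],\param[A])}{(\testdatapntrv[B],\param[B|A])}{\testdatapntrv[\sep]}{}$.

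The routine parts are the two displayed factorisations and the bookkeeping of which variables each factor depends on. I expect the main obstacle to be the measure-theoretic legitimacy of the reparametrisation $\param\leftrightarrow(\param[A],\param[B|A])$ used in the strong case: one must justify that $\param[B|A]$ is a bona fide parameter — the conditional distribution of $\testdatapntrv[B]$ given $\testdatapntrv[A]$, which under the Markov property depends on $\testdatapntrv[A]$ only through $\testdatapntrv[\sep]$ — and that under a \emph{strong} (rather than merely weak) hyper Markov law its law factorises measurably as $\law(\param[A])\,\law(\param[B|A])$ with the two pieces variation independent; this is exactly where the strong hypothesis is needed, whereas the first conclusion uses only the weak factorisation and is free of this subtlety. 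Granting that, together with a reference measure that factorises appropriately so the factorisation criterion applies, both conclusions follow as indicated.
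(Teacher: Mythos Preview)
The paper does not supply its own proof of this proposition: it is merely restated from \cite{dawid1993} as a cited result and then invoked to establish Proposition~2. There is therefore no ``paper's proof'' to compare against. Your argument is correct and is essentially the density-factorisation proof given in the original Dawid--Lauritzen reference: combine the clique--separator factorisation of the sampling density with the (weak or strong) hyper Markov factorisation of the prior, and read off the required conditional independence from the product form. The measure-theoretic caveat you flag about the reparametrisation $\param\leftrightarrow(\param[A],\param[B|A])$ is exactly the point Dawid and Lauritzen handle by their standing assumption that the model is a \emph{Markov distribution over} $\graph$ with the clique marginals pairwise consistent, so that $(\param[A],\param[B|A])$ is a genuine and variation-independent reparametrisation; once that is granted your two displays go through as written.
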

\begin{proposition}
The hyper normal inverse Wishart law is strong hyper Markov.
\end{proposition}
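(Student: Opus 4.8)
The plan is to verify $\param_A\ci\param_{B|A}$ directly, for an arbitrary decomposition $(A,B)$ of $\graph$, exploiting a fact already recorded in the construction above: the marginal law of $\covmat$ under $\giwsymb_{\graph}(\niwnmu,\niwnnu,\niwiwtau,\niwiwalpha)$ is the hyper inverse Wishart law, which is strong hyper Markov with respect to $\graph$ by \cite{dawid1993}. Set $C=A\cap B$ and $R=B\setminus C$. Since $C$ separates $A\setminus C$ from $R$, the Markov property yields $\indstate{\testdatapntrv[A\setminus C]}{\testdatapntrv[R]}{\testdatapntrv[C]}{}$, so the conditional law $\law(\testdatapntrv[B]\mid\testdatapntrv[A])$ coincides with $\law(\testdatapntrv[B]\mid\testdatapntrv[C])$; concretely $\param_{B|A}$ is then encoded by the triple $(K,\Lambda,\mathbf{m}_0)$ with $K=\covmat_{RC}\covmat_{CC}^{-1}$, $\Lambda=\covmat_{RR}-K\covmat_{CR}$ and $\mathbf{m}_0=\meanvec[R]-K\meanvec[C]$, while $\param_A$ is encoded by $(\meanvec[A],\covmat_{AA})$. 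The goal is therefore to establish $(\meanvec[A],\covmat_{AA})\ci(K,\Lambda,\mathbf{m}_0)$.

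First, strong hyper Markovianity of the hyper inverse Wishart marginal of $\covmat$ gives $\covmat_{AA}\ci(K,\Lambda)$, because $(K,\Lambda)$ is exactly the parametrisation of the conditional covariance object $\covmat_{B|A}$. Next, condition on $\covmat$. By definition of the hyper normal inverse Wishart law, $\meanvec\mid\covmat\sim\normsymb(\niwnmu,\tfrac1{\niwnnu}\covmat)$, so $\meanvec[A]\mid\covmat\sim\normsymb(\niwnmu_A,\tfrac1{\niwnnu}\covmat_{AA})$ and a short calculation gives $\mathbf{m}_0\mid\covmat\sim\normsymb(\niwnmu_R-K\niwnmu_C,\tfrac1{\niwnnu}\Lambda)$. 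Moreover
\[
\operatorname{Cov}(\meanvec[A],\mathbf{m}_0\mid\covmat)=\tfrac1{\niwnnu}\bigl(\covmat_{AR}-\covmat_{AC}\covmat_{CC}^{-1}\covmat_{CR}\bigr),
\]
whose $C$-rows vanish identically and whose $(A\setminus C)$-rows vanish precisely because $\indstate{\testdatapntrv[A\setminus C]}{\testdatapntrv[R]}{\testdatapntrv[C]}{}$ forces $\covmat_{(A\setminus C)R}=\covmat_{(A\setminus C)C}\covmat_{CC}^{-1}\covmat_{CR}$. As $(\meanvec[A],\mathbf{m}_0)$ is jointly Gaussian given $\covmat$, this yields $\meanvec[A]\ci\mathbf{m}_0\mid\covmat$, with $\law(\meanvec[A]\mid\covmat)$ a function of $\covmat_{AA}$ alone and $\law(\mathbf{m}_0\mid\covmat)$ a function of $(K,\Lambda)$ alone.

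Finally I would chain the two statements. Noting that $\covmat$ is a deterministic (bijective) function of $(\covmat_{AA},K,\Lambda)$, the joint density of $(\covmat_{AA},K,\Lambda,\meanvec[A],\mathbf{m}_0)$ factors, using $\covmat_{AA}\ci(K,\Lambda)$ and $\meanvec[A]\ci\mathbf{m}_0\mid\covmat$, as
\[
\bigl[p(\covmat_{AA})\,p(\meanvec[A]\mid\covmat_{AA})\bigr]\cdot\bigl[p(K,\Lambda)\,p(\mathbf{m}_0\mid K,\Lambda)\bigr],
\]
i.e.\ $(\meanvec[A],\covmat_{AA})\ci(K,\Lambda,\mathbf{m}_0)$, which is exactly $\param_A\ci\param_{B|A}$ and hence the strong hyper Markov property (weak hyper Markovianity then following, or alternatively already visible from the CSF factorisation \eqref{eq:gnw}). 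I expect the main difficulty to be bookkeeping rather than depth: correctly identifying $\param_A$ and $\param_{B|A}$ with these coordinate blocks, and certifying that the cross-covariance cancellation is genuinely equivalent to the separation property of $C$; a fully self-contained treatment would in addition have to reprove the Bartlett-type decomposition of the inverse Wishart that underlies the cited strong hyper Markovianity of the hyper inverse Wishart law. An alternative that sidesteps the citation is to group the clique and separator factors in \eqref{eq:gnw} along the decomposition to obtain $\giwsymb_{\graph}=\giwsymb_{\graph[A]}\,\giwsymb_{\graph[B]}/\giwsymb_C$ and then read off the required factorisation after the change of variables $(\meanvec,\covmat)\leftrightarrow\bigl((\meanvec[A],\covmat_{AA}),(K,\Lambda,\mathbf{m}_0)\bigr)$, the Jacobian being the tedious ingredient on that route.
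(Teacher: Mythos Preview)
Your argument is correct and takes a genuinely different route from the paper's. The paper proceeds abstractly: it first obtains the \emph{weak} hyper Markov property by invoking \cite[Proposition~5.1]{dawid1993} with the substitution $(X,\param)\leftrightarrow(\meanvec,\covmat)$---that is, it treats $\covmat$ as the ``parameter'' for the ``data'' $\meanvec$, using that the hyper inverse Wishart law on $\covmat$ is hyper Markov and that $\meanvec\mid\covmat\sim\normsymb(\niwnmu,\tfrac1{\niwnnu}\covmat)$ is Markov---and then upgrades to the strong property via \cite[Proposition~5.9]{dawid1993}, which asserts that hyper Markov laws built from clique-wise conjugate priors in full exponential families are automatically strong hyper Markov. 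Your approach, by contrast, is a direct coordinate verification: you import the strong hyper Markov property of the hyper inverse Wishart to get $\covmat_{AA}\ci(K,\Lambda)$, then handle the mean component by an explicit Gaussian cross-covariance cancellation driven by the separation of $A\setminus C$ from $R$ by $C$, and finally chain the two via the bijection $\covmat\leftrightarrow(\covmat_{AA},K,\Lambda)$ on $\precmatrestr[\graph]$. The paper's proof is shorter and scales to any conjugate exponential-family setup without new work, but it is a black box unless one unpacks Proposition~5.9; your argument is longer and Gaussian-specific, but it is self-contained modulo the hyper inverse Wishart result and makes transparent exactly where decomposability enters (the vanishing of $\covmat_{(A\setminus C)R}-\covmat_{(A\setminus C)C}\covmat_{CC}^{-1}\covmat_{CR}$). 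Your closing remark about reading the factorisation off \eqref{eq:gnw} after the change of variables is in fact closer in spirit to the alternative the paper implicitly has available, and would give a third, equally valid, proof.
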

\begin{proof}
In light of \cite[Proposition 5.1]{dawid1993}, where $(X,\param)$ is substituted for $(\meanvec, \covmat)$,  it follows from \eqref{eq:weakjoint} that the distribution of $(\meanvec, \covmat)$ is weak hyper Markov.
The clique wise $\giwsymb(\niwnmu_{\jtnode}, \niwnnu ,\niwiwtau_{\jtnode}, \niwiwalpha)$ laws define full exponential families which are conjugate to the sampling distribution, thus by \cite[Proposition 5.9]{dawid1993}, also the strong hyper Markov property holds.
\end{proof}
By \cite[Corollary 5.2]{dawid1993} the posterior is strong hyper Markov as well and can be calculated using data corresponding to each clique.
Further as stated in \cite{dawid1993}, by substituting $\testdatapntrv$ for $\testdatapntrv^{\testdatasize+1}$ and $\param$ for $(\meanvec, \covmat)$ in Proposition 5.1, and letting $(\meanvec, \covmat)\sim \giwsymb(\niwnmustsymb, \niwnnustsymb ,\niwiwtaustsymb, \niwiwalphastsymb)$, it follows from the weak union and the decomposition property of the semi-graphoid axioms that the posterior predictive distribution for $\testdatapntrv^{\testdatasize+1}$ is Markov.
The clique-wise distributions are pairwise consistent $\tdenssymb(\tpreddfsymb, \niwnmustsymb_{\jtnode}, {\boldsymbol \Upsilon_{\jtnode}^\ast})$ distributions, yielding the CSF property of the density  
\begin{align}
    \tdenssymb_{\graph[]}(\testdatapnt^{\testdatasize+1}|\tpreddfsymb, \niwnmustsymb_{\jtnodeset}, {\boldsymbol \Upsilon_{\jtnodeset}^\ast}) = \frac{\prod_{\jtnode \in \jtnodeset} \tdenssymb({\testdatapnt_{\jtnode}^{\testdatasize + 1}}\mid\tpreddfsymb, \niwnmustsymb_{\jtnode}, {\boldsymbol \Upsilon_{\jtnode}^\ast})}
          {\prod_{\sep \in \sepset} \tdenssymb({\testdatapnt_{\sep}^{\testdatasize + 1}}\mid\tpreddfsymb, \niwnmustsymb_{\sep}, {\boldsymbol \Upsilon_{\sep}^\ast})}.
\end{align}
In the following this distribution will be referred to as the  \emph{non-central graph t-distribution} with $\tpreddfsymb$ degrees of freedom, mean vector $\niwnmustsymb$ and scale parameter $\tpredprecsymb \in \precmatrestr[\graph]$.

\subsection{Bayesian model averaging} 
\label{sub:graphical_bayesian_predictive_classifier}

In the above consideration, the graph is assumed to be fixed.
In the present study, we take a BMA approach and incorporate uncertainty about the model by regarding $\graph$ as a random variable, see e.g. \cite{madigan1995graphicalmodels}.
\cite{StructuralMarkov} introduced the family of structural Markov graph laws from which we choose the simplest, the uniform law defined as $p(\graph) = 1/|\graphsp|$ for all $\graph \in \graphsp$.
The full hierarchical structure of data generation is then structured as follows
\begin{align*}
    \graph  & \sim Unif(\graphsp) \\
    \meanvec,\covmat \mid \graph  & \sim \giwsymb_{\graph}(\niwnmu,\niwnnu, \niwiwtau, \niwiwalpha)\\
    \testdatapntrv[i] \mid \meanvec,\covmat & \sim \normsymb(\meanvec ,\covmat),
\end{align*}
for $i=1,\dots,\testdatasize+1$.
In this setting, the predictive distribution in (2) also includes marginalizing over the space of decomposable graphs as
\begin{align*}
    \sum_{\graph \in \graphsp} \int_{\precmatrestr[{\graph}]} \int_{\mathbb R^{p}}
 f(\testdatapnt^{\testdatasize+1} | \meanvec,\covmat,\graph )
 f(\meanvec,\covmat, \graph | \testdataset[\testdatasize_{\class}] ) d\meanvec d\covmat.
\end{align*}
This reduces (see \ref{sub:derivation_of_graphpred}) to the following expression
\begin{align}
f(\testdatapnt^{\testdatasize+1} | \testdataset[\testdatasize_{\class}],\class,\bf{\mathcal  C}^n)
& = \sum_{\graph \in \graphsp}p(\graph | \testdataset[\testdatasize_{\class}])\tdenssymb_{\graph[]}(\testdatapnt^{\testdatasize + 1}|\tpreddfsymb, \niwnmustsymb, \tpredprecsymb),
\label{eq:graphpred}
\end{align}
where 
\begin{align}
p(\graph | \testdataset[\testdatasize_{\class}])
& = \frac{f(\testdataset[\testdatasize_{\class}] |\graph) p(\graph)}{ \sum\limits_{\graph' \in \graphsp}f(\testdataset[\testdatasize_{\class}] |\graph') p(\graph')}
\label{eq:graphpost}
\end{align}
is the graph posterior and
\begin{align}
    f(\testdataset[\testdatasize_{\class}]|\graph)=(2\pi)^{-\testdatasize\p/2}
    \frac{\niwconstsymb_{\graph[]}(\niwiwalphastsymb, \niwiwtaustsymb, \niwnnustsymb )}{\niwconstsymb_{\graph[]}(\niwiwalpha, \niwiwtau, \niwnnu)}
    \label{eq:graphmarg}
\end{align}
is the marginal likelihood of $\graph$ derived in \ref{sub:derivation_of_graphmarg} where
\begin{align*}
        \niwconstsymb_{\graph}(\niwiwalpha, \niwiwtau, \niwnnu) = \frac{\prod_{\jtnode\in \jtnodeset}\niwconstsymb_{\graph[\jtnode]}(\niwiwalpha, \niwiwtau_{\jtnode}, \niwnnu)}{\prod_{\sep\in \sepset}\niwconstsymb_{\graph[\sep]}(\niwiwalpha, \niwiwtau_{\sep}, \niwnnu)}
\end{align*}
is the normalizing constant in $\giwsymb_{\graph}(\meanvec,\covmat\mid\niwnmu, \niwnnu ,\niwiwtau, \niwiwalpha)$.

The problem with \eqref{eq:graphpost} and \eqref{eq:graphpred} from a practical point of view, is that the number of decomposable graphs with $\p$ nodes grows proportional to $\sum_r{\p \choose r} 2^{r(\p-r)}$, see \cite{countchordal} and \cite{cta} for exact enumeration and estimation of this quantity.
Consequently exact computation of the marginalizing factor in \eqref{eq:graphpost} is intractable. 
To tackle this issue, we propose the particle Gibbs scheme with systematic refreshment introduced in \cite{nontempspmcmc} to give an approximation $\hat p(\graph|\testdataset[\testdatasize_{\class}])$ of $p(\graph|\testdataset[\testdatasize_{\class}])$, which is then plugged into \eqref{eq:graphpred} to give a predictive classifier in the sense of (3) explicitly written as
\begin{align}
\sum_{\graph \in \graphsp}\hat p(\graph | \testdataset[\testdatasize_{\class}])\tdenssymb_{\graph[]}(\testdatapnt^{\testdatasize + 1}|\tpreddfsymb, \niwnmustsymb, \tpredprecsymb).
\end{align}


\section{Particle Gibbs with systematic refreshment} 
\label{sec:bayesian_graph_structure_learning_with_the_cta_particle_gibbs_sampler}

In this section we describe the \emph{particle Gibbs with systematic refreshment} (PG) sampling scheme established in \cite{nontempspmcmc}, in the present context of approximating $p(\graph|\testdataset[\testdatasize_{\class}])$.
The PG \emph{sampler} constructs, using SMC, a Markov kernel $\PG{p}$ leaving an extended version of the distribution $p(\graph|\testdataset[\testdatasize_{\class}])$, denoted by $\targ{1:p}$ invariant.
We proceed this section by defining $\targ{1:p}$ by a procedure called \emph{temporalization}  and show its key property of having $p(\graph|\testdataset[\testdatasize_{\class}])$ as marginal distribution.

\subsubsection*{Step I} We first specify the target distribution of interest as the graph posterior
\begin{align}
\p(\graph \mid\testdataset[\testdatasize_{\class}]).
\end{align}

\subsubsection*{Step II } In this step we exploit  the junction tree representation for decomposable graphs.
Note that, having a distribution, $p(\tree)$ over the set of junction trees directly induces a distribution over the underlying decomposable graphs since
\begin{align}
p(\graph)=\sum \limits_{\tree: \trgr(\tree)=\graph} p(\tree),
\label{eq:pmarggraph}
\end{align}
for any $\graph\in \graphsp$.
We use the junction tree posterior, introduced in \cite{Green01032013} defined as
\begin{align*}
    p(\tree \mid \testdataset[\testdatasize_{\class}]) &= p(\trgr(\tree) \mid \testdataset[\testdatasize_{\class}]) \times \frac{1}{\ntrees{\trgr(\tree)}}
\propto p(\testdataset[\testdatasize_{\class}] \mid {\trgr(\tree)})  \times \frac{ 1}{\ntrees{\trgr(\tree)}},
\label{eq:jtpost}
\end{align*}
for any $\tree \in \trsp$.
That is, each junction representation of any specific graph is assigned equal probability.
With this definition, by combining
the above equations, as shown in \cite{doi:10.1198/jcgs.2009.07129} the induced graph distribution is the graph posterior of interest since
\begin{align*}
   \sum \limits_{\tree: \trgr(\tree)=\graph} p(\tree\mid\testdataset[\testdatasize_{\class}])
              = \sum \limits_{\tree: \trgr(\tree)=\graph} p(\trgr(\tree) \mid \testdataset[\testdatasize_{\class}]) \times \frac{1}{\ntrees{\trgr(\tree)}}
              = p(\graph \mid \testdataset[\testdatasize_{\class}]).
\end{align*}


\subsubsection*{Step III} In this step, we extend the junction tree representation and its distribution by a variable corresponding to the nodes in the underlying graph.
Let, for all $m \in \intvect{1}{p}$, $\combsp{m}$ be the space of all \emph{$m$-combinations} of elements in $\intvect{1}{p}$.
An element $\comb{m} \in \combsp{m}$ is of form $\comb{m} = (\comb[1]{m}, \ldots, \comb[m]{m})$ where $\{\comb[\ell]{m} \}_{\ell = 1}^m \subseteq \intvect{1}{p}$ are distinct.
For $(\ell, \ell') \in \intvect{1}{m}^2$ such that $\ell \leq \ell'$, we denote $\comb[\ell:\ell']{m} \eqdef (\comb[\ell]{m}, \ldots, \comb[\ell']{m})$.
Denote by $\trsp[\comb{m}]$ the space of junction trees with underlying graph nodes in $\comb{m} \in \combsp{m}$.
We define, for all $m=1,\dots,\p$, the extended state spaces of junction trees with internal nodes in $\combsp{m}$ by
\begin{align*}
    \xsp{m} \eqdef \bigcup_{\comb{m} \in \combsp{m}}
    \left( \{\comb{m} \} \times \trsp[\comb{m}] \right).
\end{align*}
Next, we define discrete probability distribution $\combmeas{m}$ on $\combsp{m}$, for the extended target distributions
$$
    \targ{m}( \x{m})
    =  \frac{\untarg{m}( \x{m})}{\sum \limits_{\x{m} \in \xsp{m}} \untarg{m}(\x{m}) },
$$
where
$$
    \untarg{m}( \x{m})
    \eqdef
 \combmeas{m}( \comb{m}) \times  \frac{p( \testdataset[\testdatasize_{\class}] | \trgr(\tree[\comb{m}]) ) }{\ntrees{\trgr(\tree[\comb{m}])}}.
$$
Moreover, since  $\combsp{p} = \{\intvect{1}{p}\}$ we note that $\combmeas{p} = \delta_{\intvect{1}{p}}$, implying that $p(\tree[\comb{p}] |\testdataset[\testdatasize_{\class}])$ is the marginal distribution of $\targ{p}$ with respect to the $\tree[\comb{p}]$ component.
It is crucial that the distributions $\{\combmeas{m} \}_{m = 1}^p$ satisfies the recursion
\begin{align*}
    \combmeas{m + 1}(\comb{m+1}) = \sum_{\comb{m}\in \combsp{m}} \combkernelpath{m}(\comb{m}, \comb{m+1})\combmeas{m}(\comb{m}),
\end{align*}
where
\begin{equation*} \label{eq:def:comb:kernel:path}
    \combkernelpath{m}(\comb{m},  \comb{m + 1})
    \eqdef \delta_{\comb{m}}( \comb[1:m]{m + 1})
    \, \combkernel{m}(\comb[1:m]{m + 1}, \comb[m + 1]{m + 1})
\end{equation*}
with $\combkernel{m}$ being a Markov transition kernel density \footnote{All densities on finite countable spaces in this paper are defined with respect to the counting measure on the same space.} from
$\combsp{m}$ to $\intvect{1}{p}$ such that $\combkernel{m}(\comb{m}, j) = 0$
for all $j \in \comb{m}$.
Here we let for $j \in \intvect{1}{p} \setminus \comb{m}$, 
\begin{align*}
\combkernel{m}(\comb{m}, j) = \frac{1}{p-m},
\end{align*}
where $m\ge2$.
Further, by letting
\begin{align*}
     \combmeas{1}( \comb{1}) = \frac{1}{\p},
 \end{align*}
we obtain at each step $m$, $\combmeas{m}(\comb{m})$ as the uniform distribution over the nodes in $ \intvect{1}{p} \setminus \comb{m}$.

\subsubsection*{Step IV} The last step in the temporalization is the extension to the path space $\xsp{1:m}\eqdef \times_{\ell=1}^m \xsp{\ell}$.
On this space we define a proposal kernel $\prop{m}$ of form
\begin{equation*}
    \prop{m}(\x{m},  \x{m + 1})
    = \combkernelpath{m}(\comb{m},  \comb{m + 1}) \,
    \prop[\comb{m}, \comb{m + 1}]{m}(\tree[m],  \tree[m + 1]),
\end{equation*}
where $ \combkernelpath{m}$ is defined in \eqref{eq:def:comb:kernel:path} and for all $(\comb{m}, \comb{m + 1}) \in \combsp{m} \times \combsp{m + 1}$, $\prop[\comb{m}, \comb{m+1}]{m}: \trsp[\comb{m}] \times  \trsp[\comb{m+1}] \rightarrow [0, 1]$ is the transition kernel in the so called Christmas tree algorithm (CTA) established in \cite{cta}.

The extension of $\untarg{m}$ to $\xsp{1:m}$, is achieved through the backward versions of the kernels $\prop{m}$, denoted by $\bk[]{m} : \xsp{m+1} \times \xsp{m} \rightarrow [0, 1]$, where for each $m$,
\begin{align*}
    \bk{m}(\x{m + 1}, \x{m}) \eqdef |\retrosupp{m}(\x{m + 1})|^{-1}
    \1_{\retrosupp{m}(\x{m + 1})}(\x{m})
    \quad (\x{m + 1} \in \xsp{m+1}),
\end{align*}
i.e., $\bk{m}(\x{m + 1}, \cdot)$ is the uniform distribution
over $\retrosupp{m}(\x{m + 1})$, the support of $\prop{m}(\cdot, \x{m + 1})$.
The resulting distribution on $\xsp{1:m}$ is defined as
\begin{equation*}
    \untargpath{m}(\x{1:m}) \eqdef
   \untarg{m}(\x{m})
    \prod_{\ell = 1}^{m - 1} \bk[]{\ell}(\x{\ell+1}, \x{\ell}).
\end{equation*}
Under this construction, since each $\bk{m}$ is Markovian, the marginal distribution of $\x{\p}$ is exactly the distribution $p(\tree[\comb{\p}]|\testdataset[\testdatasize_{\class}])$.

We use the notation $\disc(\{ a_\ell \}_{\ell = 1}^\N)$ to denote the categorical probability distribution induced by a set $\{ a_\ell \}_{\ell = 1}^\N$ of positive (possibly unnormalised) numbers; thus, writing $W \sim \disc(\{ a_\ell \}_{\ell = 1}^\N)$ means that the variable $W$ takes the value $\ell \in \intvect{1}{\N}$ with probability $a_\ell / \sum_{\ell' = 1}^\N a_{\ell'}$.

The algorithm is as follows. 
It is initiated by one draw, $\X{1:p}[1] \sim \disc(\{ \wgt{\p}{\ell} \}_{\ell = 1}^\N)$, where the weights $\{ \wgt{m}{\ell} \}_{\ell = 1}^\N$ are generated by the standard SMC algorithm stated in Algorithm \ref{alg:SMC:update}.
Then the PG kernel is applied to generate samples $\X{1:p}[2],\X{1:p}[3],\dots,\X{1:p}[\Nmcmc]$.
Algorithmically, the more or less only difference between the PG kernel and the standard SMC algorithm is that the PG kernel, which is described in detail in Algorithm~\ref{alg:particle:Gibbs:kernel}, evolves the particle cloud \emph{conditionally} on a fixed reference trajectory specified \emph{a priori}; this \emph{conditional SMC} algorithm is constituted by Lines~1--16 in Algorithm~\ref{alg:particle:Gibbs:kernel}.
After having evolved, for $p$ time steps, the particles of the conditional SMC algorithm, the PG kernel draws randomly a particle from the last generation (Lines~17--19).
It then samples a new genealogical history of the selected particle back to the first generation using the backward kernel (Lines~20--22), and returns the traced path (Line~23).
The backward sampling procedure is the so called refreshment step which serves to improve mixing.
For a complete presentation of the PG sampler and related results see \cite{nontempspmcmc}  and \cite{andrieu2010particle}.

It can be shown that $\PG{p}$ is $\targ{1:p}$-reversible and thus leaves $\targ{1:p}$ invariant, see \cite[Proposition~8]{chopin:singh:2015}.
Thus, on the basis of $\PG{p}$, the PG sampler generates a Markov chain $\{\X{1:p}[\ell] \}_{\ell=1}^\Nmcmc$ according to
 \begin{align*}
    \X{1:p}[1] \stackrel{\PG{p}}{\longrightarrow}
    \X{1:p}[2] \stackrel{\PG{p}}{\longrightarrow}
    \X{1:p}[3] \stackrel{\PG{p}}{\longrightarrow}
    \cdots
    \stackrel{\PG{p}}{\longrightarrow}
    \X{1:p}[\Nmcmc],
 \end{align*}
where  $\X{1:p}[l]\in \xsp{\p}$, $l=1,\dots,\Nmcmc$.
An unbiased estimator of $p(\tree\mid \testdataset[\testdatasize_{\class}])$ is obtained as
\begin{align*}
    \hat p(\tree\mid \testdataset[\testdatasize_{\class}]) = \frac{1}{\Nmcmc}\sum_{\ell = 1}^\Nmcmc \1_{\tree}(\Z{p}[\ell]),
    \label{eq:mcmc_estimator}
\end{align*}
where each $\Z{p}[\ell]$ variable is extracted, on Line~18, at iteration $\ell - 1$ of Algorithm~\ref{alg:particle:Gibbs:kernel}.
The estimator of $p(\graph\mid\testdataset[\testdatasize_{\class}])$ now follows directly as $\hat p(\graph\mid\testdataset[\testdatasize_{\class}]) = \sum \limits_{\tree: \trgr(\tree)=\graph} \hat p(\tree\mid\testdataset[\testdatasize_{\class}])$.
\begin{algorithm}[H] \label{alg:SMC:update}
     \KwData{$\{ (\epart{m}{i}, \wgt{m}{i}) \}_{i = 1}^\N$}
     \KwResult{$\{ (\epart{m + 1}{i}, \wgt{m + 1}{i}) \}_{i = 1}^\N$}
     \For{$i \gets 1, \ldots, \N$}{
         draw $\ind{m + 1}{i} \sim \disc( \{ \wgt{m}{\ell} \}_{\ell = 1}^\N)$\;
         draw $\combpart{m + 1}{i} \sim \combkernelpath{m}
         (\combpart{m}{\ind{m + 1}{i}}, \rmd \comb{m + 1})$\;
         draw $\treepart{m + 1}{i} \sim \prop[\combpart{m}{\ind{m + 1}{i}}, \combpart{m + 1}{i}]{m}
         (\treepart{m}{\ind{m + 1}{i}}, \rmd \tree[m + 1])$\;
         set $\epart{m + 1}{i} \gets (\combpart{m + 1}{i}, \treepart{m + 1}{i})$\;
         set $\displaystyle \wgt{m + 1}{i} \gets \frac{p(\testdataset[\testdatasize_{\class}] | \trgr(\zpart{\comb{m+1}}{i} )) \ntrees{\trgr(\zpart{\comb{m}}{\ind{m + 1}{i}} )}
         \bk{m}(\epart{m}{\ind{m + 1}{i}}, \epart{m + 1}{i})}
         {p(\testdataset[\testdatasize_{\class}] |{\trgr(\zpart{\comb{m}}{\ind{m + 1}{i}} )}) \ntrees{\trgr(\zpart{\comb{m+1}}{i} )}
         \prop[\combpart{m}{\ind{m + 1}{i}}, \combpart{m + 1}{i}]{m}
         (\zpart{m}{\ind{m + 1}{i}}, \zpart{m + 1}{i})}$\;
     }
     \caption{SMC update.}
\end{algorithm}
\begin{algorithm}[H] \label{alg:particle:Gibbs:kernel}
     \KwData{a reference trajectory $\x{1:p} \in \xsp{1:p}$}
     \KwResult{a draw $\X{1:p}$ from $\PG{p}(\x{1:p},\cdot)$} 
     \For{$i \gets 1, \ldots, \N - 1$}{
         draw $\combpart{1}{i} \sim \combmeas{1}(\comb{1})$ \;
         set $\zpart{1}{i} \gets (\{\combpart{1}{i}\}, \emptyset)$ \;
         set $\epart{1}{i} \gets (\combpart{1}{i}, \zpart{1}{i})$\;
     }
     set $\epart{1}{\N} \gets \x{1}$\;
     \For{$i \gets 1, \ldots, \N$}{
         set $\wgt{1}{i} \gets 1$\;
     }
     \For{$m \gets 1, \ldots, p - 1$}{
         \For{$i \gets 1, \ldots, \N - 1$}{
             draw $\ind{m + 1}{i} \sim \disc( \{ \wgt{m}{\ell} \}_{\ell = 1}^\N)$\;
             draw $\combpart{m + 1}{i} \sim \combkernelpath{m} (\combpart{m}{\ind{m + 1}{i}}, \cdot)$\;

             draw $\zpart{m + 1}{i} \sim \prop[\combpart{m}{\ind{m + 1}{i}}, \combpart{m + 1}{i}]{m}
             (\zpart{m}{\ind{m + 1}{i}}, \cdot)$\;
             set $\epart{m + 1}{i} \gets (\combpart{m + 1}{i}, \zpart{m + 1}{i})$\;
         }
         set $\epart{m + 1}{\N} \gets \x{m + 1}$\;
         \For{$i \gets 1, \ldots, \N$}{
             set $\displaystyle \wgt{m + 1}{i} \gets \frac{p(\testdataset[\testdatasize_{\class}] | \trgr(\zpart{\comb{m+1}}{i} )) \ntrees{\trgr(\zpart{\comb{m}}{\ind{m + 1}{i}} )} \bk{m}(\epart{m}{\ind{m + 1}{i}}, \epart{m + 1}{i})}
             {p( \testdataset[\testdatasize_{\class}]| \trgr(\zpart{\comb{m}}{\ind{m + 1}{i}} )) \ntrees{\trgr(\zpart{\comb{m+1}}{i} )}
             \prop[\combpart{m}{\ind{m + 1}{i}}, \combpart{m + 1}{i}]{m}
             (\zpart{m}{\ind{m + 1}{i}}, \zpart{m + 1}{i})}$\;
         }
     }
     draw $\gen{p} \sim \disc( \{ \wgt{p}{\ell} \}_{\ell = 1}^\N )$\;
     set $\Z{p} \gets \zpart{p}{\gen{p}}$\;
     set $\X{p} \gets (\intvect{1}{p}, \Z{p})$\;

     \For{$m \gets p - 1, \ldots, 1$}{
         draw $\X{m} \sim \bk{m}(\X{m + 1}, \cdot)$\;
     }
     set $\X{1:p} \gets (\X{1}, \ldots, \X{p})$\;

     \Return{$\X{1:p}$}
     \caption{One transition of PG.}
\end{algorithm}



\section{Numerical study} 
\label{sec:simulation_study}
We demonstrate the performances of the suggested BMA classifier by one realistic- and three synthetic datasets, illustrating different typical classification scenarios.
In each of the examples the underlying graph distribution was estimated by the PG sampler, where the number of particles, $\N$ were set to $50$ and the number of Gibbs samples, $M$ where set to $2000$.
The burn-in period where deduced by visual inspection of likelihood traces of the sampled graphs.
The CTA proposal kernels $\{\prop[]{m}\}_{m=1}^\p$ of the SMC algorithm has two tuning parameters, $\alpha$ and $\beta$ which were both set to $0.5$, reflecting an assumption of moderately sparse graphs.
We refer to \cite{cta} for a more detailed description of how of these parameters influence the sparsity of the generated junction trees.

The performance of our classifier were compared to 11 different out-of-the-box classifiers.
We also included the BMA classifier where the graph distributions was set to be a point mass at the true graph respectively.
For further details about the implementation and parameterization of these classifiers, the reader is referred to the python library which can be obtained from the third author.


\subsection{Synthetic data} 
\label{sub:simulations}
The number of nodes in each of the synthetic examples were fixed to $p=50$
and the graphs were generated by the CTA with $\alpha=0.5$ and $\beta=0.5$.
For each example and each class $\class$, the data were sampled from $\normsymb_{\graph[\class]}(\meanvec[\class], \covmat_\class)$, where the subscript $\class$ now indicates class belonging.
We defined $\covmat[\class]$ so as to fulfill
\begin{align*}
    (\covmat_\class)_{ij} = \begin{cases}
        \sigma^2, &\text{ if } i=j\\
        \rho\sigma^2, &\text{ if } (i,j) \in \graph[\class] \\
    \end{cases}
\end{align*}
and $(\covmat^{-1}_\class)_{ij} =0$ if $(i,j) \notin \graph[\class]$.
This is the graphical intraclass structure considered in \cite{doi:10.1198/jcgs.2009.07129}.
Here, we have fixed the variance and correlation parameters $\sigma^2$ and $\rho$ to 1.0 and 0.5 respectively.
The underlying graph and class centroids, $\meanvec[\class]$ for the two scenarios are described below.


\subsubsection*{A) Two classes with different graphs} 
\label{sub:two_classes_with_different_graph}
The class centroids where separated as $\meanvec[1] = \mathbf 0$ and $\meanvec[2] = \Delta \times \mathbf 1$, where $\mathbf 0$ and $\mathbf 1$ are the vectors of zeros and ones respectively and $\Delta=0.0001$.

\subsubsection*{B) Three classes with different graphs} 
\label{sub:three_classes_with_different_graphs}
The class centroids in this example where separated as $\meanvec[1] = \mathbf 0$, $\meanvec[2] = ((i \mod 2) \times \Delta)_{i=1}^p$ and $\meanvec[3]=((i+1 \mod 2) \times \Delta)_{i=1}^p$ and $\Delta=0.0001$.

These choices of separating the class centroids showed to reflect narrowness between the classes.

For each of the scenarios, the correct classification rate was calculated on 10 independently generated datasets each consisting of $50$ test samples for each class.
The training datasets consisted of $n=51$ and $n=300$ samples for each class.
The correct classification probability was then estimated by their means and summarized in Table \ref{tab:classification-results}.

The hyper parameters for the hyper normal inverse Wishart prior were set to $\niwnnu=1, \niwiwtau=\mathbf I$ and $ \niwiwalpha=\p$ in all classes for each of the examples.
For $\niwnmu_\class$ we used the empirical mean computed separately in each of the classes.



The results of the graph posterior estimation by the PG sampler for one of the 10 dataset replicates from example A and B are shown in Figure \ref{fig:B}-\ref{fig:C}.
Each column corresponds to one class.
The first row in each figure show the adjacency matrices for the underlying graphs in each of the classes and the second row shows the estimated heatmaps.
A dark color at position $i,j$, counted from the top left corner, indicates high marginal posterior probability of the edge $(i,j)$.
It is interesting to note that even though the pattern in the heatmaps for the $\testdatasize=51$ case does not resemble those in the the corresponding underlying graph so well, the BMA classifier still performs better than the standard Bayesian predictive classifier as seen in Table \ref{tab:classification-results}.
Also, with this small amount of data one could hardly expect to see the true pattern in the posterior heatmaps.
For $\testdatasize=300$, we observe better correspondence between the heatmap and the true underlying graph, and the true classification probability is also higher.
However, the impact of the graph seem to degrease when the number of data increases, since the standard Bayesian predictive classifier shows comparable results to our BMA approach, (0.89 to 0.93).
For the $\testdatasize=51$ case, there is a greater gap between the standard approach and the BMA approach (0.73 to 0.82).

The third row shows the log-likelihood for the sampled graphs (blue line), the true underlying graph (green line) and the complete graph (red line), where the complete graph corresponds to the standard Bayesian posterior predictive classifier.
As the log-likelihood for the complete graph is substantially lower than those sampled by the PG sampler for both classes, the plot illustrates the motivation of the BMA approach.
Remarkably, the log-likelihood for the true underlying graph is also lower than those samples by the PG sampler in the $\testdatasize=51$ case.
This is explained by the relatively small amount of data that we are using, for $\testdatasize=300$ the plots has the expected relation.

The last row shows the estimated auto-correlations of the number of edges in the graphs (\emph{graph size}) generated by the PG sampler.
The dependence seem to decline to zero after about 100-200 lags.
Both the autocorrelation and the heatmaps were estimated after a burn-in period, deduced from the log-likelihood plots.

Throughout each of the examples, the BMA classifier showed better results than the standard Bayesian predictive approach using the similar parametrization.
Also the BMA classifier out performed the rest of the classifiers as well.
The BMA classifier with point mass at the true graph is the best choice in each of the examples.


\begin{table}[]
\centering
\begin{tabular}{lllll}
                       & Walking             & A, $\testdatasize=300$                       & A, $\testdatasize=51$                       & B, $\testdatasize=51$\\ \hline
Bayes pred             & 0.809 (0.050)         & 0.888 (0.041)             & 0.725 (0.040)             & 0.613 (0.052)          \\
BMA (true)             &  -                  & 0.948 (0.015)             & 0.904 (0.037)             & 0.877 (0.029)          \\
\bf{BMA (PG)}          & {\bf0.848} (0.059)    & {\bf 0.933} (0.025)       & {\bf 0.816} (0.047)       & {\bf 0.788} (0.038)    \\
3-NN                   & 0.825 (0.053)         & 0.716 (0.054)             & 0.670 (0.040)             & 0.558 (0.026)          \\
Gauss proc             & 0.829 (0.046)         & 0.700 (0.047)             & 0.652 (0.029)             & 0.538 (0.028)          \\
Dec tree (5)           & 0.761 (0.045)         & 0.548 (0.050)             & 0.543 (0.063)             & 0.373 (0.047)          \\
Naive Bayes            & 0.740 (0.071)         & 0.480 (0.026)             & 0.504 (0.042)             & 0.340 (0.036)          \\
Linear SVM             & 0.644 (0.070)         & 0.471 (0.041)             & 0.487 (0.032)             & 0.358 (0.050)          \\
Neural net             & 0.774 (0.098)         & 0.825 (0.047)             & 0.631 (0.050)             & 0.521 (0.036)          \\
AdaBoost               & 0.791 (0.051)         & 0.488 (0.053)             & 0.496 (0.036)             & 0.331 (0.036)          \\
LDA                    & 0.666 (0.044)         & 0.482 (0.031)             & 0.490 (0.038)             & 0.348 (0.047)          \\
QDA                    & 0.741 (0.095)         & 0.889 (0.041)             & 0.535 (0.032)             & 0.391 (0.022)          \\
Rand forest            & 0.700 (0.078)         & 0.541 (0.055)             & 0.526 (0.057)             & 0.348 (0.040)          \\
\end{tabular}
\vspace{2mm}
\caption{Estimated probabilities (standard errors) of correct classification  averaged over the 10 test sets for the walking / Nordic walking data and the synthetic datasets.}
\label{tab:classification-results}

\end{table}






\begin{figure}
    \centering
    \begin{subfigure}[t]{0.33\textwidth}
        \includegraphics[width=\textwidth]{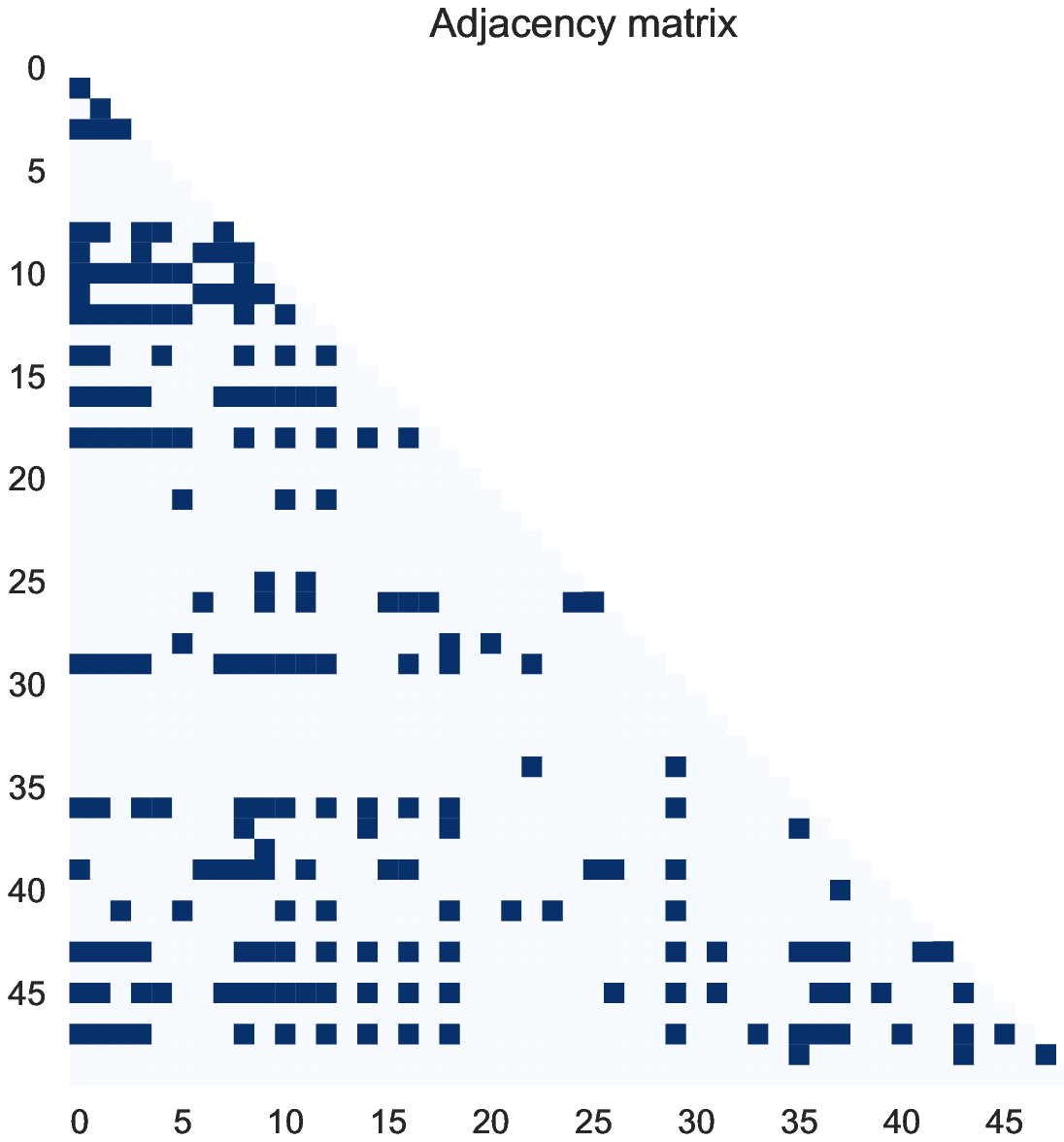}
    \end{subfigure}
   ~
   \begin{subfigure}[t]{0.33\textwidth}
       \includegraphics[width=\textwidth]{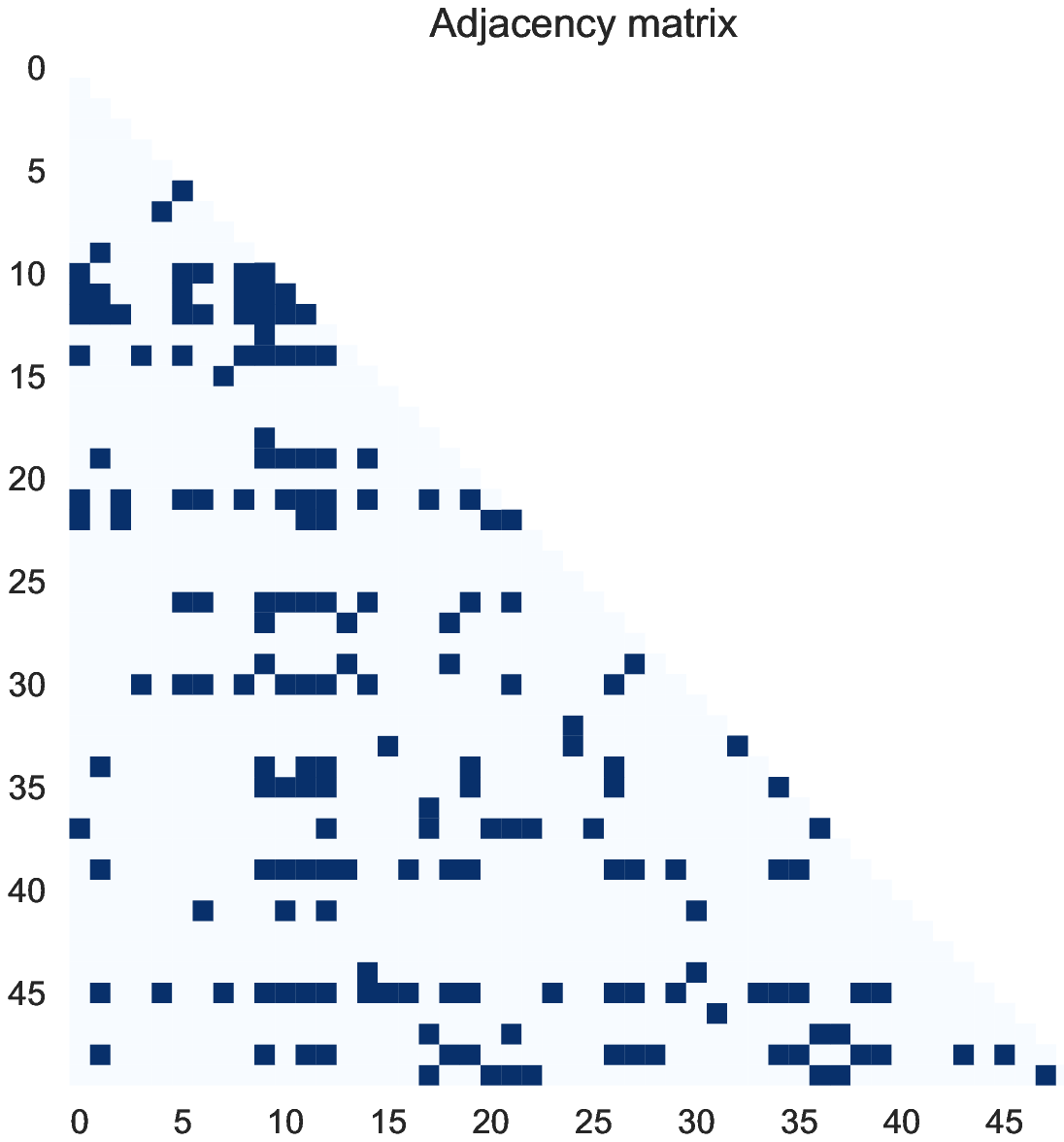}
   \end{subfigure}

    \begin{subfigure}[t]{0.33\textwidth}
        \includegraphics[width=\textwidth]{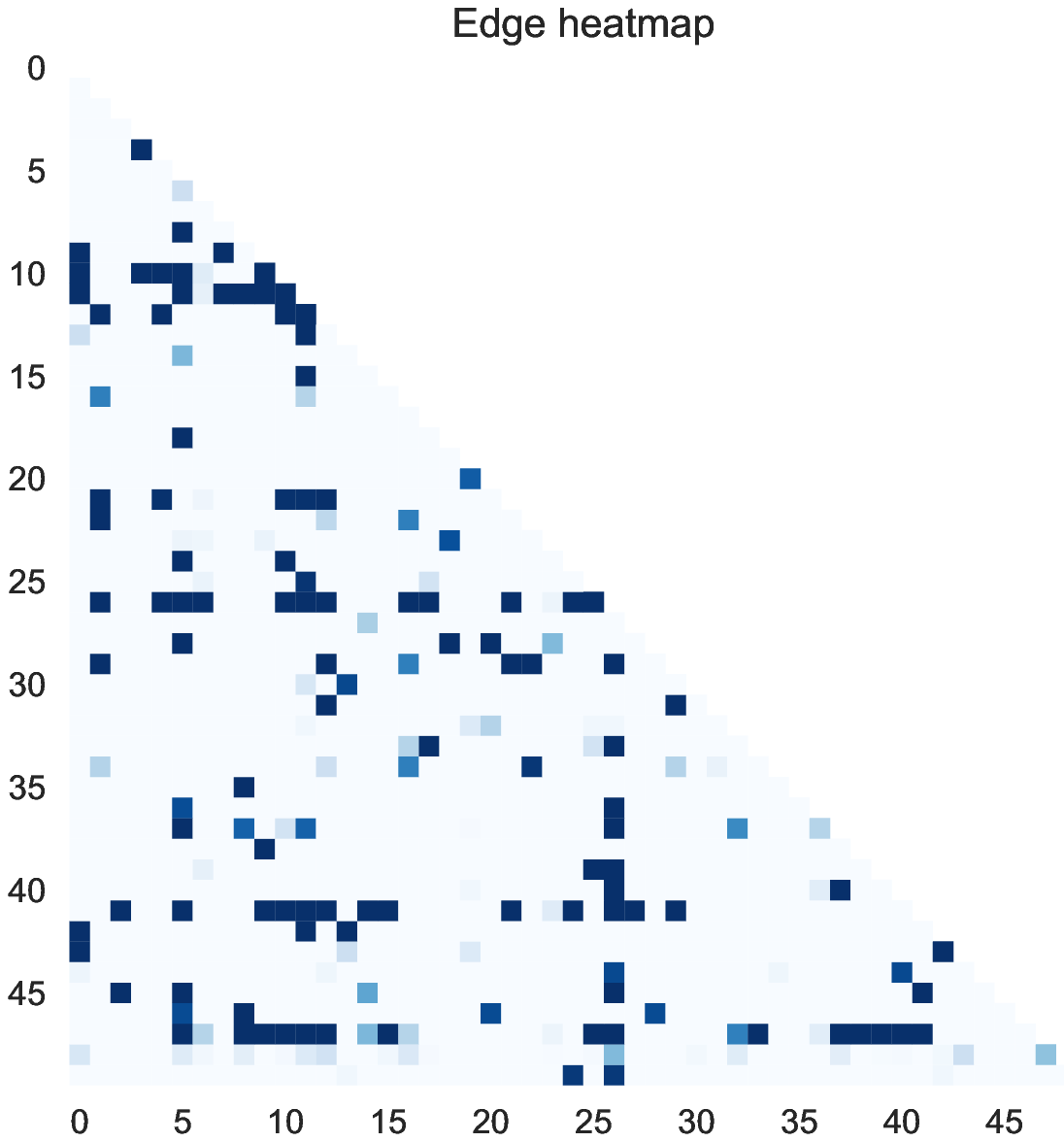}
    \end{subfigure}
   ~
   \begin{subfigure}[t]{0.33\textwidth}
       \includegraphics[width=\textwidth]{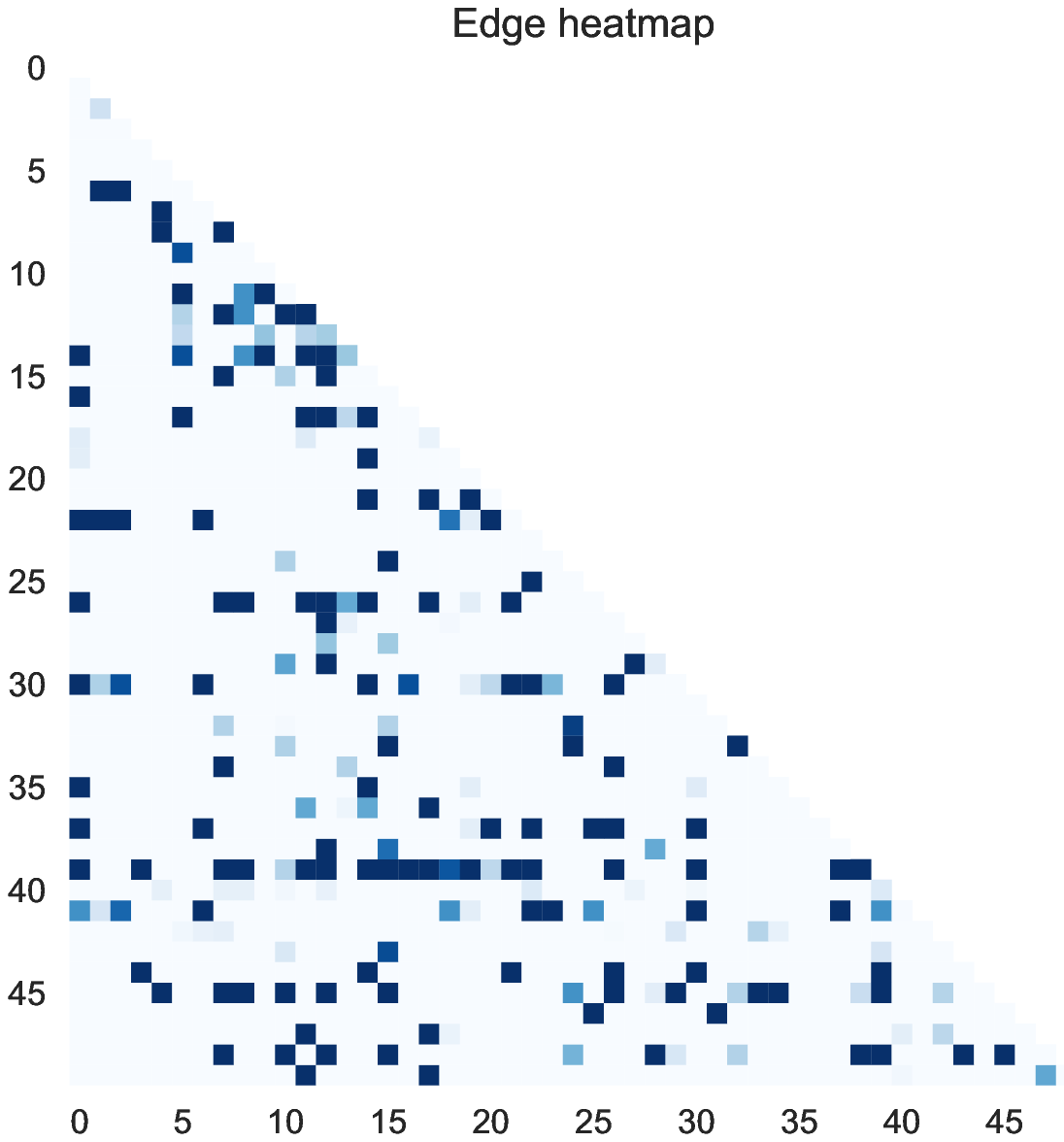}
   \end{subfigure}

    \begin{subfigure}[t]{0.33\textwidth}
        \includegraphics[width=\textwidth]{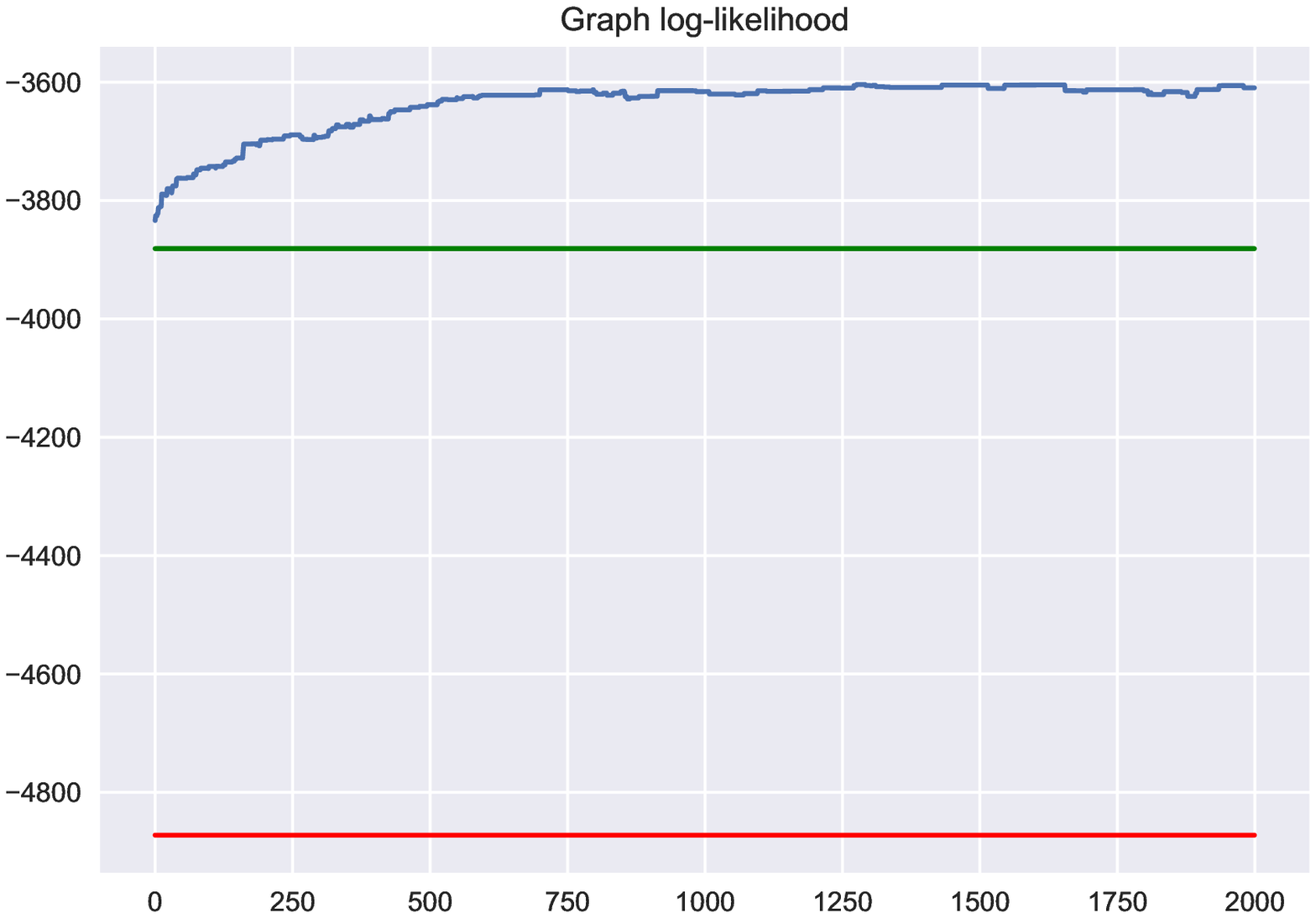}
    \end{subfigure}
   ~
   \begin{subfigure}[t]{0.33\textwidth}
       \includegraphics[width=\textwidth]{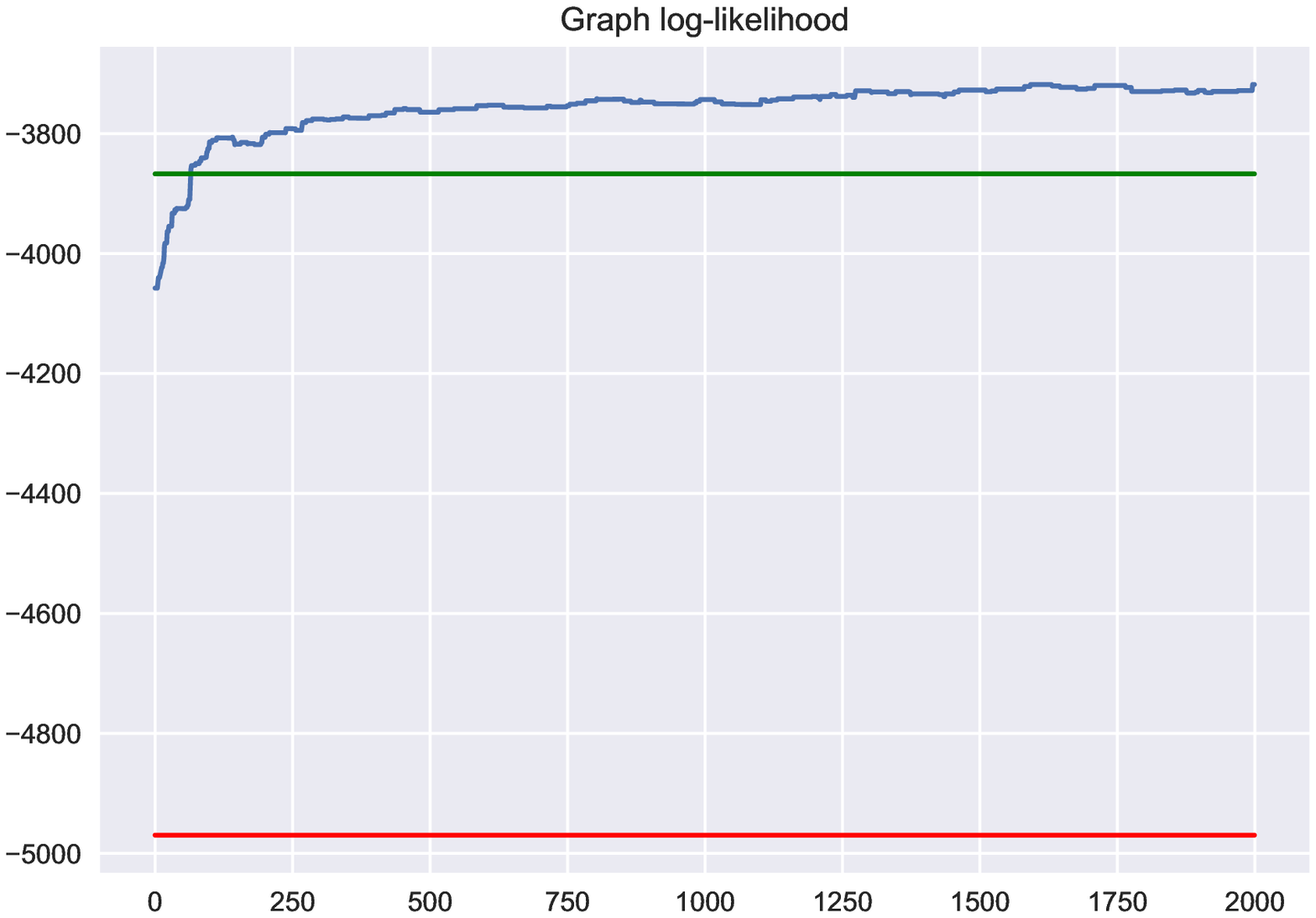}
    \label{fig:3_classes_heatmap1}
   \end{subfigure}

    \begin{subfigure}[t]{0.33\textwidth}
        \includegraphics[width=\textwidth]{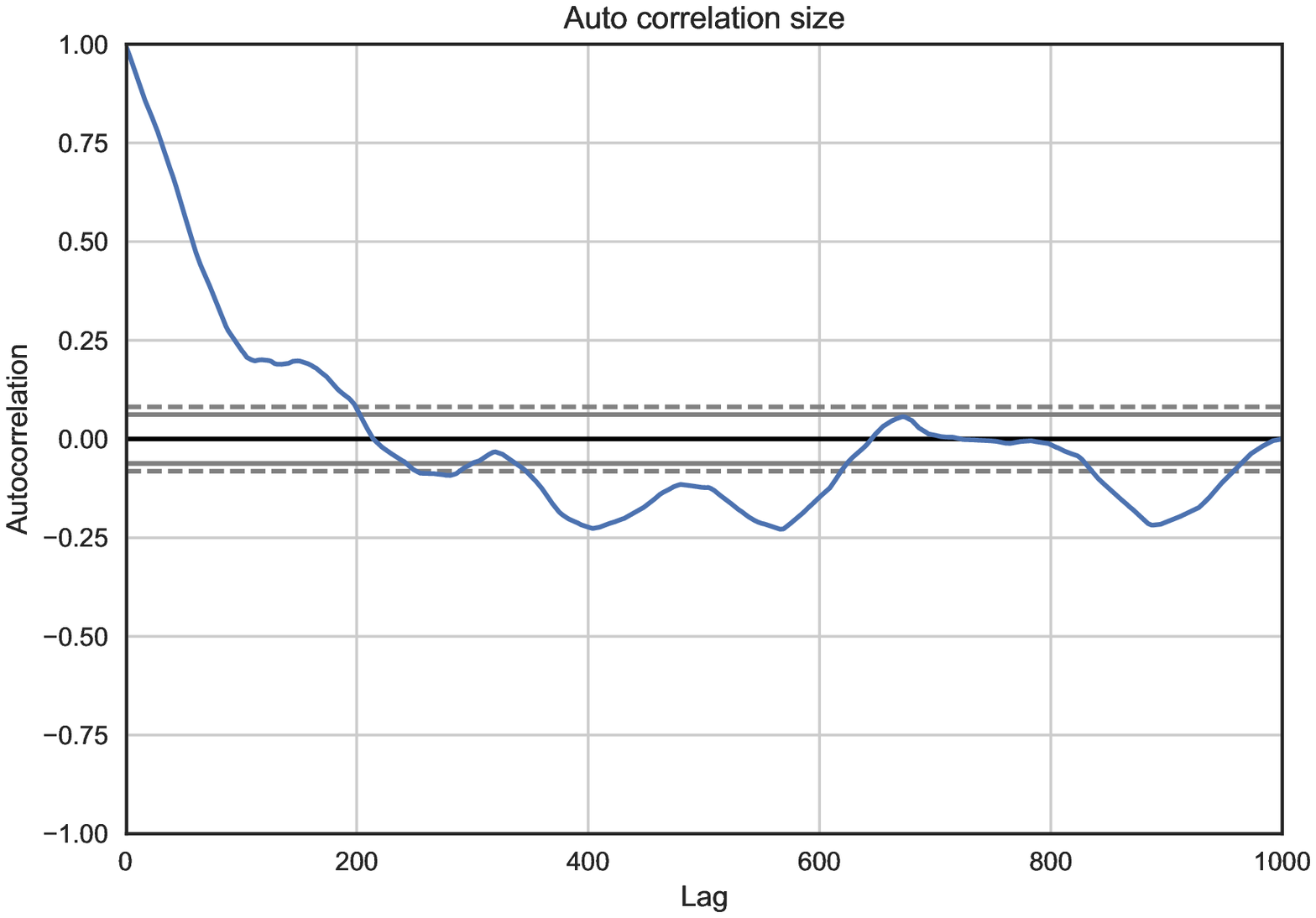}
        \label{fig:two_classes_same_graph_adjmat}
    \end{subfigure}
   ~
   \begin{subfigure}[t]{0.33\textwidth}
       \includegraphics[width=\textwidth]{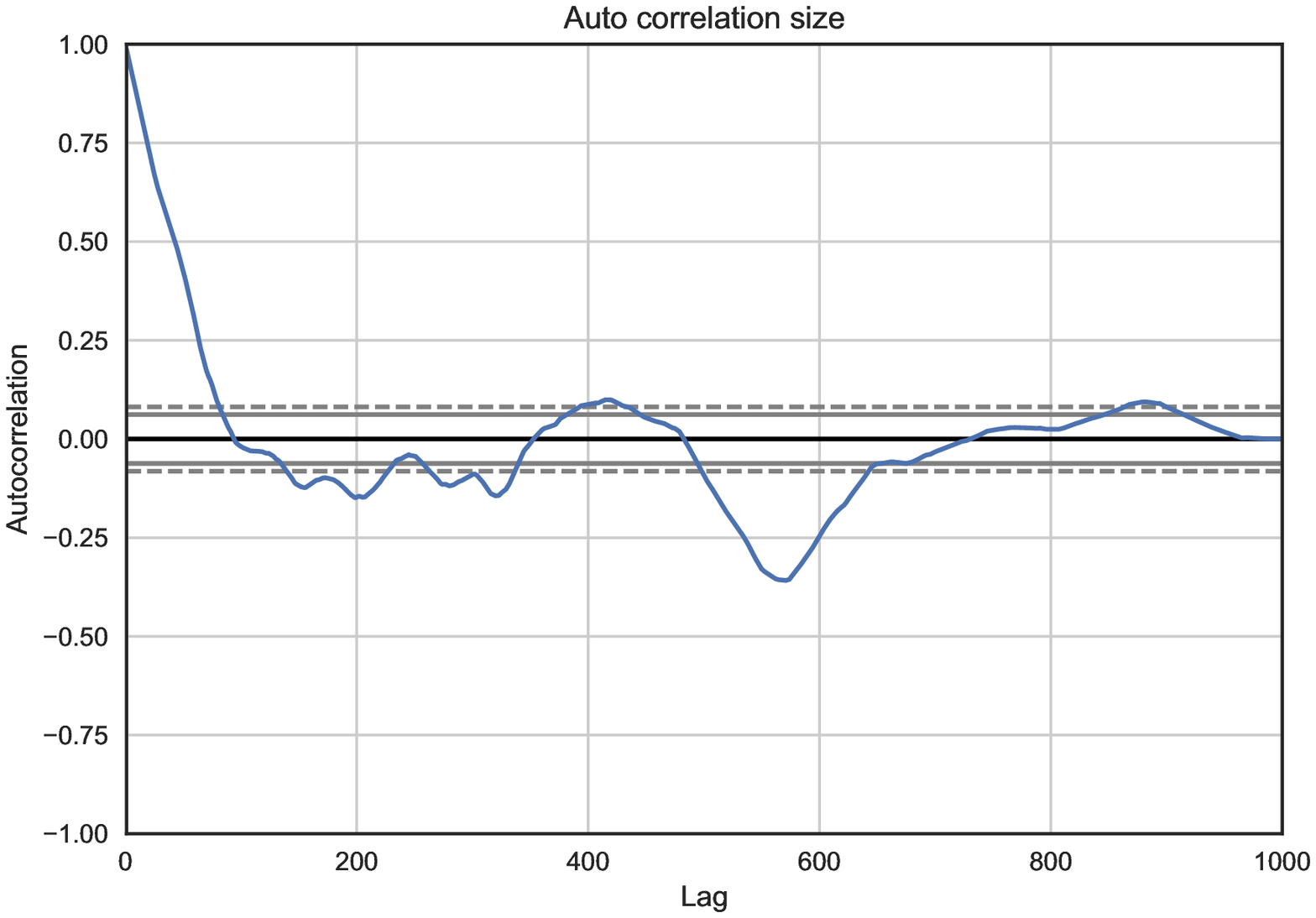}
    \label{fig:3_classes_heatmap1}
   \end{subfigure}

    \caption{Dataset A.}
    \label{fig:B}
\end{figure}

\begin{figure}
    \centering
    \begin{subfigure}[t]{0.33\textwidth}
        \includegraphics[width=\textwidth]{figures/2classes_p50_diff_graphs_adjmat_class_0.eps}
    \end{subfigure}
   ~
   \begin{subfigure}[t]{0.33\textwidth}
       \includegraphics[width=\textwidth]{{figures/2classes_p50_diff_graphs_adjmat_class_1}}
   \end{subfigure}

    \begin{subfigure}[t]{0.33\textwidth}
        \includegraphics[width=\textwidth]{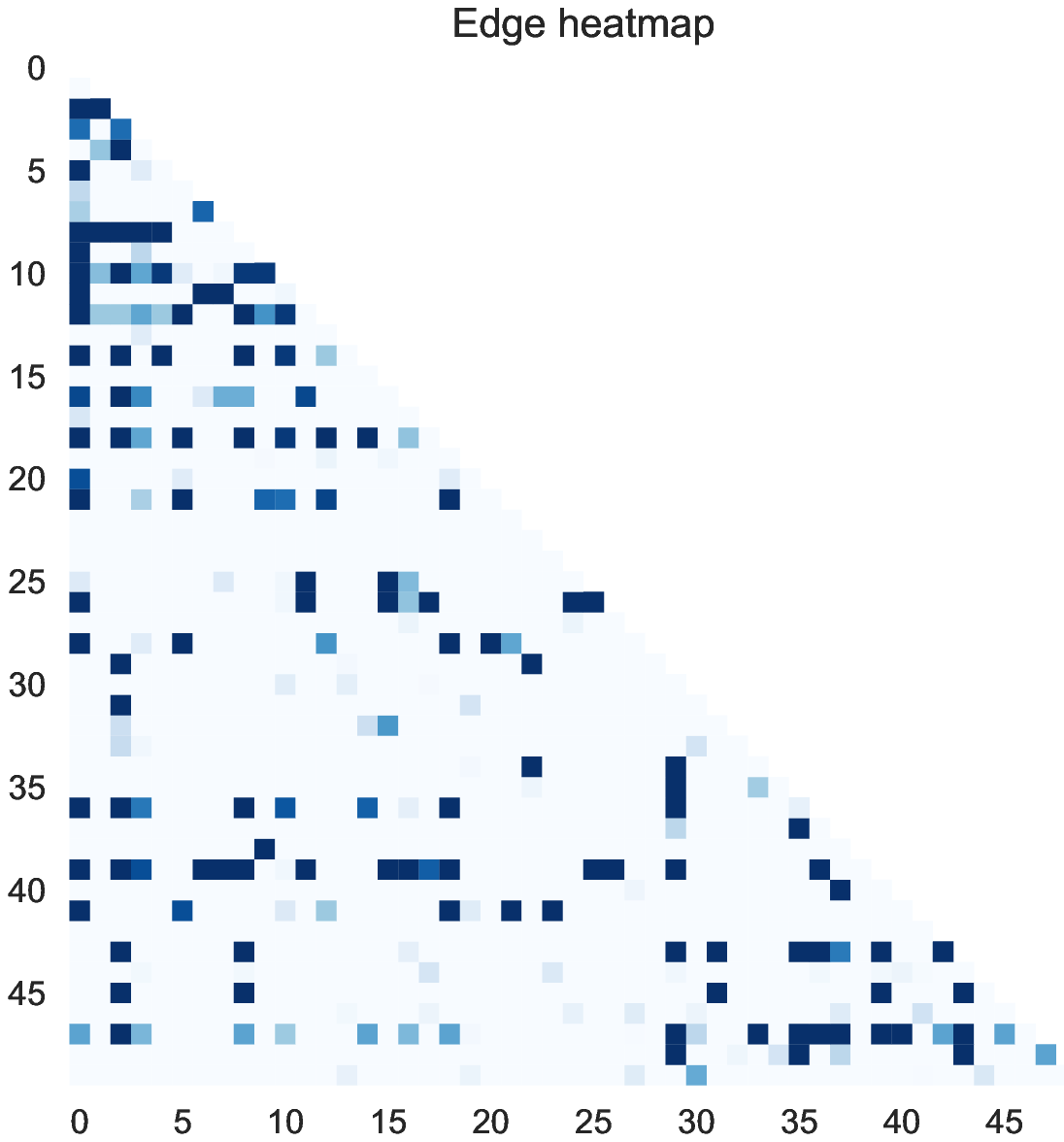}
    \end{subfigure}
   ~
   \begin{subfigure}[t]{0.33\textwidth}
       \includegraphics[width=\textwidth]{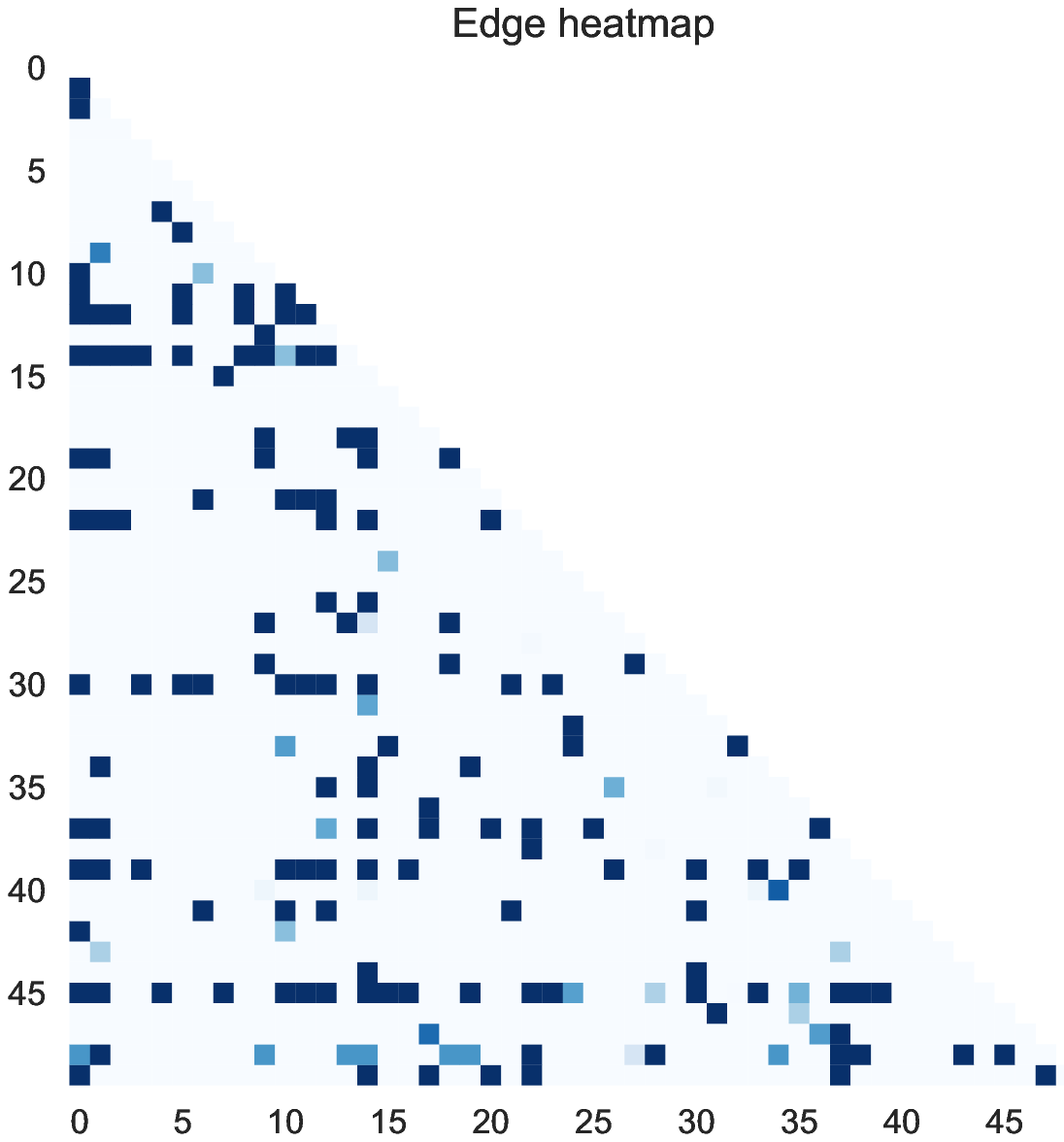}
   \end{subfigure}

    \begin{subfigure}[t]{0.33\textwidth}
        \includegraphics[width=\textwidth]{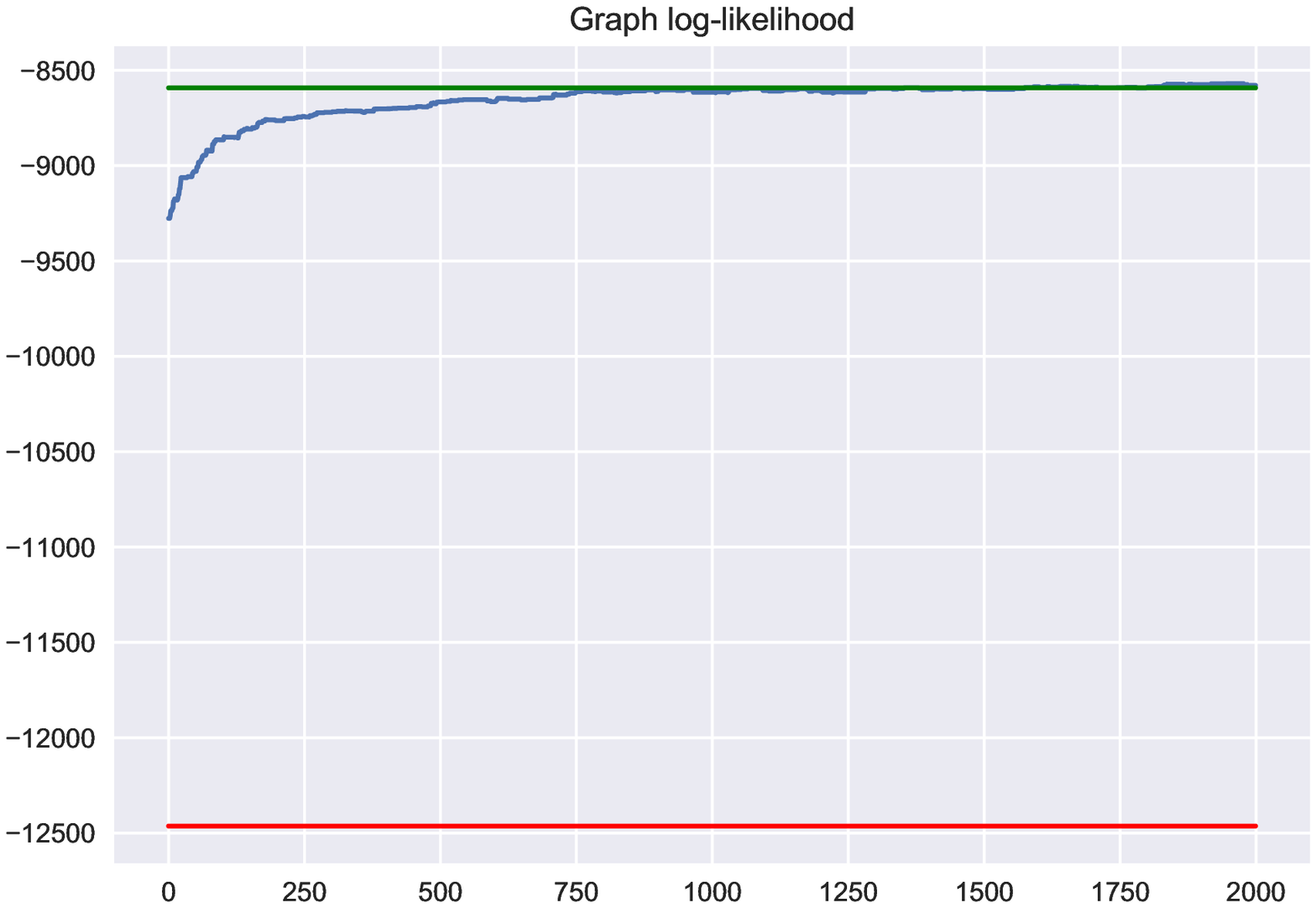}
    \end{subfigure}
   ~
   \begin{subfigure}[t]{0.33\textwidth}
       \includegraphics[width=\textwidth]{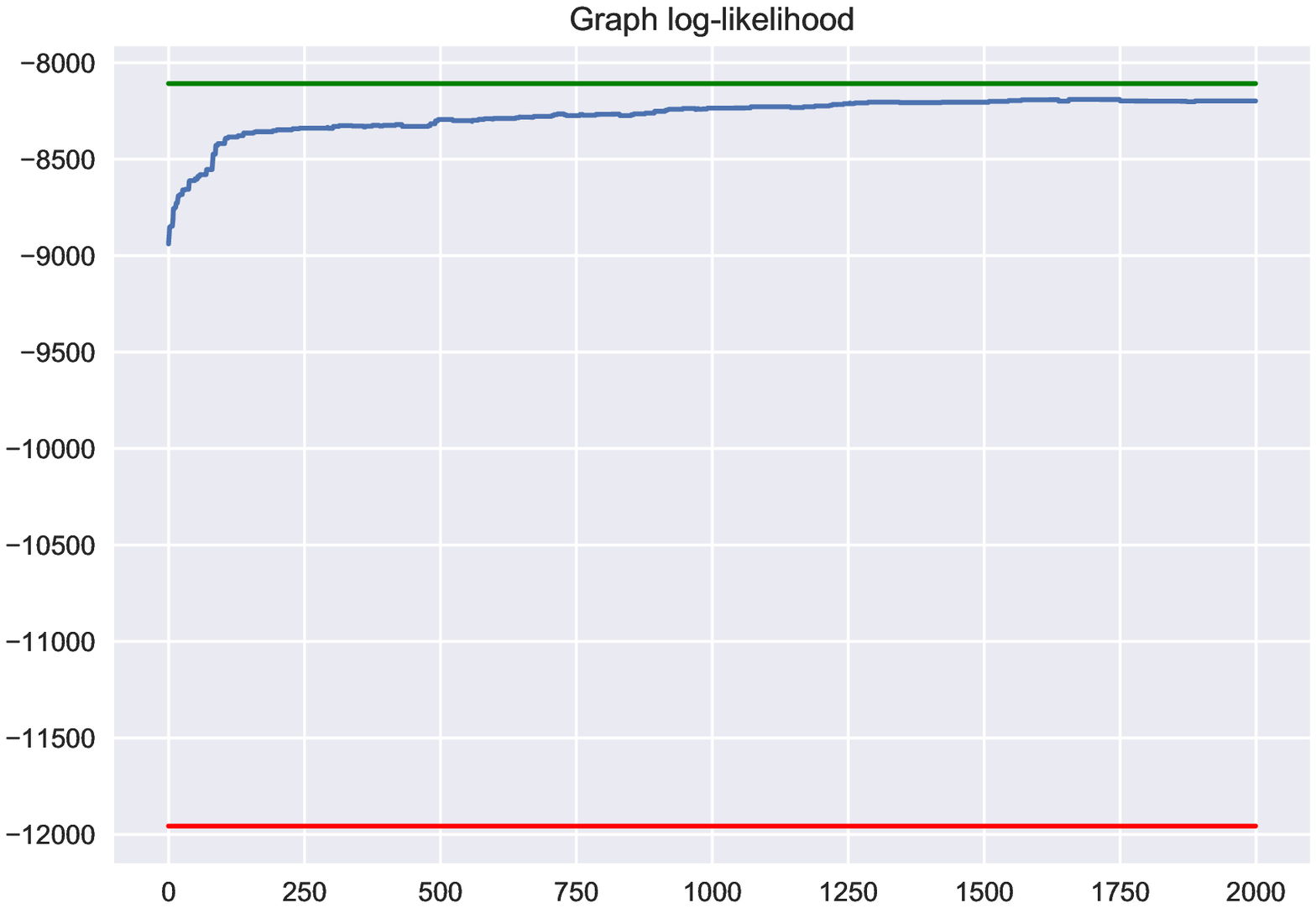}
    \label{fig:3_classes_heatmap1}
   \end{subfigure}

    \begin{subfigure}[t]{0.33\textwidth}
        \includegraphics[width=\textwidth]{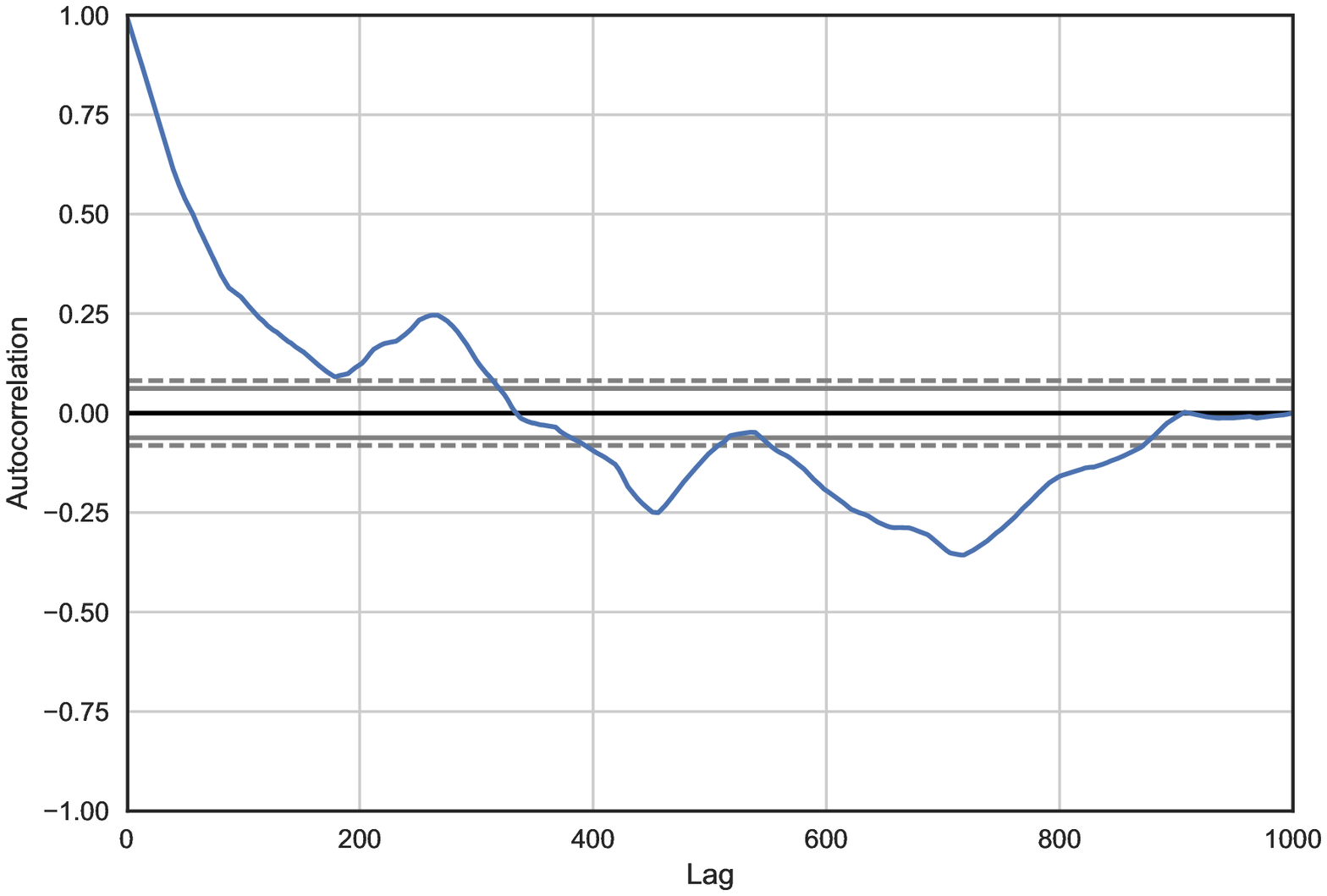}
        \label{fig:two_classes_same_graph_adjmat}
    \end{subfigure}
   ~
   \begin{subfigure}[t]{0.33\textwidth}
       \includegraphics[width=\textwidth]{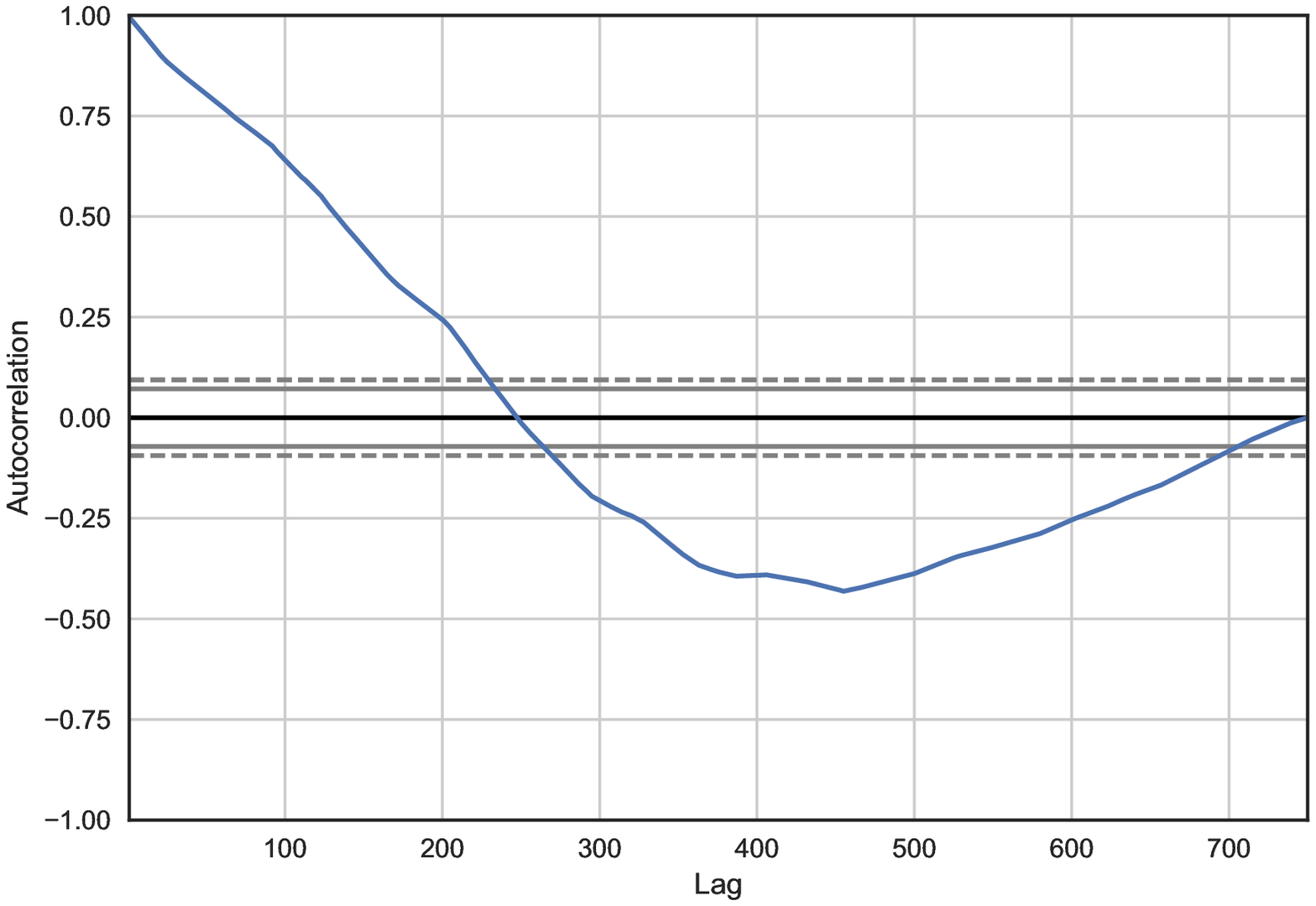}
    \label{fig:3_classes_heatmap1}
   \end{subfigure}

    \caption{Dataset A, $\testdatasize=300$.}
    \label{fig:B300}
\end{figure}

\begin{figure}
    \centering
    \begin{subfigure}[t]{0.33\textwidth}
        \includegraphics[width=\textwidth]{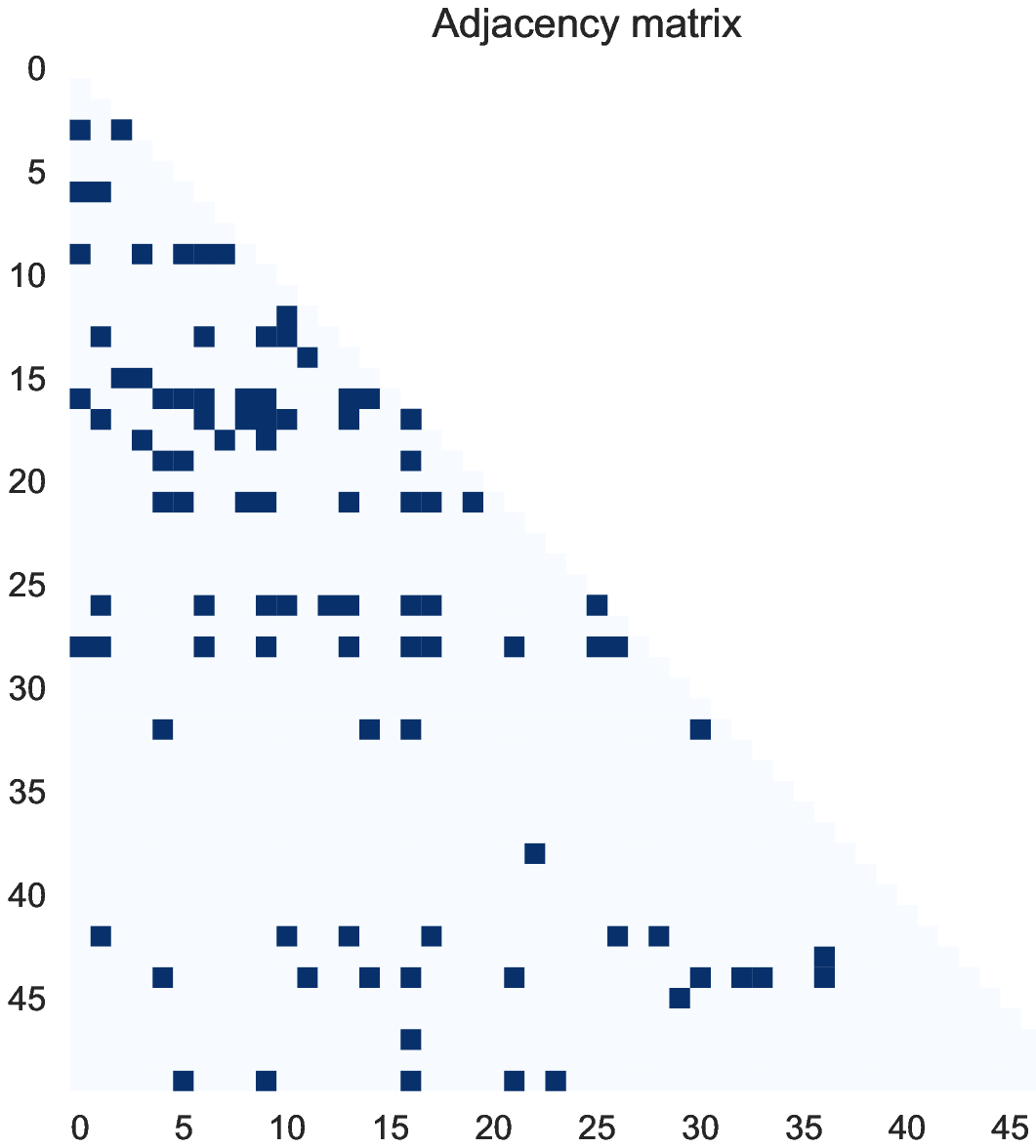}
    \end{subfigure}
   ~
   \begin{subfigure}[t]{0.33\textwidth}
       \includegraphics[width=\textwidth]{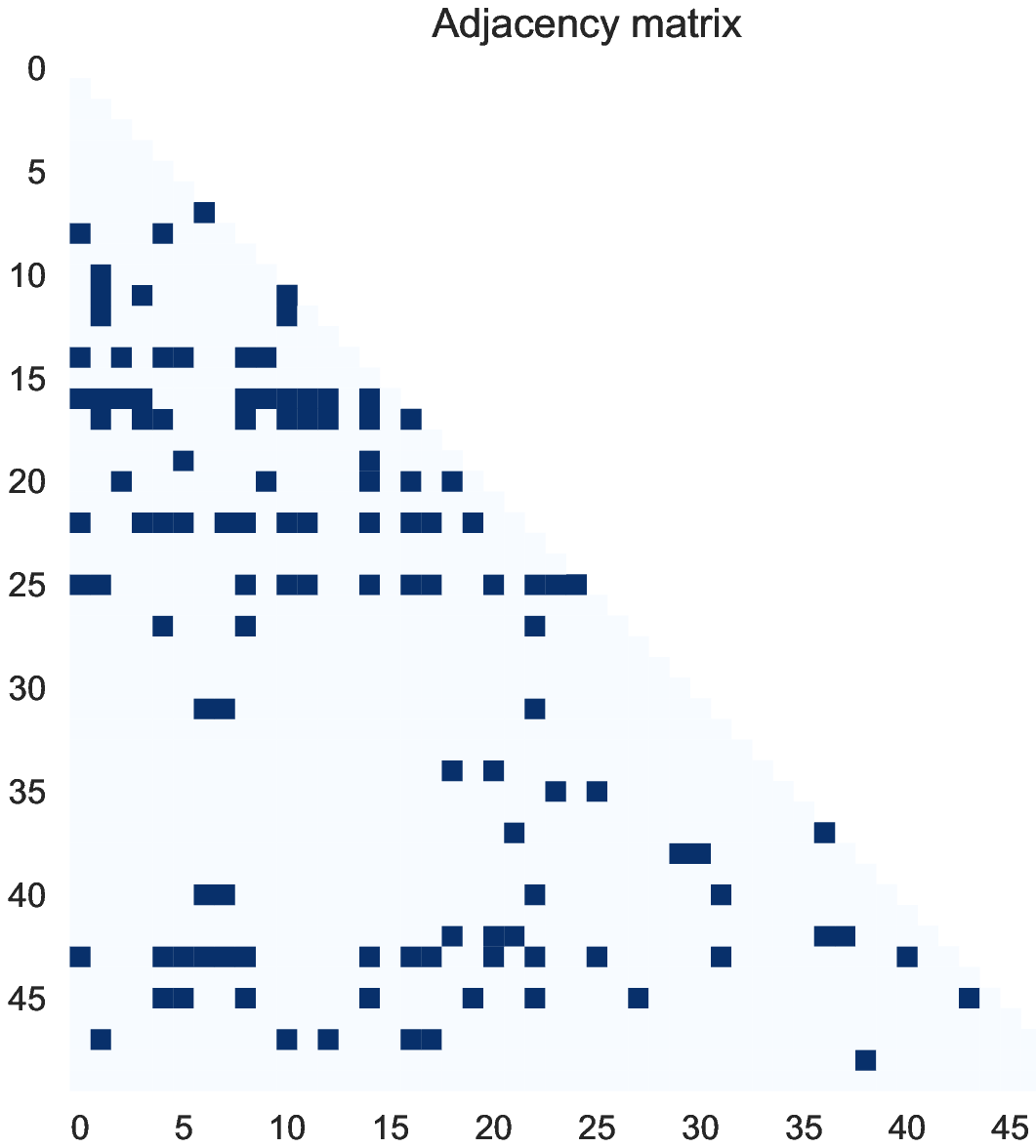}
   \end{subfigure}
    ~
   \begin{subfigure}[t]{0.33\textwidth}
       \includegraphics[width=\textwidth]{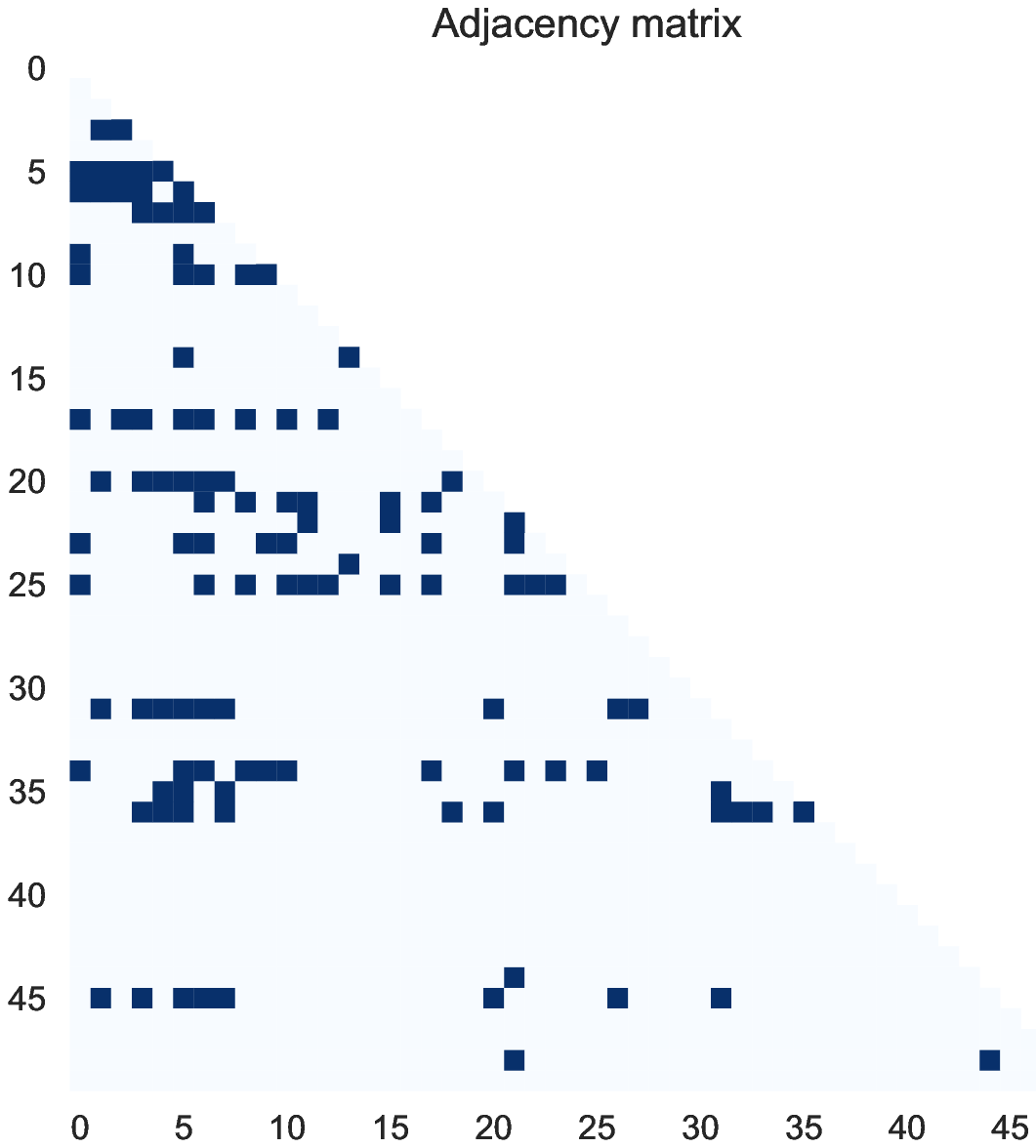}
   \end{subfigure}

    \begin{subfigure}[t]{0.33\textwidth}
        \includegraphics[width=\textwidth]{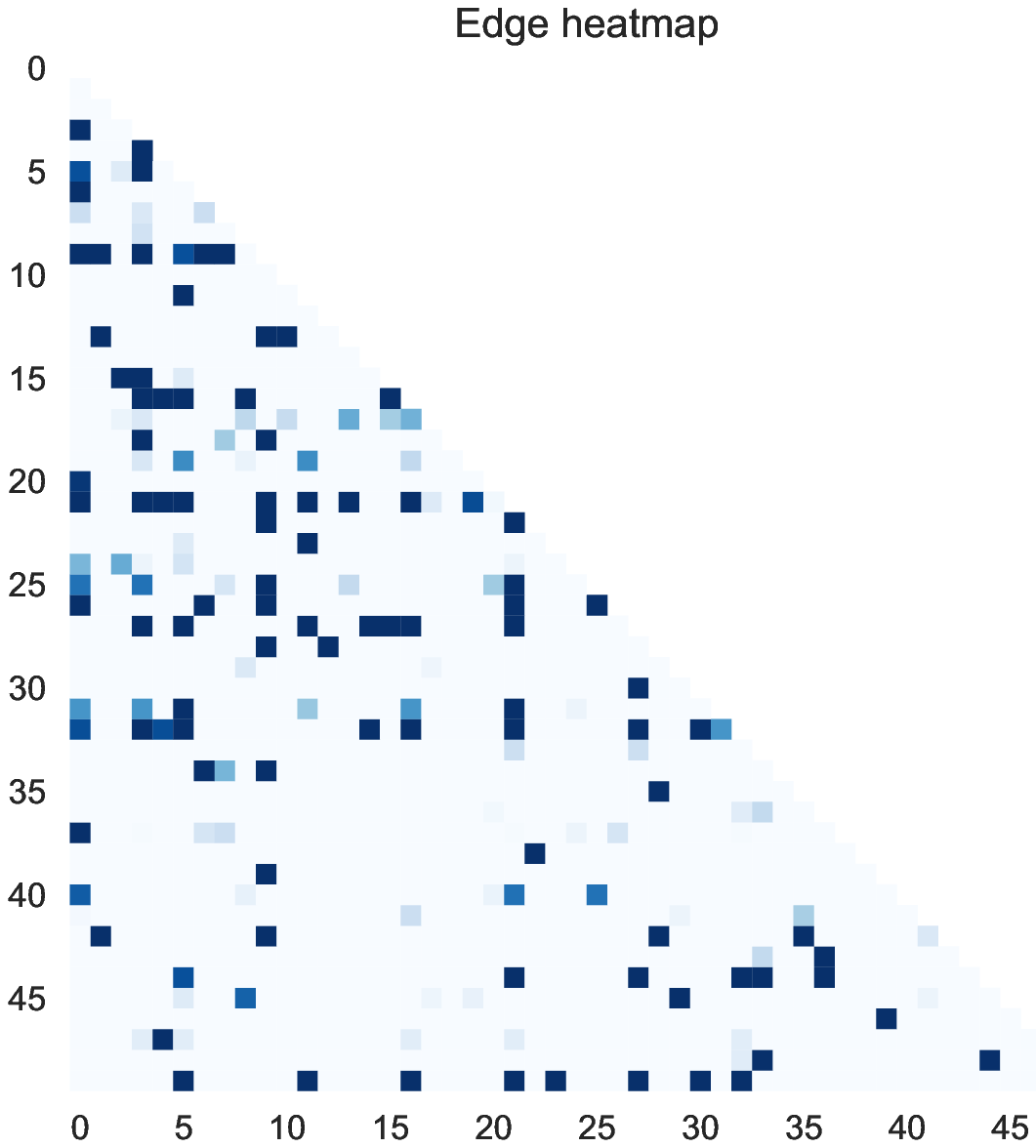}
    \end{subfigure}
   ~
   \begin{subfigure}[t]{0.33\textwidth}
       \includegraphics[width=\textwidth]{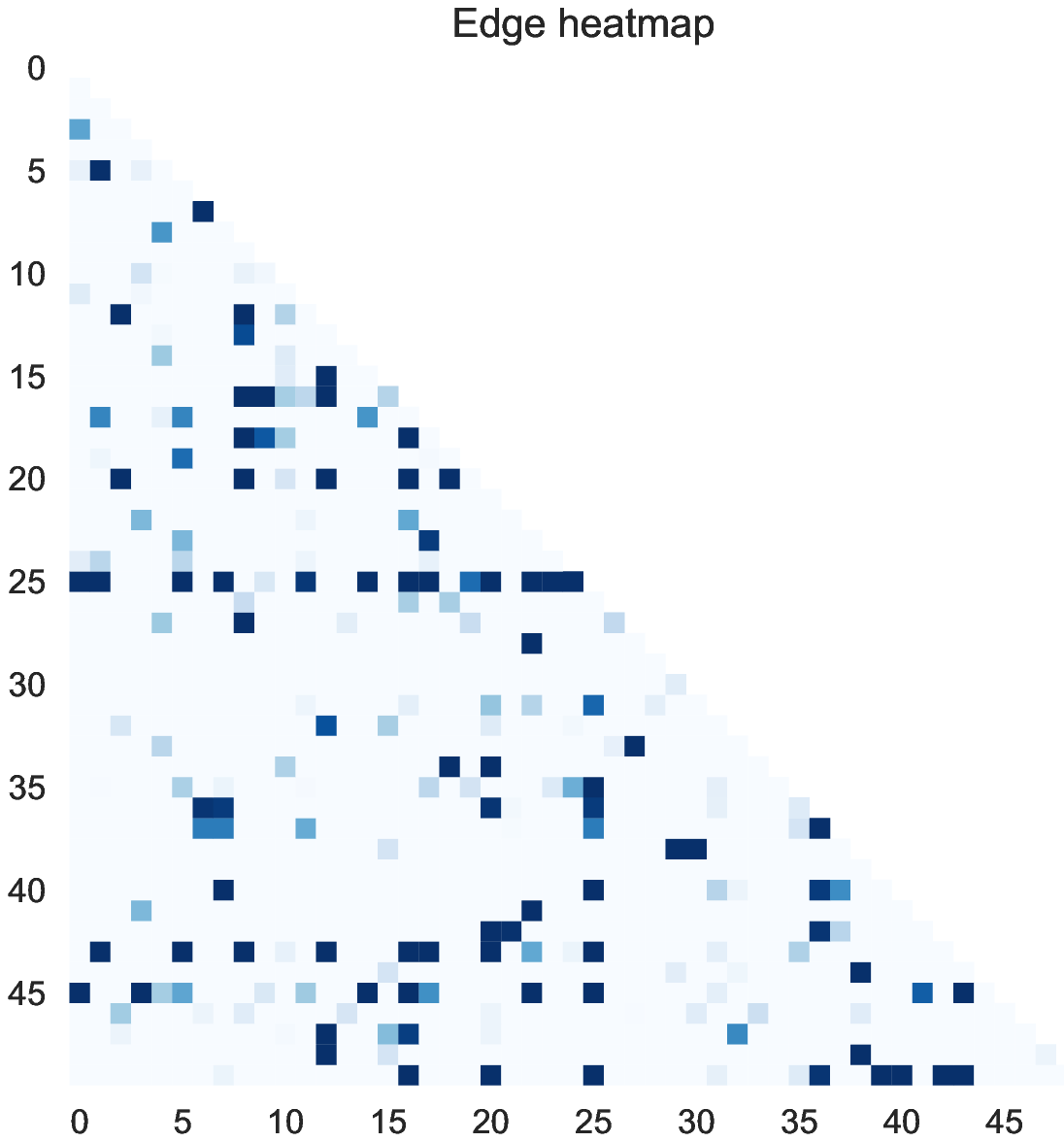}
   \end{subfigure}
    ~
   \begin{subfigure}[t]{0.33\textwidth}
       \includegraphics[width=\textwidth]{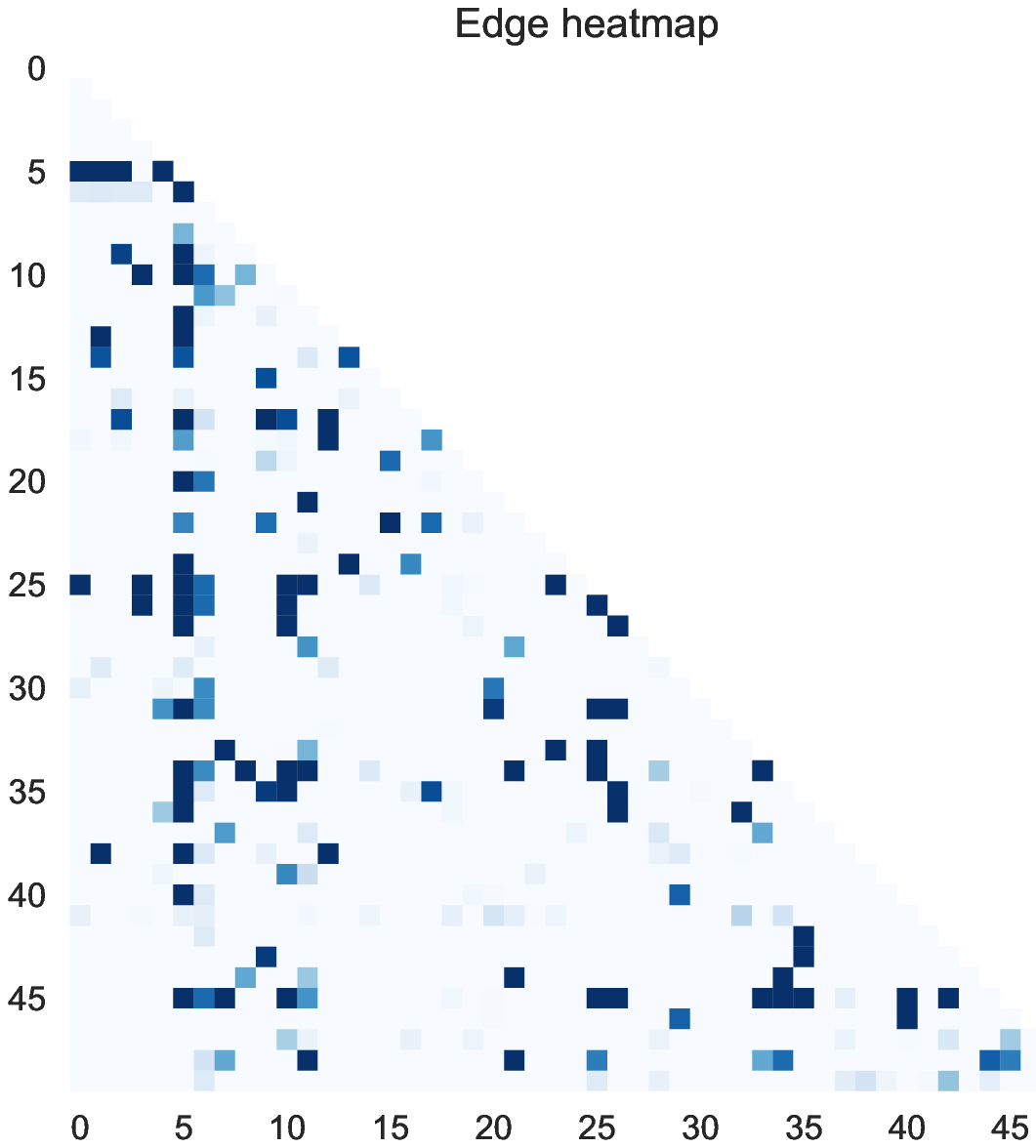}
   \end{subfigure}

    \begin{subfigure}[t]{0.33\textwidth}
        \includegraphics[width=\textwidth]{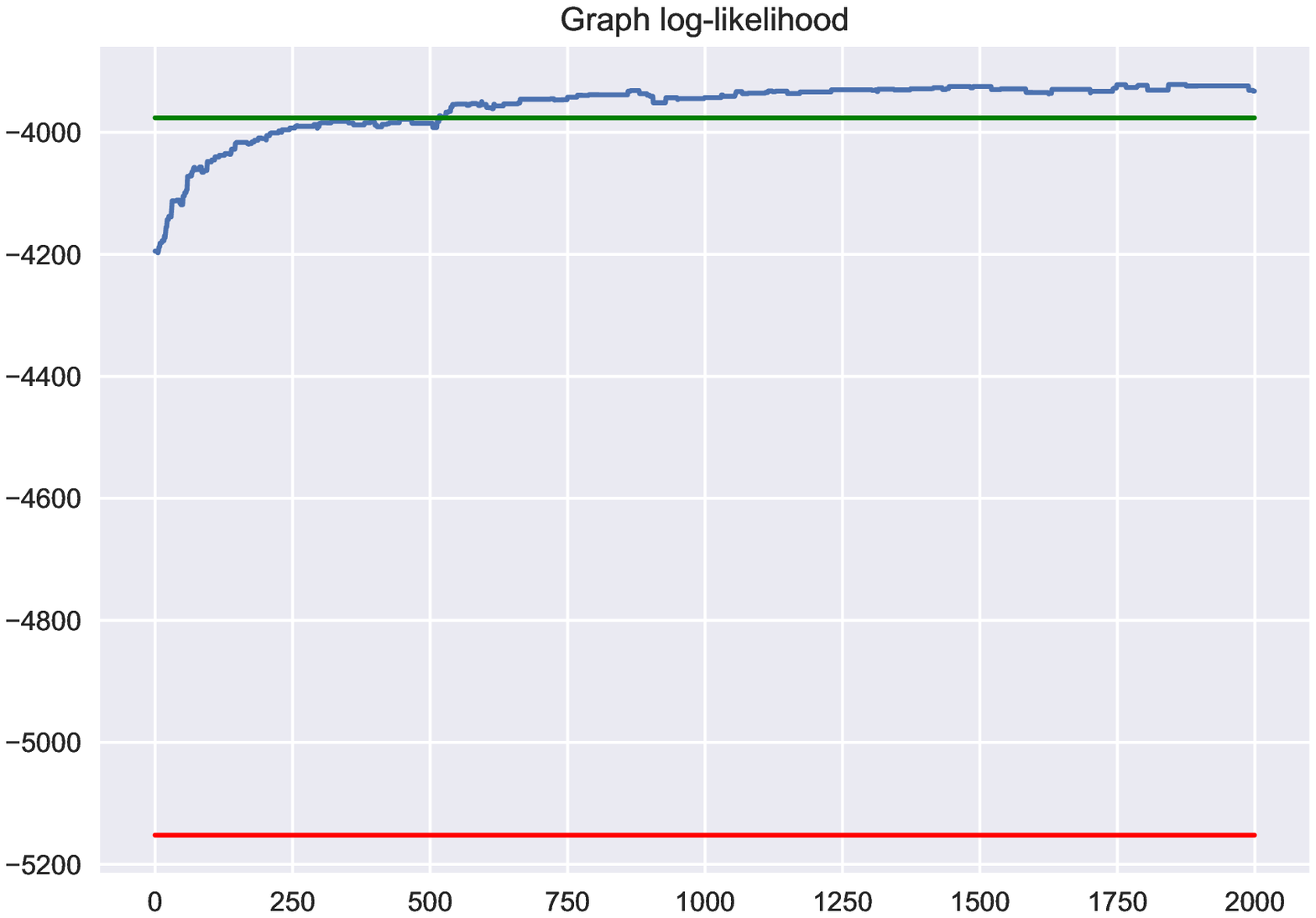}
    \end{subfigure}
   ~
   \begin{subfigure}[t]{0.33\textwidth}
       \includegraphics[width=\textwidth]{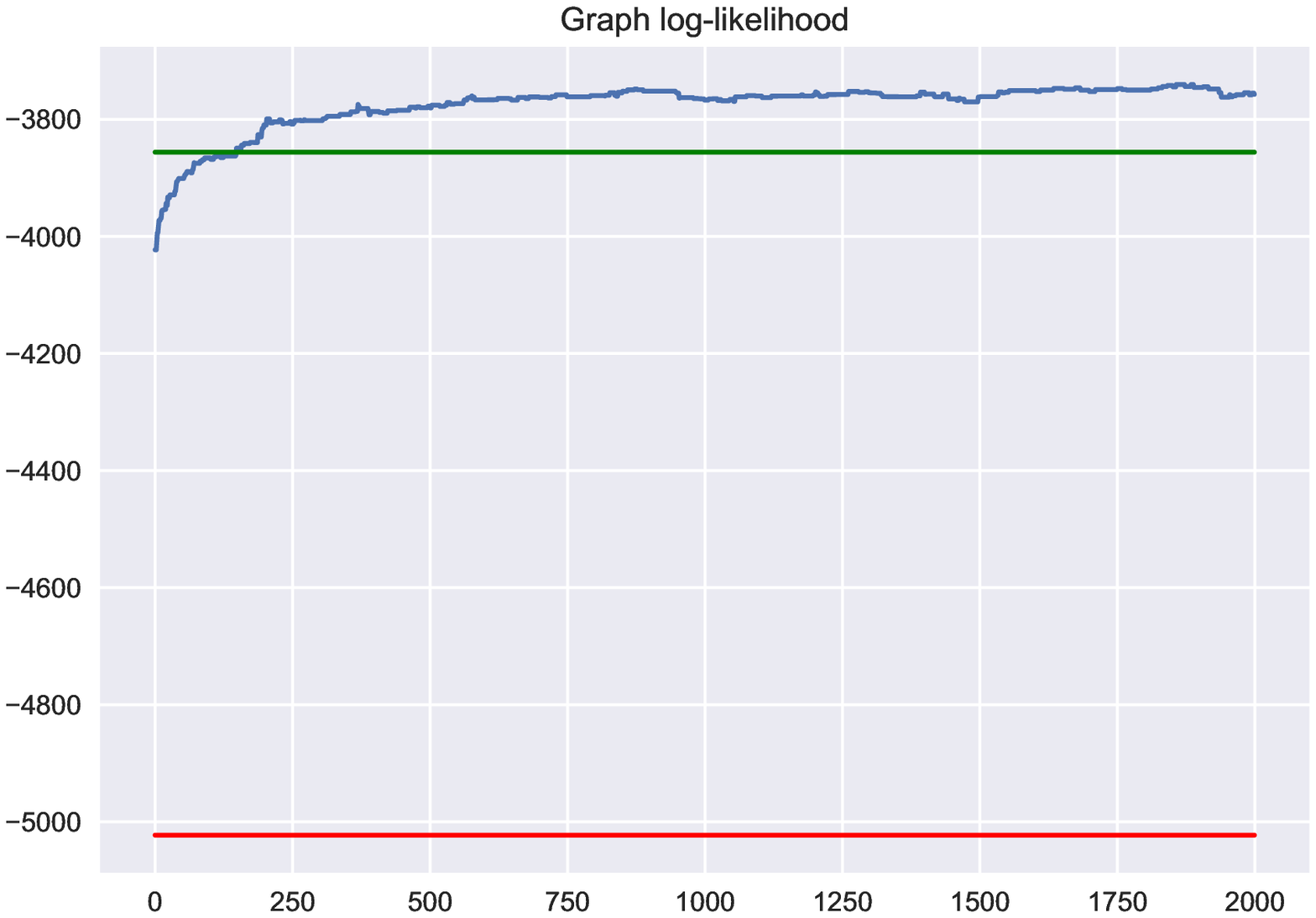}
    \label{fig:3_classes_heatmap1}
   \end{subfigure}
    ~
   \begin{subfigure}[t]{0.33\textwidth}
       \includegraphics[width=\textwidth]{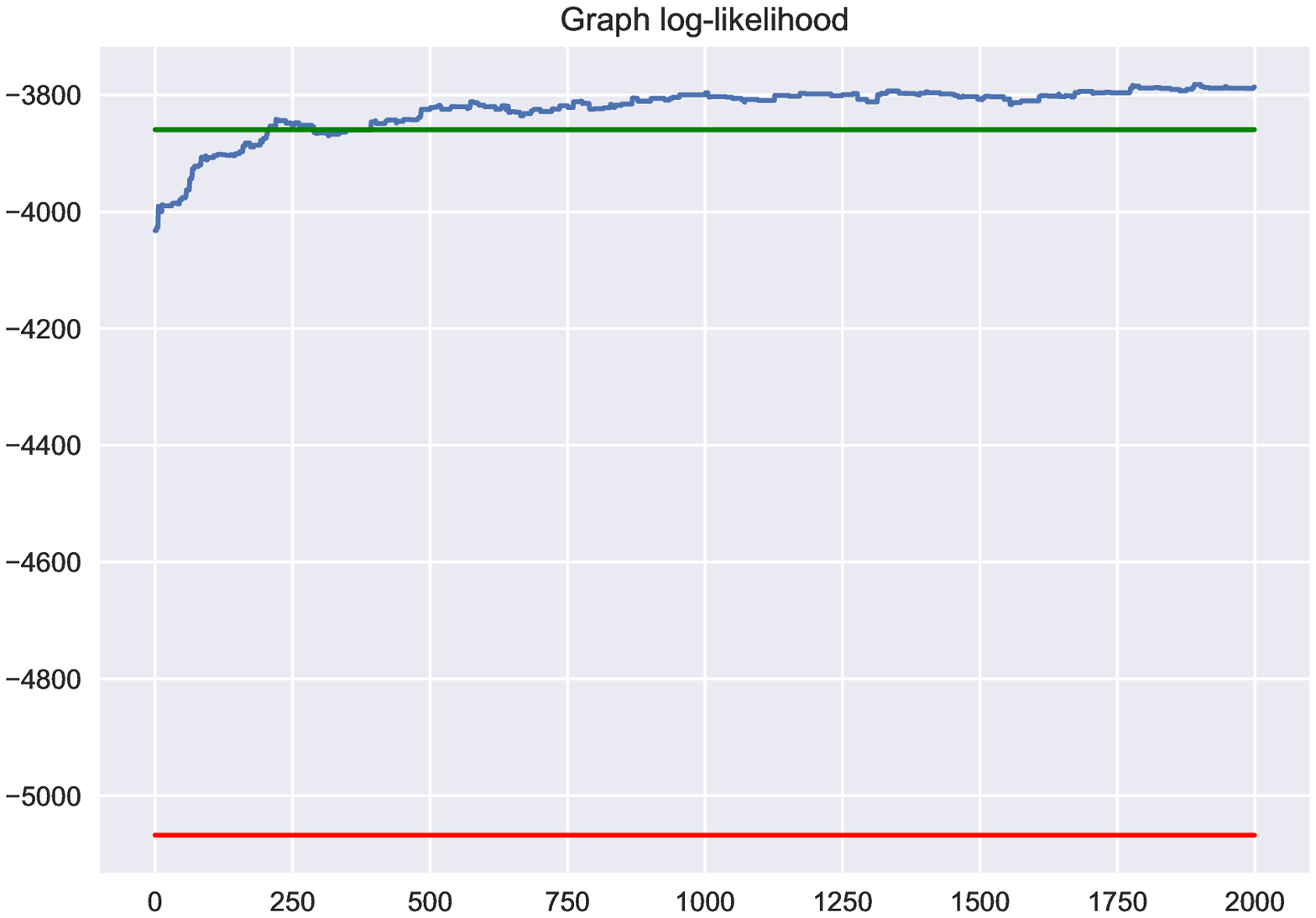}
    \label{fig:two_classes_with_same_graph_heatmap}
   \end{subfigure}

    \begin{subfigure}[t]{0.33\textwidth}
        \includegraphics[width=\textwidth]{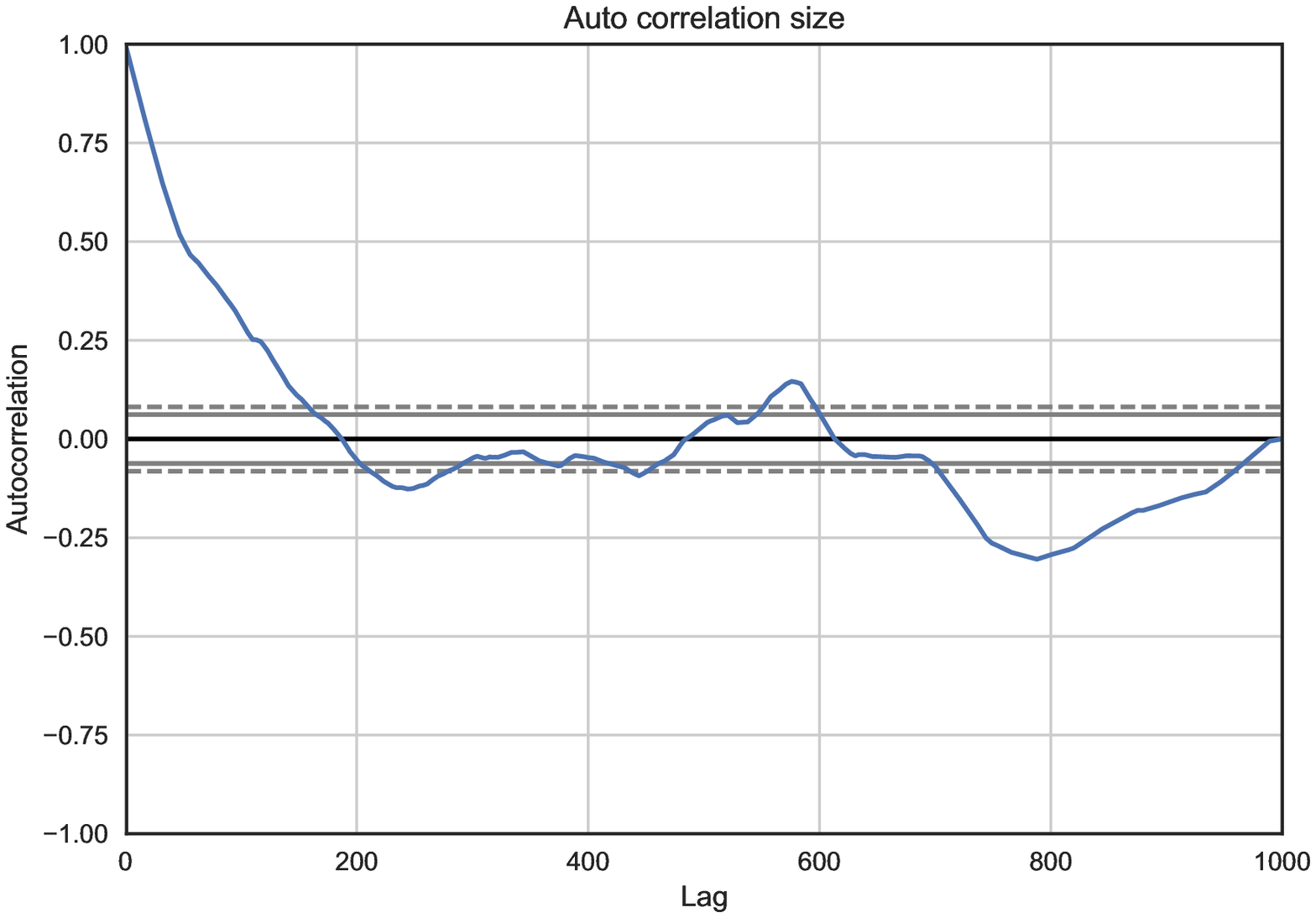}
        \label{fig:two_classes_same_graph_adjmat}
    \end{subfigure}
   ~
   \begin{subfigure}[t]{0.33\textwidth}
       \includegraphics[width=\textwidth]{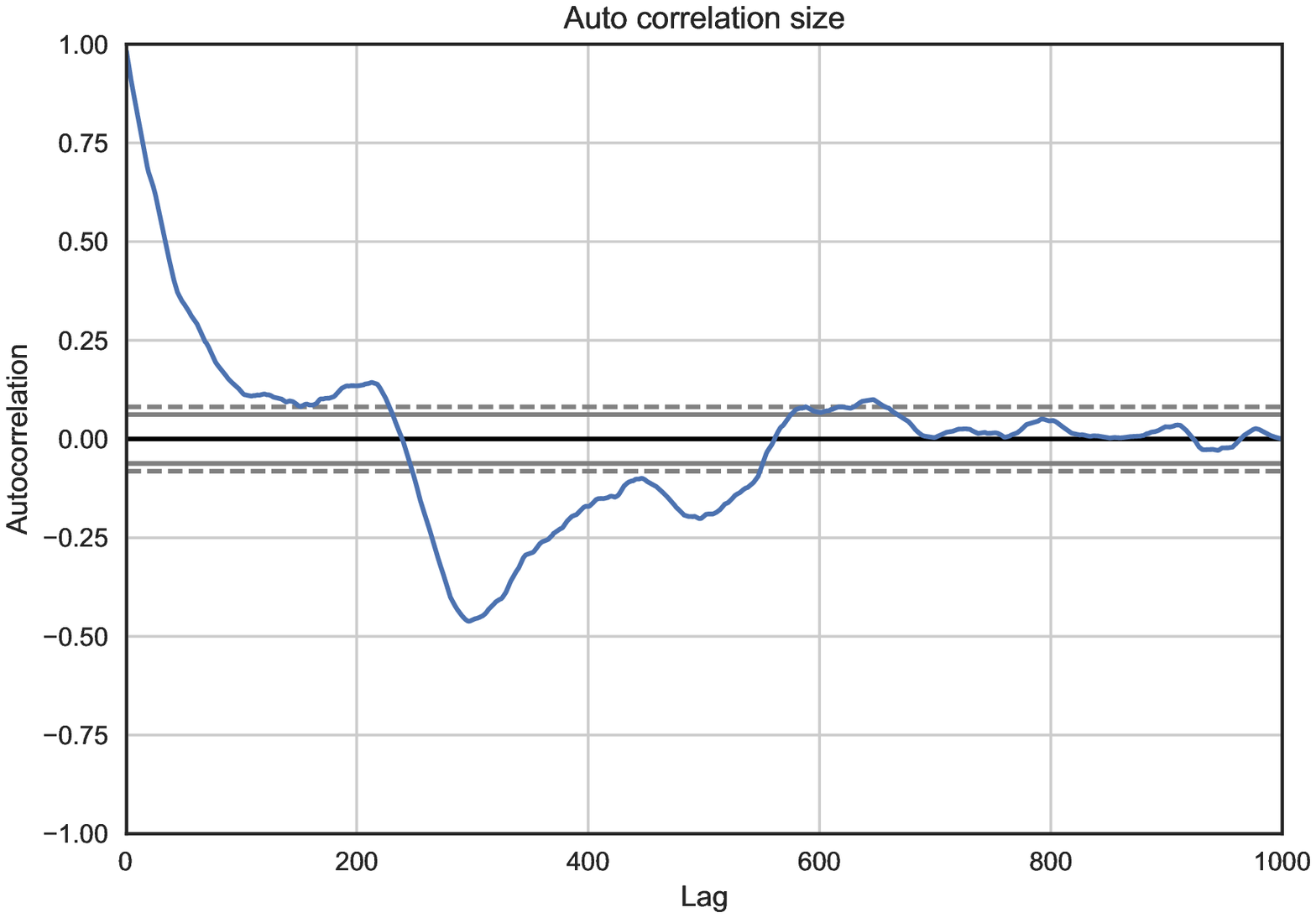}
    \label{fig:3_classes_heatmap1}
   \end{subfigure}
    ~
   \begin{subfigure}[t]{0.33\textwidth}
       \includegraphics[width=\textwidth]{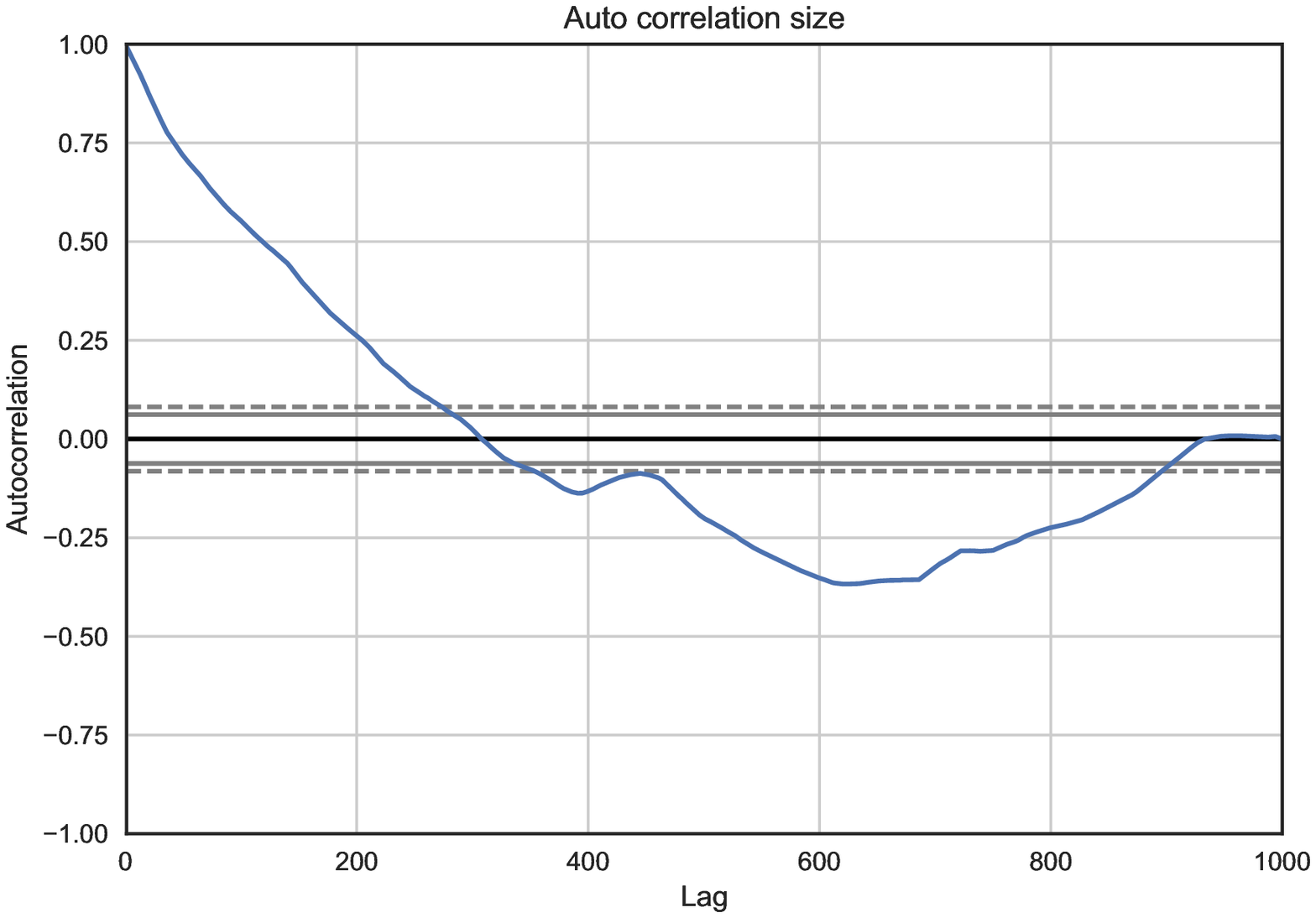}
    \label{fig:two_classes_with_same_graph_heatmap}
   \end{subfigure}
    \caption{Dataset B.}
    \label{fig:C}
\end{figure}

\begin{figure}
    \centering
    \begin{subfigure}[b]{0.33\textwidth}
        \includegraphics[width=\textwidth]{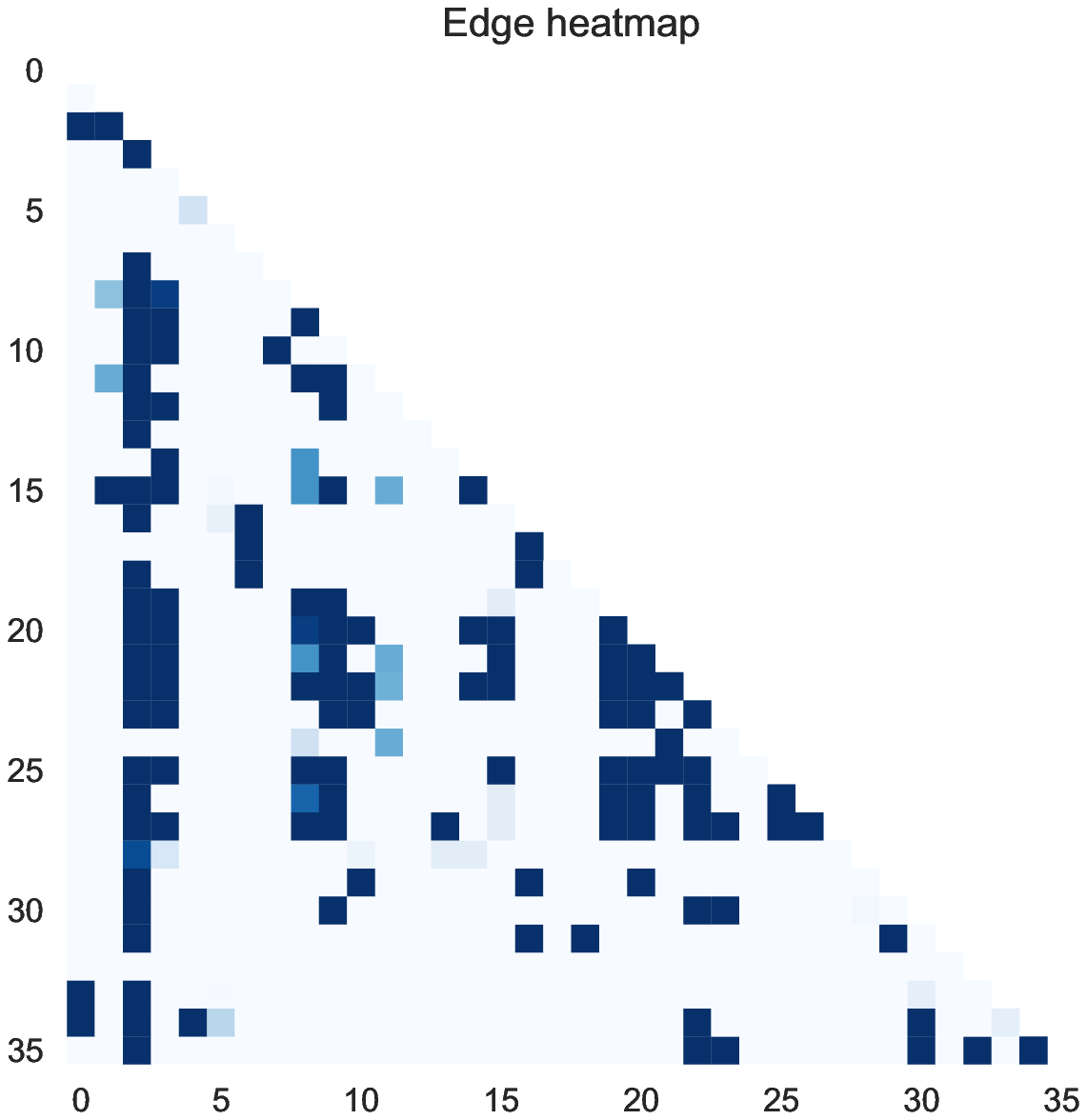}
    \end{subfigure}
    ~
    \begin{subfigure}[b]{0.33\textwidth}
        \includegraphics[width=\textwidth]{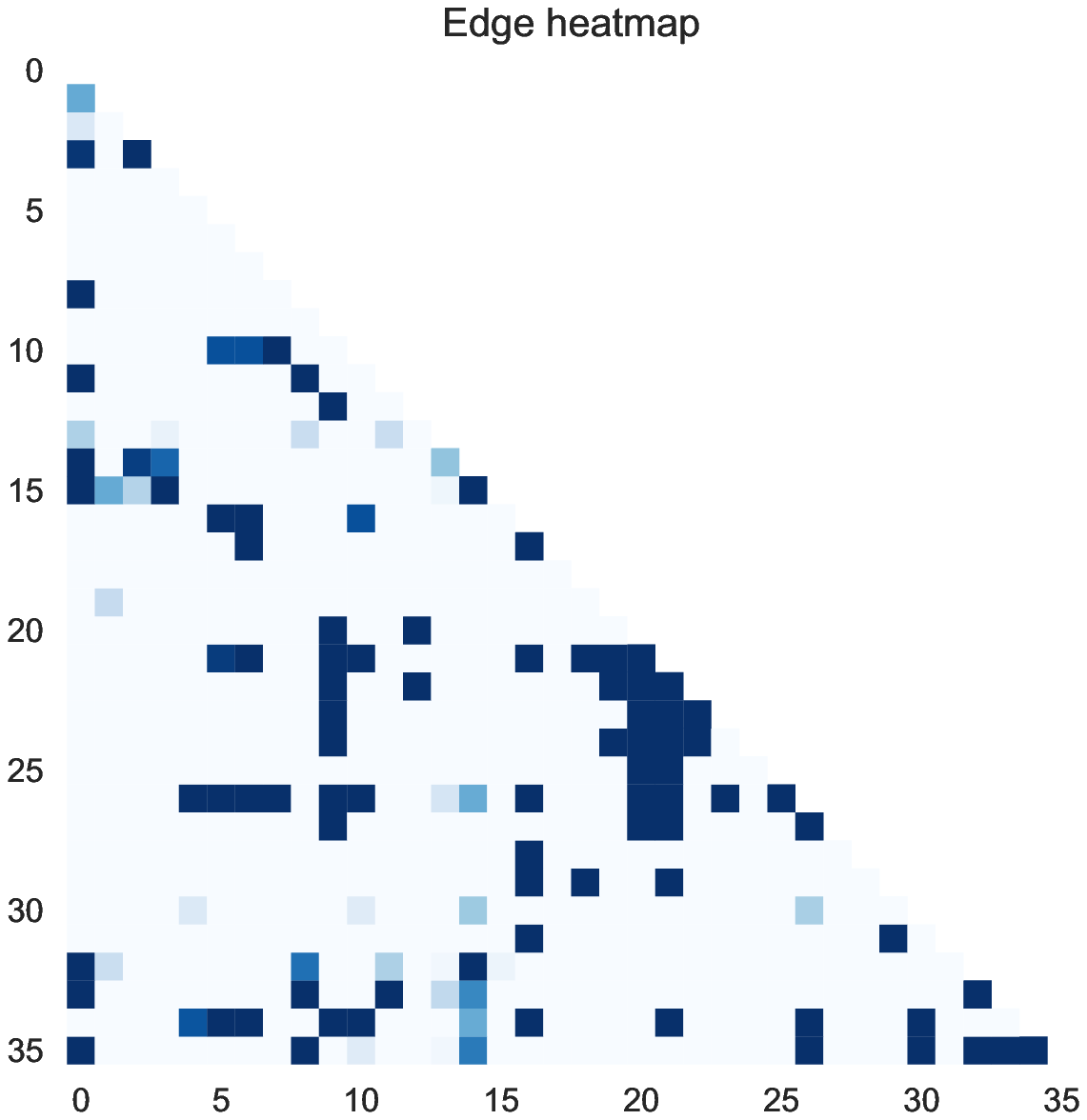}
    \end{subfigure}

    \begin{subfigure}[t]{0.33\textwidth}
        \includegraphics[width=\textwidth]{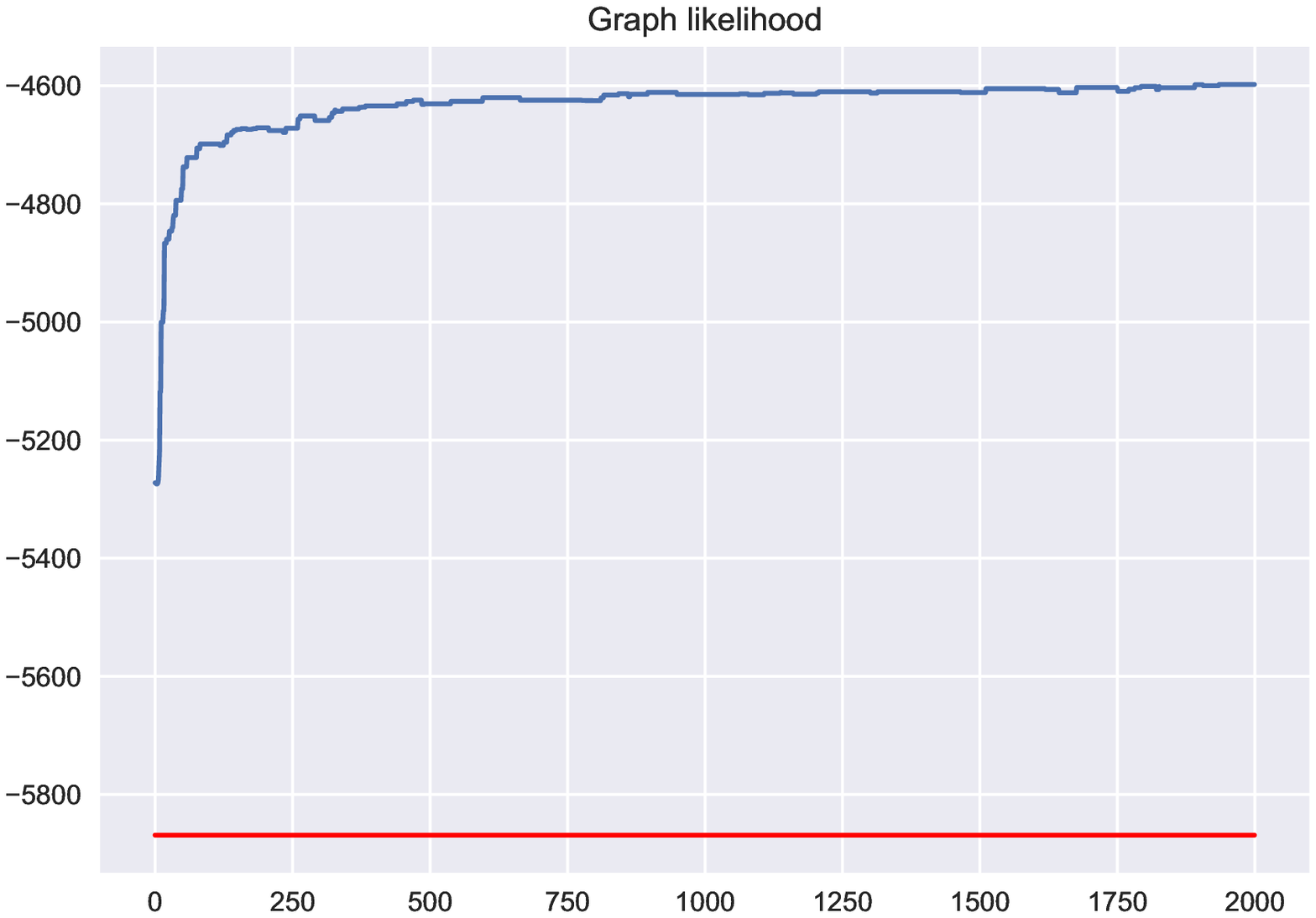}
    \end{subfigure}
    ~
    \begin{subfigure}[b]{0.33\textwidth}
        \includegraphics[width=\textwidth]{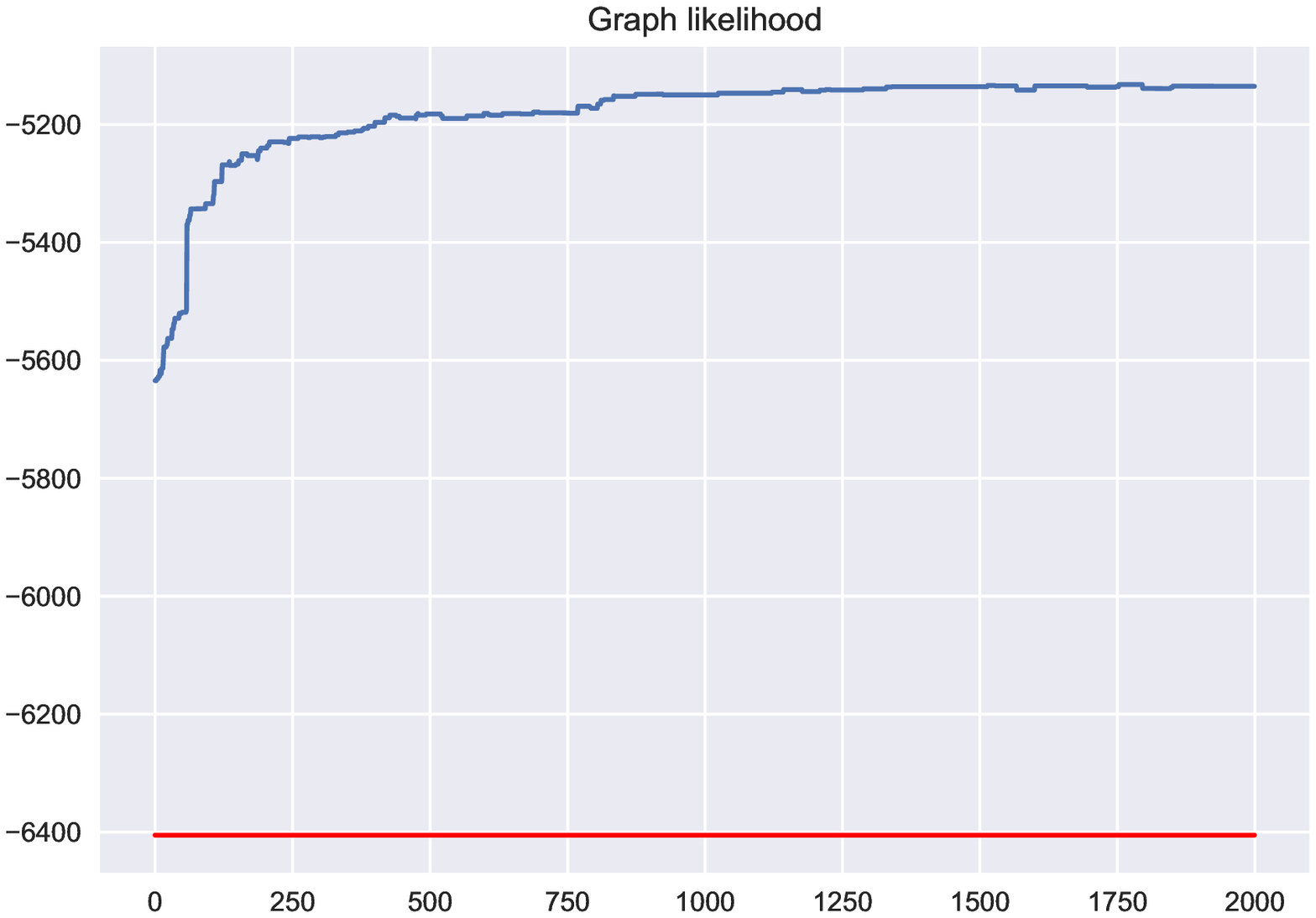}
    \end{subfigure}

    \begin{subfigure}[t]{0.33\textwidth}
        \includegraphics[width=\textwidth]{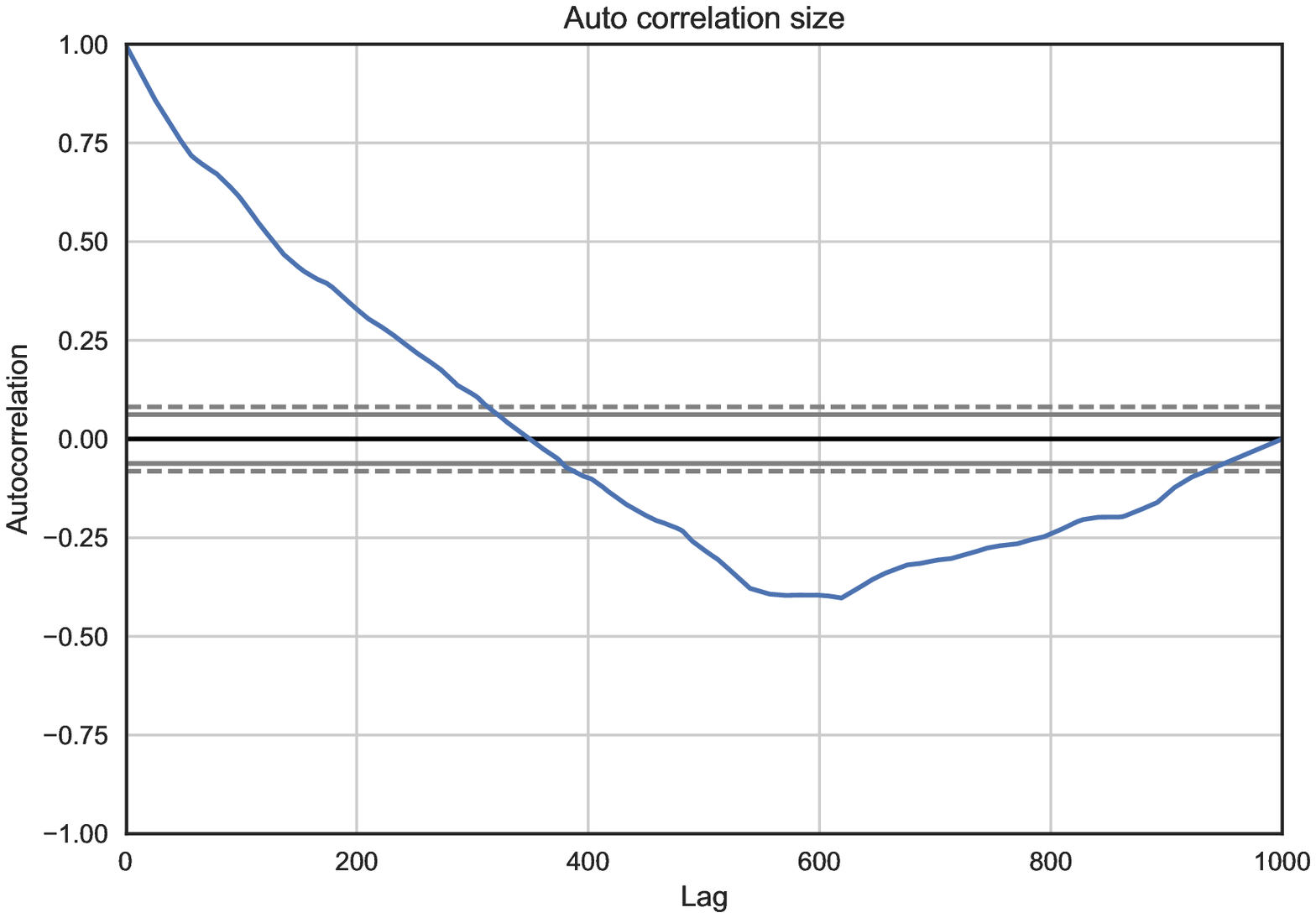}
    \end{subfigure}
    ~
    \begin{subfigure}[t]{0.33\textwidth}
        \includegraphics[width=\textwidth]{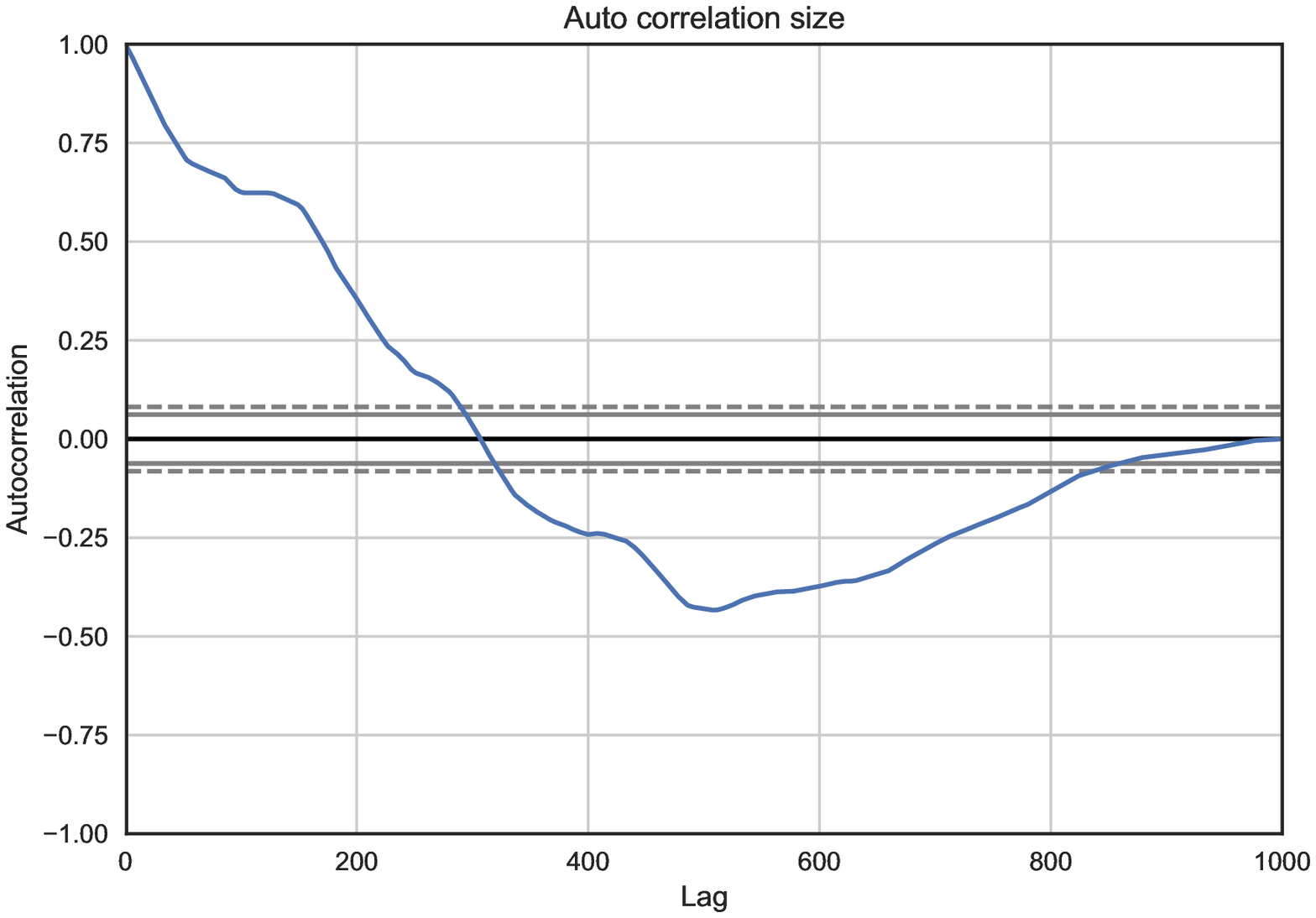}
    \end{subfigure}
    \caption{Walking/Nordic walking.}
    \label{fig:walking}
\end{figure}


\subsection{Realistic data: walking and Nordic walking} 
\label{sub:real_data_example}
This dataset was first introduced in \cite{reiss2012creating,reiss2012introducing} where eighteen different physical activities were measured on eight subjects.
The data was collected at a rate of 100Hz using three wireless IMUs (inertial measurement units) placed on strategic location at the body;
one IMU over the wrist on the dominant arm, one IMU on the chest and one IMU on the dominant side's ankle.
The measurements were then summarized in $52$-dimensional vectors.
However, some of these measurements were categorical and thus dropped here yielding a reduction in dimension to $p=36$.
For this study, we selected the subject  referred to as \emph{Subject1}  and study the two types of activities, \emph{walking} to \emph{Nordic walking} thereby determining a two-class classification problem.
Walking was performed outside
with a speed of 4-6km/h, according to what was suitable for the subject.
Nordic walking was performed on asphaltic terrain, using asphalt pads on the walking poles.

In total the full dataset contains $\approx12000$ data vectors, which enabled us to generate classification data with a similar strategy as in the synthetic data by sampling without replacement 10 replicates of 50 test and $n=40$ training samples from the full dataset for each class.
Consequently, we ignore the possible time dependence between the samples which we could expect.
We set the hyper parameters $\niwnnu, \niwiwtau, \niwiwalpha$  and $\niwnmu_\class$ analogously to the synthetic data examples above.
The results from the PG sampler for the two classes are found in Figure \ref{fig:walking}.
From the log-likelihood plots, we deduce that the PG sampler seem to have reach a stationary distribution after about 1000 iterations.
The results of correct classification are found in Table \ref{tab:classification-results}.
We see that also in this realistic scenario where the underlying graph structure is unknown, our suggested particle Gibbs BMA classifier outperforms the standard predictive classifier as well as the out-of-the-box classifiers.


\section{Conclusion} 
\label{sec:summary}
We have constructed a theoretical framework for Bayesian predictive classification where the model  uncertainty is taken into account, by regarding the graph structure underlying the data as random and by using the BMA approach. 
Specifically, at the averaging step of the BMA, each predictive posterior for a fresh  $\testdatapnt^{\testdatasize+1}$ was weighted by the graph posterior $p(\graph[\class] \mid \testdataset[\testdatasize_{\class}])$.

Computing $p(\graph[\class] \mid \testdataset[\testdatasize_{\class}])$, except for toy problems, remains a hard task even for moderate $\p$.
Our solution to this problem was to use an approximation obtained by the particle Gibbs sampler suggested in \cite{nontempspmcmc}.

As a part of the graph posterior specification, we have derived the hyper normal inverse Wishart law and showed that it has the strong hyper Markov property.
We also showed that the posterior predictive distribution for $\testdatapnt^{\testdatasize+1}$, obtained by marginalising out the parameters following the hyper normal inverse Wishart law, has the non-central graph \emph{t}-distribution and concluded that this distribution is Markov.

The results of the numerical study demonstrated that the particle Gibbs BMA approach to predictive classification indeed improved performance accuracy of the standard Bayesian predictive classifier (corresponding to a point mass at the complete graph) for four synthetic datasets as well as for the realistic walking/Nordic walking dataset, where the true underlying graph structure was unknown.

A comparison with 11 standard out-of-the-box classifiers was also performed, showing that our suggested BMA predictive classifier has systematically superior performance.

Though promising results of the suggested methodology were obtained, a deeper theoretical analysis of the computational complexity along with further numerical validation is needed; for example other more flexible graph priors for both the graph and the parameters are interesting subjects for further study, which is of special importance for applications in high-dimensional problems.

\appendix
\section{} 
\label{appendix:a}
\subsection{Derivation of \eqref{eq:gnw}} 
\label{sec:gnw}
Since $\covmat \sim \iwishsymb_{\graph}(\niwiwtau, \niwiwalpha)$, it holds that 
    $\sum_{\jtnode \in \jtnodeset} [\covmat_{\jtnode}]^0 - \sum_{\sep \in \sepset} [\covmat_{\sep}]^0 =\covmat$,
which implies that both the determinant and the trace in decomposes in the expression below (see e.g. \cite{lauritzen1996})
\begin{align*}
\normsymb (\meanvec | \niwnmu ,\frac{1}{\niwnnu}\covmat) \iwishsymb_{\graph}(\covmat | \niwiwtau, \niwiwalpha)
& =
(2\pi) ^ {-\frac{\p}{2}}| \frac{1}{\niwnnu}\covmat |^{-1/2}
    \exp\{-\frac{\niwnnu}{2} tr((\meanvec -\niwnmu)(\meanvec -\niwnmu)'\covmat^{-1})\big\}  \times  \iwishsymb_{\graph}(\covmat \mid \niwiwtau, \niwiwalpha)
	\\
& =
	\frac{\prod_{\jtnode\in \jtnodeset} \normsymb (\meanvec[\jtnode] | \niwnmu_{\jtnode} ,\frac{1}{\niwnnu}\covmat_{\jtnode})}{\prod_{\sep \in \sepset} \normsymb (\meanvec[\sep] | \niwnmu_{\sep} ,\frac{1}{\niwnnu}\covmat_{\sep})}
  \times  \frac{\prod_{\jtnode \in \jtnodeset} \iwishsymb(\covmat_{\jtnode} \mid \niwiwtau_{\jtnode}, \niwiwalpha) }
          {\prod_{\sep \in \sepset} \iwishsymb(\covmat_{\sep} \mid  \niwiwtau_{\sep}, \niwiwalpha)}
	\\
    &=\frac{\prod_{\jtnode \in \jtnodeset} \giwsymb(\meanvec_{\jtnode},\covmat_{\jtnode} \mid \niwnmu_{\jtnode},\niwnnu, \niwiwtau_{\jtnode}, \niwiwalpha) }
          {\prod_{\sep \in \sepset} \giwsymb(\meanvec_{\sep},\covmat_{\sep} \mid \niwnmu_{\sep},\niwnnu, \niwiwtau_{\sep}, \niwiwalpha)}
	\\
	&=\giwsymb_{\graph}(\meanvec,\covmat \mid \niwnmu,\niwnnu, \niwiwtau, \niwiwalpha).
\end{align*}

\subsection{Derivation of \eqref{eq:graphpred}} 
\label{sub:derivation_of_graphpred}
\cite[p.~1296]{dawid1993} showed that the predictive distribution for strong hyper Markov laws are Markov so that

\begin{align*}
f(\testdatapnt^{\testdatasize+1} | \testdatapnt^{(\testdataindset[\class])},\class,{\bf {\mathcal  C}}^n,\graph) &=
    \int_{\precmatrestr[{\graph}]} \int_{\mathbb R^{p}}
 f(\testdatapnt^{\testdatasize+1} | \meanvec,\covmat )
 f(\meanvec,\covmat| \graph ,\testdataset[{\testdataindset[\class]}] ) d\meanvec d\covmat\\
 &=
 \int_{\precmatrestr[{\graph}]} \int_{\mathbb R^{p}}
\normsymb(\testdatapnt^{\testdatasize + 1}|\meanvec, \covmat)\giwsymb_{\graph}(\meanvec, \covmat | \niwnmustsymb, \niwnnustsymb ,\niwiwtaustsymb, \niwiwalphastsymb) d\meanvec d\covmat\\
 &=
\frac{
\prod_{\jtnode \in \jtnodeset} \int_{\precmatrestr[{\graph}]} \int_{\mathbb R^{p}}
 \normsymb(\testdatapnt^{\testdatasize + 1}|\meanvec[\jtnode], \covmat[\jtnode])\giwsymb(\meanvec[\jtnode], \covmat[\jtnode] | \niwnmustsymb[\jtnode], \niwnnustsymb ,\niwiwtaustsymb[\jtnode], \niwiwalphastsymb) d\meanvec[\jtnode] d\covmat[\jtnode]
}{
\prod_{\sep \in \sepset}
\int_{\precmatrestr[{\graph}]} \int_{\mathbb R^{p}}
 \normsymb(\testdatapnt^{\testdatasize + 1}|\meanvec[\sep], \covmat[\sep])\giwsymb(\meanvec[\sep], \covmat[\sep] | \niwnmustsymb[\sep], \niwnnustsymb ,\niwiwtaustsymb[\sep], \niwiwalphastsymb) d\meanvec[\sep] d\covmat[\sep]
}\\
 &= \tdenssymb_{\graph[]}(\testdatapnt^{\testdatasize + 1}|\tpreddfsymb, \niwnmustsymb, \tpredprecsymb),
\end{align*}
thus we can write
\begin{align*}
f(\testdatapnt^{\testdatasize+1} | \testdatapnt^{(\testdataindset[\class])},\class,\bf{\mathcal  C}^n) &=
  \sum_{\graph \in \graphsp} \int_{\precmatrestr[{\graph}]} \int_{\mathbb R^{p}}
 f(\testdatapnt^{\testdatasize+1} | \meanvec,\covmat,\graph )
 f(\meanvec,\covmat, \graph | \testdataset[{\testdataindset[\class]}] ) d\meanvec d\covmat  \\
&=
\sum_{\graph \in \graphsp} \int_{\precmatrestr[{\graph}]} \int_{\mathbb R^{p}}
 f(\testdatapnt^{\testdatasize+1} | \meanvec,\covmat,\graph )
 f(\meanvec,\covmat| \graph ,\testdataset[{\testdataindset[\class]}] ) f( \graph | \testdataset[{\testdataindset[\class]}] ) d\meanvec d\covmat\\
 &=
\sum_{\graph \in \graphsp} p( \graph | \testdataset[{\testdataindset[\class]}] ) \int_{\precmatrestr[{\graph}]} \int_{\mathbb R^{p}}
 f(\testdatapnt^{\testdatasize+1} | \meanvec,\covmat )
 f(\meanvec,\covmat| \graph ,\testdataset[{\testdataindset[\class]}] ) d\meanvec d\covmat\\
 &=
\sum_{\graph \in \graphsp} p( \graph | \testdataset[{\testdataindset[\class]}] ) f(\testdatapnt^{\testdatasize+1} | \testdatapnt^{(\testdataindset[\class])},\class,{\bf {\mathcal  C}}^n,\graph)\\
 &= \sum_{\graph \in \graphsp}p(\graph | \testdataset[{\testdataindset[\class]}])\tdenssymb_{\graph[]}(\testdatapnt^{\testdatasize + 1}|\tpreddfsymb, \niwnmustsymb, \tpredprecsymb).
\end{align*}

\subsection{Derivation of \eqref{eq:graphmarg}} 
\label{sub:derivation_of_graphmarg}
We super script a density by $^\ast$ to denote its unnormalized version. 
Since the hyper inverse Wishart is a conjugate prior for the normal model, we have
\begin{align*}
    f(\testdataset[{\testdataindset[\class]}] |\graph) &= 
    \int_{\precmatrestr[{\graph}]} \int_{\mathbb R^{p}}
\normsymb(\testdataset[{\testdataindset[\class]}]|\meanvec, \covmat)\giwsymb_{\graph}(\meanvec, \covmat | \niwnmu, \niwnnu ,\niwiwtau, \niwiwalpha) d\meanvec d\covmat\\
&=    (2\pi)^{-n\p/2}\frac{1}{\niwconstsymb_{\graph}(\niwiwalpha, \niwiwtau, \niwnnu)} \int_{\precmatrestr[{\graph}]} \int_{\mathbb R^{p}}
\normsymb^\ast(\testdataset[{\testdataindset[\class]}]|\meanvec, \covmat)\giwsymb_{\graph}^\ast(\meanvec, \covmat | \niwnmu, \niwnnu ,\niwiwtau, \niwiwalpha) d\meanvec d\covmat\\
&=    (2\pi)^{-n\p/2}\frac{1}{\niwconstsymb	_{\graph}(\niwiwalpha, \niwiwtau, \niwnnu)} \int_{\precmatrestr[{\graph}]} \int_{\mathbb R^{p}}
\giwsymb_{\graph}^\ast(\meanvec, \covmat|\niwnmustsymb, \niwnnustsymb ,\niwiwtaustsymb, \niwiwalphastsymb) d\meanvec d\covmat\\
&= (2\pi)^{-n\p/2}\frac{\niwconstsymb_{\graph}(\niwiwalphastsymb, \niwiwtaustsymb, \niwnnustsymb)}{\niwconstsymb_{\graph}(\niwiwalpha, \niwiwtau, \niwnnu)}.
\end{align*}





\bibliographystyle{abbrvnat.bst}
\bibliography{allbib}

\end{document}